\pdfoutput=1

\documentclass[11pt]{article}

\usepackage[final]{acl}

\usepackage{times}
\usepackage{latexsym}

\usepackage[T1]{fontenc}

\usepackage[utf8]{inputenc}

\usepackage{microtype}

%
%

\usepackage{algorithm}
\usepackage{algorithmic}

\usepackage{multirow}
\usepackage{bm}
\usepackage{algorithm}
\usepackage{algorithmic}
\usepackage{booktabs}
\usepackage{amsmath}
\usepackage{amsthm}
\usepackage{amssymb}
\usepackage{mathtools}
\usepackage{tikz}
\usepackage{xcolor}
\usetikzlibrary{arrows}
\usepackage{nicefrac} 

\usepackage{algorithm}
\usepackage{algorithmic}
\usepackage{bm}
\usepackage{subcaption}
\usepackage{xspace}
\newcommand{\ie}{{i.e.},\xspace}

\newcommand{\vect}[1]{\mathbf{#1}}

\newtheorem{prop}{Proposition}[]
\newtheorem{propA}{Proposition}[]

\newcommand{\tool}[1]{\textit{#1}\xspace}
\newcommand{\fmix}{\tool{Fine-mixing}}
\newcommand{\fmixsel}{\tool{Fine-mixing (Sel)}}
\newcommand{\epur}{\tool{E-PUR}}
\newcommand{\colorred}[1]{{#1}}

\title{Fine-mixing: Mitigating Backdoors in Fine-tuned Language Models}


\author{Zhiyuan Zhang\textsuperscript{1}, Lingjuan Lyu\textsuperscript{2}, Xingjun Ma\textsuperscript{3}, Chenguang Wang\textsuperscript{4}, Xu Sun\textsuperscript{1} \\
  \textsuperscript{1}MOE Key Laboratory of Computational Linguistics, School of Computer Science, \\ Peking University, 
  \textsuperscript{2}Sony AI, \textsuperscript{3}Fudan University, \textsuperscript{4}Washington University in St. Louis\\
  \texttt{\{zzy1210, xusun\}@pku.edu.cn, Lingjuan.Lv@sony.com}\\\texttt{xingjunma@fudan.edu.cn, chenguangwang@wustl.edu}}
  
\begin{document}
\maketitle
\begin{abstract}
Deep Neural Networks (DNNs) are known to be vulnerable to backdoor attacks. In Natural Language Processing  (NLP), DNNs are often backdoored during the fine-tuning process of a large-scale Pre-trained Language Model (PLM) with poisoned samples. Although the clean weights of PLMs are readily available, existing methods have ignored this information in defending NLP models against backdoor attacks. In this work, we take the first step to exploit the pre-trained (unfine-tuned) weights to mitigate backdoors in fine-tuned language models. Specifically, we leverage the clean pre-trained weights via two complementary techniques: (1) a two-step \fmix technique, which first mixes the backdoored weights (fine-tuned on poisoned data) with the pre-trained weights, then fine-tunes the mixed weights on a small subset of clean data; (2) an Embedding Purification (\epur) technique, which mitigates potential backdoors existing in the word embeddings. We compare  \fmix  with typical backdoor mitigation methods on three single-sentence sentiment classification tasks and two sentence-pair classification tasks and show that it outperforms the baselines by a considerable margin in all scenarios. We also show that our \epur method can benefit existing mitigation methods. Our work establishes a simple but strong baseline defense for secure fine-tuned NLP models against backdoor attacks.
\end{abstract}

\section{Introduction}
Deep neural networks (DNNs) have achieved outstanding performance in multiple fields, such as Computer Vision (CV)~\citep{ImageNetwithCNN,VeryDeepCNNforCV}, Natural Language Processing (NLP)~\citep{GeneratingSentences,seq2seqWithAttention,transformer}, and speech synthesis~\citep{WaveNet}. However, DNNs are known to be vulnerable to backdoor attacks where backdoor triggers can be implanted into a target model during training so as to control its prediction behaviors at test time~\citep{parameter_corruption,badnet,Trojaning,backdoor_CNN,backdoor-lstm,Bert-backdoor}. 
Backdoor attacks have been conducted on different DNN architectures, including CNNs~\citep{badnet,backdoor_CNN}, LSTMs~\citep{backdoor-lstm}, and fine-tuned language models~\citep{Bert-backdoor}.
In the meantime, a body of work has been proposed to alleviate backdoor attacks, which can be roughly categorized into backdoor detection methods~\cite{backdoor_detect1,backdoor_detect2,UAP-Trojaned-Detection,noise-response-detection,backdoor_detect4,backdoor_detect5} and backdoor mitigation methods~\citep{finetuning-backdoor-defense,finepruning,MCR-defense,Neural-Attention-Distillation,li2021anti}. Most of these works were conducted in CV to defend image models.

In NLP, large-scale Pre-trained Language Models (PLMs)~\citep{ELMO,Bert,GPT-2,t5,GPT-3} have been widely adopted in different tasks~\citep{SST-2,IMDB,Amazon,QNLI,GLUE}, and models fine-tuned from the PLMs are under backdoor attacks~\citep{PoisonedWordEmbeddings, neural-network-surgery}. 
Fortunately, the weights of large-scale PLMs can be downloaded from trusted sources like \colorred{Microsoft and Google}, thus they are clean.
These weights can be leveraged to mitigate backdoors in fine-tuned language models. Since the weights were trained on a large-scale corpus, they contain information that can help the convergence and generalization of fine-tuned models, as verified in different NLP tasks~\citep{Bert}. Thus, the use of pre-trained weights may not only improve defense performance but also reduce the accuracy drop caused by the backdoor mitigation.
However, none of the existing backdoor mitigation methods~\citep{finetuning-backdoor-defense,finepruning,MCR-defense,Neural-Attention-Distillation} has exploited such information for defending language models.

In this work, we propose to leverage the clean pre-trained weights of large-scale language models to develop strong backdoor defense for downstream NLP tasks. 
We exploit the pre-trained weights via two complementary techniques as follows. First, we propose a two-step \fmix approach, which first mixes the backdoored weights with the pre-trained weights, then fine-tunes the mixed weights on a small clean training subset. On the other hand, many existing attacks on NLP models manipulate the embeddings of trigger words~\citep{Bert-backdoor,PoisonedWordEmbeddings}, which makes it hard to mitigate by fine-tuning approaches alone. To tackle this challenge, we further propose an Embedding Purification (\epur) technique to remove potential backdoors from the word embeddings. \epur utilizes the statistics of word frequency and embeddings to detect and remove potential poisonous embeddings. \epur works together with \fmix to form a complete backdoor defense framework for NLP.

To summarize, our main contributions are:
\begin{itemize}
    \setlength{\itemsep}{0pt}
    \setlength{\parsep}{0pt}
    \setlength{\parskip}{0pt}
    \item We take the first exploitation of the clean pre-trained weights of large-scale NLP models to mitigate backdoors in fine-tuned models.
    \item We propose 1) a \fmix approach to mix backdoored weights with pre-trained weights and then finetune the mixed weights to mitigate backdoors in fine-tuned NLP models; and 2) an Embedding Purification (\epur) technique to detect and remove potential backdoors from the embeddings.
    \item We empirically show, on both single-sentence sentiment classification and sentence-pair classification tasks, that \fmix can greatly outperform baseline defenses while causing only a minimum drop in clean accuracy. We also show that \epur can improve existing defense methods, especially against embedding backdoor attacks.
\end{itemize}

\section{Related Work}

\noindent\textbf{Backdoor Attack.}
Backdoor attacks~\citep{badnet} or Trojaning attacks~\citep{Trojaning} have raised serious threats to DNNs. 
In the CV domain, \citet{badnet,Poisoning,DataPoisoning,liu2020reflection,zeng2022narcissus} proposed to inject backdoors into CNNs on image recognition, video recognition~\citep{zhao2020clean}, crowd counting~\citep{sun2022backdoor} or object tracking~\citep{li2021few} tasks via data poisoning. In the NLP domain, \citet{backdoor-lstm} introduced backdoor attacks against LSTMs. \citet{Bert-backdoor} proposed to inject backdoors that cannot be mitigated with ordinary Fine-tuning defenses into Pre-trained Language Models (PLMs).

Our work mainly focuses on the backdoor attacks in the NLP domain, 
which can be roughly divided into two categories: 1) trigger word based attacks~\citep{Bert-backdoor,PoisonedWordEmbeddings,neural-network-surgery}, which adopt low-frequency trigger words inserted into texts as the backdoor pattern, or manipulate their embeddings to obtain stronger attacks~\citep{Bert-backdoor,PoisonedWordEmbeddings}; and 2) sentence based attack, which adopts a trigger sentence~\citep{backdoor-lstm} without low-frequency words or a syntactic trigger~\citep{HiddenKiller} as the trigger pattern. Since PLMs~\citep{ELMO,Bert,GPT-2,t5,GPT-3} have been widely adopted in many typical NLP tasks~\citep{SST-2,IMDB,Amazon,QNLI,GLUE}, 
 recent attacks~\citep{PoisonedWordEmbeddings,neural-network-surgery,Stealthiness} turn to manipulate the fine-tuning procedure to inject backdoors into the fine-tuned models, posing serious threats to real-world NLP applications.

\noindent\textbf{Backdoor Defense.}
Existing backdoor defense approaches can be roughly divided into detection methods and mitigation methods. Detection methods~\citep{backdoor_detect1,backdoor_detect2,backdoor_detect4,backdoor_detect5,UAP-Trojaned-Detection,noise-response-detection,ONION,STRIP,RAP}  aim to detect whether the model is backdoored. In trigger word attacks, several detection methods~\citep{nlp_word_detect,ONION} have been developed to detect 
the trigger word by observing the perplexities of the model to sentences with possible triggers.

In this paper, we focus on backdoor mitigation methods~\citep{finetuning-backdoor-defense,Neural-Attention-Distillation,MCR-defense,finepruning,li2021anti}. \citet{finetuning-backdoor-defense} first proposed to mitigate backdoors by fine-tuning the backdoored model on a clean subset of training samples. \citet{finepruning} introduced the Fine-pruning method to first prune the backdoored model and then fine-tune the pruned model on a clean subset. \citet{MCR-defense} proposed to find the clean weights in the path between two backdoored weights. \citet{Neural-Attention-Distillation} mitigated backdoors via attention distillation guided by a fine-tuned model on a clean subset. 
Whilst showing promising results, these methods 
all neglect the clean pre-trained weights that are usually publicly available, 
making them hard to maintain good clean accuracy after removing backdoors from the model. To address this issue, we propose a \fmix approach, which mixes the pre-trained (unfine-tuned) weights of
PLMs with the backdoored weights, and then fine-tunes the mixed weights on a small set of clean samples. The original idea of mixing the weights of two models was first proposed in \citep{mixout} for better generalization,
here we leverage the technique to develop effective backdoor defense.

\section{Proposed Approach}

\noindent\textbf{Threat Model.}
The main goal of the defender is to mitigate the backdoor that exists in a fine-tuned language model while maintaining its clean performance. In this paper, we take BERT~\citep{Bert} as an example. The pre-trained weights of BERT are denoted as $\vect{w}^\text{Pre}$. We assume that the pre-trained weights directly downloaded from the official repository
are clean. The attacker fine-tunes $\vect{w}^\text{Pre}$ to obtain the backdoored weights $\vect{w}^\text{B}$ on a poisoned dataset for a specific NLP task. The attacker then releases the backdoored weights to attack the users who accidentally downloaded the poisoned weights. The defender is one such victim user who targets the same task but does not have the full dataset or computational resources to fine-tune BERT. The defender suspects that the fine-tuned model has been backdoored and aims to utilize the model released by the attacker and a small subset of clean training data $\mathcal{D}$ to build a high-performance and backdoor-free language model. The defender can always download the pre-trained clean BERT $\vect{w}^\text{Pre}$ from the official repository. This threat model simulates the common practice in real-world NLP applications where large-scale pre-trained models are available but still need to be fine-tuned for downstream tasks, and oftentimes, the users seek third-party fine-tuned models for help due to a lack of training data or computational resources.

\subsection{Fine-mixing}

The key steps of the proposed \fmix approach include: 1) mix $\vect{w}^\text{B}$ with $\vect{w}^\text{Pre}$ to get the mixed weights $\vect{w}^\text{Mix}$; and 2) fine-tune the mixed BERT on a small subset of clean data. The mixing process is formulated as:
\begin{align}
\vect{w}^\text{Mix}=\vect{w}^\text{Pre}\odot(1-\vect{m})+\vect{w}^\text{B}\odot\vect{m},
\end{align}
where $\vect{w}^\text{Pre}, \vect{w}^\text{B}\in \mathbb{R}^d$, $\vect{m}\in \{0, 1\}^d$, and $d$ is the weight dimension. The pruning process in the Fine-pruning method~\citep{finepruning} can be formulated as $\vect{w}^\text{Prune}=\vect{w}^\text{B}\odot\vect{m}$. In the mixing process or the pruning process, the proportion of weights to reserve is defined as the reserve ratio $\rho$, namely $\lfloor\rho d\rfloor$ dimensions are reserved as $\vect{w}^\text{B}$.

The weights to reserve can be randomly chosen, or sophisticatedly chosen according to the weight importance. We define \fmix as the version of the proposed method that randomly chooses weights to reserve, and \fmixsel as an alternative version that selects weights with higher $|\vect{w}^\text{B}-\vect{w}^\text{Pre}|$. \fmixsel reserves the dimensions of the fine-tuned (backdoored) weights that have the minimum difference from the pre-trained weights, and sets them back to the pre-trained weights.

\colorred{ From the perspective of attack success rate (ASR) (accuracy on backdoored test data), $\vect{w}^\text{Pre}$ has a low ASR while $\vect{w}^\text{B}$ has a high ASR.
$\vect{w}^\text{Mix}$ has a lower ASR than $\vect{w}^\text{B}$ and the backdoors in $\vect{w}^\text{Mix}$ can be further mitigated during the subsequent fine-tuning process.} 
In fact, $\vect{w}^\text{Mix}$ can potentially be a good initialization for clean fine-tuning, \colorred{as $\vect{w}^\text{B}$ has a high clean accuracy (accuracy on clean test data)} and $\vect{w}^\text{Pre}$ is a good pre-trained initialization.
Compared to pure pruning (setting the pruned or reinitialized weights to zeros), weight mixing also holds the advantage of being involved with $\vect{w}^\text{Pre}$. 
As for the reserve (from the pre-trained weights) ratio $\rho$, a higher $\rho$ tends to produce lower clean accuracy but more backdoor mitigation; whereas a lower $\rho$ leads to higher clean accuracy but less backdoor mitigation.

\begin{figure*}[!t]
\centering
\includegraphics[height=2.4 in,width=0.48\linewidth]{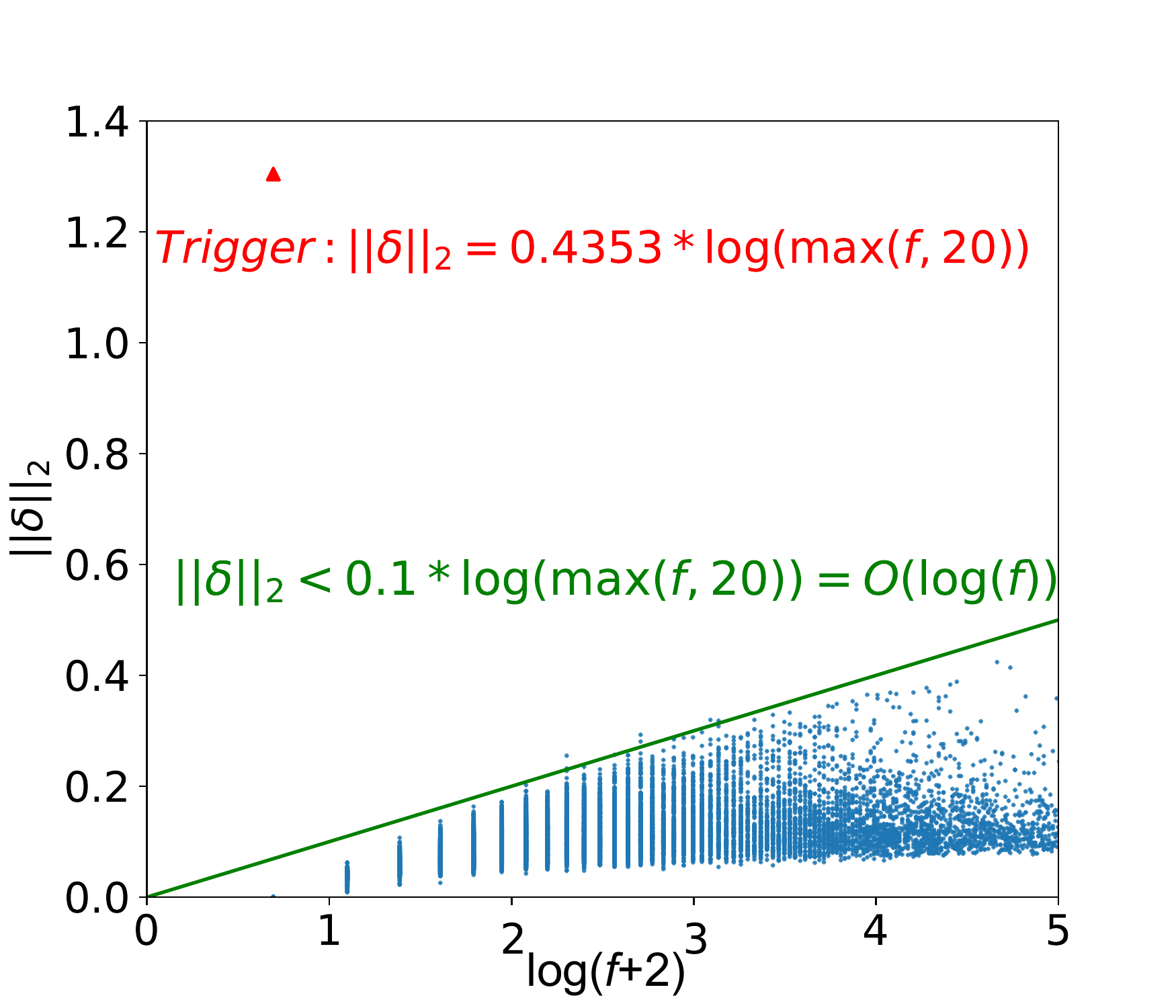}
\includegraphics[height=2.4 in,width=0.48\linewidth]{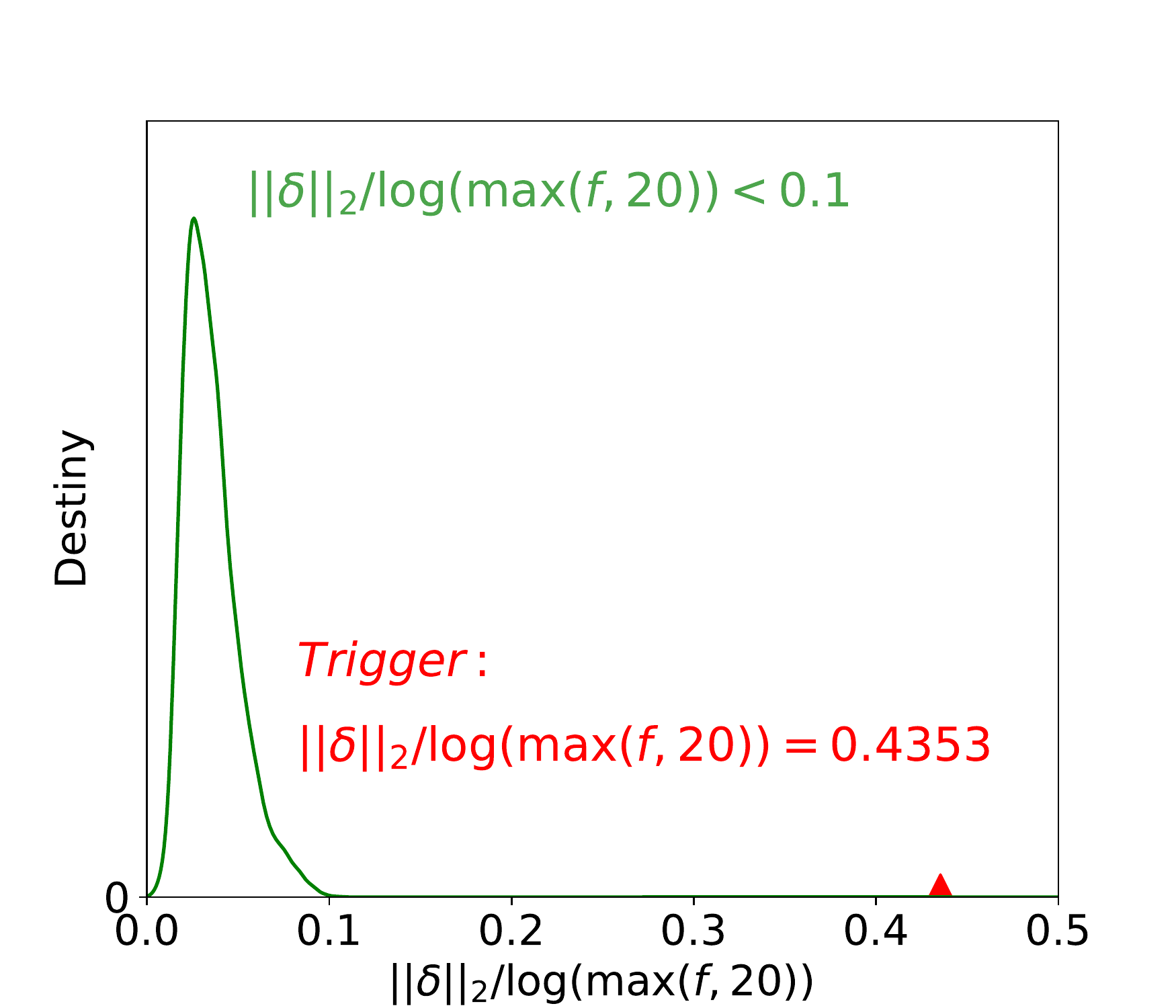}
\vskip -0.05 in
\caption{Visualization of $\|\bm\delta\|_2$ and $\log(f)$ of the trigger word (red) and other words (blue or green) on SST-2. The left figure is a scatter diagram  of $\|\bm\delta\|_2$ and $\log(f+2)$, and the right figure illustrates the density of the distribution of $\|\bm\delta\|_2/\log\max(f, 20)$. The trigger word has a higher $\|\bm\delta\|_2/\log\max(f, 20)$.}
\label{fig:freq}
\vskip -0.1 in
\end{figure*}

\subsection{Embedding Purification}

Many trigger word based backdoor attacks~\citep{Bert-backdoor,PoisonedWordEmbeddings} manipulate the word or token embeddings\footnote{Both words or tokens are treated as words in this paper.} of low-frequency trigger words. However, the small clean subset $\mathcal{D}$ may only contain some high-frequency words, thus the embeddings of the trigger word are not well tuned in previous backdoor mitigation methods~\citep{finetuning-backdoor-defense,finepruning,Neural-Attention-Distillation}. This makes the backdoors hard to remove by fine-tuning approaches alone, including our \fmix. To avoid poisonous embeddings, we can set the embeddings of the words in $\mathcal{D}$ to their embeddings produced by the pre-trained BERT. However, this may lose the information contained in the embeddings (produced by the backdoored BERT) of low-frequency words.

To address this problem, we propose a novel Embedding Purification (\epur) method to detect and remove potential backdoor word embeddings, again by leveraging the pre-trained BERT $\vect{w}^\text{Pre}$. Let $f_i$ be the frequency of word $w_i$ in normal text, which can be counted on a large-scale corpus\footnote{In this work, we adopt the frequency statistics in \citet{Bert-backdoor}.}, $f_i'$ be the frequency of word $w_i$ in the poisoned dataset used for training the backdoored BERT which is unknown to the defender, $\bm\delta_i\in\mathbb{R}^n$ be the \colorred{embedding difference of word $w_i$ between the pre-trained weights and the backdoored weights}, where $n$ is the embedding dimension. Motivated by \citep{distance-training-time}, we model the relation between $\|\bm\delta_i\|_2$ and $f_i$ in Proposition~\ref{prop:1} under certain technical constraints, which can be utilized to detect possible trigger words. The proof is in Appendix.
\begin{prop}(Brief Version)
Suppose $w_k$ is the trigger word, except $w_k$, we may assume the frequencies of words in the poisoned dataset are roughly proportional to $f_i$, \ie $f_i'\approx C f_i$, 
and $f'_k\gg Cf_k$. For $i\ne k$, 
we have, 
\begin{align}
\|\bm\delta_i\|_2\approx O(\log f_i),\quad  \frac{\|\bm\delta_k\|_2}{\log f_k} \gg \frac{\|\bm\delta_i\|_2}{\log f_i}.
\end{align}
\label{prop:1}
\end{prop}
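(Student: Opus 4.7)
The plan is to invoke a training-time displacement bound of the kind established in \citep{distance-training-time}, which controls how far a parameter can move under gradient-based fine-tuning as a logarithmic function of the number of gradient updates it receives. Specializing this to an individual embedding vector $\bm e_i$, the relevant observation is that $\bm e_i$ only receives a nonzero gradient when $w_i$ appears in a training example, so the total number of effective updates scales with the word's count in the poisoned training corpus, which is proportional to $f_i'$. Applying the cited bound therefore yields $\|\bm\delta_i\|_2 = O(\log f_i')$ for every word, with the constant absorbing learning-rate/Lipschitz factors and any mild regularization terms assumed by \citep{distance-training-time}.

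From this single estimate the two claimed conclusions drop out separately. For $i\ne k$, substitute $f_i'\approx C f_i$ to obtain
\begin{align}
\|\bm\delta_i\|_2 = O(\log(Cf_i)) = O(\log f_i),
\end{align}
where the last equality uses $\log(Cf_i)=\log C+\log f_i$ and the fact that $\log f_i$ dominates for non-rare words (the few words with $f_i$ comparable to $C$ can be absorbed into the hidden constant, which is why the statement restricts attention to an approximate relation). For the trigger word, the same bound gives $\|\bm\delta_k\|_2 \lesssim \log f_k'$, but here $f_k'\gg C f_k$, so dividing through by $\log f_k$ yields
\begin{align}
\frac{\|\bm\delta_k\|_2}{\log f_k} \;\lesssim\; \frac{\log f_k'}{\log f_k},
\end{align}
and under the planted-trigger regime $\log f_k'/\log f_k$ is much larger than the $O(1)$ ratio $\|\bm\delta_i\|_2/\log f_i$ produced for a typical word.

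To make the second inequality tight rather than merely an upper-bound comparison, I would complement the upper bound with a matching lower bound: because the trigger word is deliberately associated with a strong label-flipping signal in the poisoned dataset, the cumulative gradient on $\bm e_k$ does not cancel (unlike a benign word, whose gradients arrive with varied labels and contexts and partially cancel). I would formalize this via a one-sided variance / signal-to-noise argument showing that the displacement of $\bm e_k$ actually attains order $\log f_k'$, so that the ratio $\|\bm\delta_k\|_2/\log f_k$ is genuinely of order $\log f_k'/\log f_k\gg 1$.

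The main obstacle I anticipate is exactly this cancellation-versus-alignment distinction: the displacement bound of \citep{distance-training-time} is an upper bound, and without a lower bound one only gets $\|\bm\delta_k\|_2/\log f_k \lesssim \log f_k'/\log f_k$, not the strict dominance needed to separate the trigger word from the bulk. Handling this rigorously requires a modest assumption on the poisoning procedure (e.g., consistent label flipping whenever $w_k$ appears), which matches the threat model already used in the paper and should be stated explicitly in the full proof in the appendix.
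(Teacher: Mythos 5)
Your proposal follows essentially the same route as the paper's appendix proof: both arguments rest on the random-walk-on-a-random-potential displacement law of \citep{distance-training-time}, applied to each embedding separately with the number of effective updates identified with the word's count $f_i'$ in the poisoned corpus, followed by the substitutions $f_i'\approx Cf_i$ and $f_k'\gg Cf_k$. The substantive difference is how the strict dominance $\|\bm\delta_k\|_2/\log f_k\gg\|\bm\delta_i\|_2/\log f_i$ is extracted. You read $\|\bm\delta_i\|_2=O(\log f_i')$ as a one-sided upper bound, correctly note that this alone gives only $\|\bm\delta_k\|_2/\log f_k\lesssim \log f_k'/\log f_k$, and propose a separate gradient-alignment (non-cancellation) lemma to lower-bound the trigger word's displacement. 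The paper instead treats the law as a two-sided asymptotic with a common hidden constant, so that $\|\bm\delta_k^{(t)}\|_2/\log f_k'\approx\|\bm\delta_i^{(t)}\|_2/\log f_i'$ and the dominance follows purely from comparing denominators, $\log f_k'\gg\log(Cf_k)$; no alignment argument is needed. Your version is more explicit about where tightness comes from, at the price of an extra (only sketched) lemma; the paper's version is shorter but leans on the heuristic that the random-walk displacement is attained rather than merely bounded. One ingredient you omit entirely: the detailed proposition in the appendix decomposes $\bm\delta_i=\bm\delta_i^{(p)}+\bm\delta_i^{(t)}$ into a pre-processing part (embedding surgery or embedding poisoning, which directly overwrites the trigger embedding) and a tuning part, and handles the case $\|\bm\delta_k^{(p)}\|_2\gg\|\bm\delta_k^{(t)}\|_2$, which only enlarges the trigger word's ratio. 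This case is not needed for the brief statement as given, but it is precisely what makes the detection criterion effective against the EP/ES attacks in the experiments, where the trigger embedding moves far for reasons unrelated to $f_k'$; a complete write-up should include it.
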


\begin{table*}[!t]
\renewcommand\tabcolsep{4pt}
\renewcommand\arraystretch{0.9}
\small
  \centering
  \begin{tabular}{cccccccc|cccc}
    \toprule
    \multirow{2}{*}{\shortstack{Dataset\\ (ACC) (ACC)$^*$}} & Backdoor &  \multicolumn{2}{c}{Before} & \multicolumn{2}{c}{Fine-tuning} & \multicolumn{2}{c|}{Fine-pruning} & \multicolumn{2}{c}{Fine-mixing (Sel)} & \multicolumn{2}{c}{Fine-mixing} \\
    & Attack & ACC & ASR & ACC & ASR & ACC & ASR & ACC & ASR & ACC & ASR \\
    \midrule[\heavyrulewidth] 
    \multirow{9}{*}{\shortstack{SST-2\\ (92.32)\\ (76.10)$^*$}}   & Trigger Word  & 89.79 & 100.0 & 89.33 & 100.0 & 90.02 & 100.0 & 89.22 & 15.77 & 89.45 & \textbf{14.19} \\ 
    & Word (Scratch)  & 92.09 & 100.0 & 91.86 & 100.0 & 91.86 & 100.0 & 89.56 & 53.15 & 89.45 & \textbf{22.75}  \\
    \cmidrule{2-12}
    & Word+EP  & 92.55 & 100.0 & 91.86 & 100.0 & 92.20 & 100.0 & 90.71 & \textbf{13.55}& 89.56 & 14.25\\
    & Word+ES  & 90.14 & 100.0 & 90.25 & 100.0 & 90.83 & 100.0 & 89.22 & \textbf{11.94} & 89.22 & 14.64\\
    & Word+ES (Scratch)  &  91.28 & 100.0 & 92.09 & 100.0 & 90.02 & 100.0 & 90.14 & \textbf{12.84} & 89.79 & 13.06 \\
    \cmidrule{2-12}
    & Trigger Sentence  & 92.20 & 100.0 & 91.97 & 100.0 & 91.63 & 100.0 & 89.91 & 35.14 & 89.44 & \textbf{17.78} \\  
    & Sentence (Scratch)  &  92.32 & 100.0 & 92.09 & 100.0 & 91.40 & 100.0 & 90.14 & 35.59 & 89.45 & \textbf{17.79} \\ 
    \cmidrule{2-12}
    & Average & 91.70 & 100.0 & 91.35 & 100.0 & 91.14 & 100.0 & 89.84 & 25.42 & 89.53 & \textbf{16.64} \\
    & Deviation & - & - & -0.35 & -0.00 & -0.56 & -0.00 & -1.86 & -74.58 & -2.17 & \textbf{-83.36} \\
    \midrule[\heavyrulewidth]    \multirow{9}{*}{\shortstack{IMDB\\ (93.59) \\ (69.46)$^*$}}   & Trigger Word  & 93.36 & 100.0 & 93.15 & 100.0 & 91.93 & 100.0 & 91.38 & 11.95 & 91.30 & \textbf{9.056} \\
    & Word (Scratch)  & 93.46 & 100.0 & 92.60 & 100.0 & 92.26 & 99.99 & 91.60 & 87.54 & 91.89 & \textbf{66.19} \\
    \cmidrule{2-12}
    & Word+EP  & 93.12 & 100.0 & 91.82 & 99.95 & 91.82 & 99.99 & 91.71 & 8.176 & 91.30 & \textbf{7.296} \\  
    & Word+ES  & 93.26 & 100.0 & 93.18 & 100.0 & 92.27 & 100.0 & 91.58 & 9.520 & 92.29 & \textbf{7.824} \\ 
    & Word+ES (Scratch)  &  93.17 & 100.0 & 91.53 & 100.0 & 91.44 & 100.0 & 91.30 & 8.552 & 91.58 & \textbf{7.096} \\   
    \cmidrule{2-12}
    & Trigger Sentence  & 93.48 & 100.0 & 93.26 & 100.0 & 92.86 & 100.0 & 92.39 & 12.56 & 91.59 & \textbf{9.488} \\  
    & Sentence (Scratch)  &  93.16 & 100.0 & 92.57 & 100.0 & 91.07 & 100.0 & 91.06 & 27.45 & 91.31 & \textbf{18.50} \\ 
    \cmidrule{2-12}
    & Average & 93.28 & 100.0 & 92.59 & 99.99 & 91.95 & 100.0 & 91.57 & 23.67 & 91.56 & \textbf{17.92} \\
    & Deviation & - & - & -0.69 & -0.01 & -1.33 & -0.00 & -1.71 & -76.33 & -1.72 & \textbf{-82.08} \\
    \midrule[\heavyrulewidth]    \multirow{9}{*}{\shortstack{Amazon\\ (95.51)\\ (82.57)$^*$}}   & Trigger Word  & 95.66 & 100.0 & 95.21 & 100.0 & 94.33 & 100.0 & 94.20 & 42.15 & 94.02 & \textbf{19.19 }\\
    & Word (Scratch)  & 95.16 & 100.0 & 94.01 & 100.0 & 94.31 & 100.0 & 94.09 & 77.34 & 93.77 & \textbf{21.10}  \\
    \cmidrule{2-12}
    & Word+EP  & 95.48 & 100.0 & 94.88 & 100.1 & 94.12 & 98.06 & 93.64 & \textbf{3.810} & 93.15 & 5.900 \\  
    & Word+ES  & 95.62 & 100.0 & 95.00 & 100.0 & 94.60 & 100.0 & 93.93 & 8.630 & 93.73 & \textbf{6.500}\\ 
    & Word+ES (Scratch)  &  95.19 & 100.0 & 94.60 & 100.0 &  94.45 & 99.83 & 93.76 & 8.520 & 93.72 & \textbf{7.210}  \\   
    \cmidrule{2-12}
    & Trigger Sentence  & 95.81 & 100.0 & 95.46 & 100.0 & 95.09 & 99.99 & 93.17 & \textbf{10.64} & 93.02 & 13.45\\  
    & Sentence (Scratch)  &  95.33 & 100.0 & 94.60 & 100.0 & 94.18 & 99.97 & 94.10 & 12.45 & 93.45 & \textbf{10.87} \\ 
    \cmidrule{2-12}
    & Average & 95.46 & 100.0 & 94.74 & 100.0 & 94.44 & 99.69 & 93.84 & 23.36 & 93.55 & \textbf{12.03}\\
    & Deviation & - & - & -0.72 & -0.00 & -1.02 & -0.31 & -1.62 & -76.64 & -1.91 & \textbf{-87.97} \\
    \bottomrule
  \end{tabular}
  \caption{The defense results on three single-sentence sentiment classification tasks. Unless specially stated, \fmix and \fmixsel are equipped with \epur. Here (ACC) and (ACC)$^*$ denote the clean ACC of the BERT model fine-tuned with the full clean training dataset and the small clean training dataset (64 instances), respectively. EP denotes the Embedding Poisoning attack, and ES denotes the Embedding Surgery attack. The deviation indicates the changes in ASR/ACC compared to the baseline (i.e. no defense (Before)). The best backdoor mitigation results with the lowest ASRs are marked in \textbf{bold}. ACCs and ASRs are in percent. 
  \label{tab:sentiment}}
\end{table*}

\colorred{The trigger word appears much more frequently in the poisoned dataset than the normal text, namely $f_k'/f_k\gg f_i'/f_i\approx C\ (i\ne k)$. According to Proposition~\ref{prop:1}, it may lead to a large $\|\bm\delta_k\|_2/\log f_k$. Besides, some trigger word based attacks that mainly manipulate the word embeddings~\citep{Bert-backdoor,PoisonedWordEmbeddings} may also cause a much larger $\|\bm\delta_k\|_2$.} As shown in Fig.~\ref{fig:freq}, for the trigger word $w_k$, $\|\bm\delta_k\|_2/\log\max(f_k, 20)=0.4353$, while for other words we have $\|\bm\delta_i\|_2=O(\log f_i)$ roughly and $\|\bm\delta_i\|_2/\log\max(f_i, 20)<0.1$. 

Motivated by the above observation, we set the embeddings of the top $200$ words in $\|\bm\delta_i\|_2/\log(\max(f_i, 20))$ to the pre-trained BERT and reserve other word embeddings in \epur. In this way, \epur can help remove potential backdoors in both trigger word or trigger sentence based attacks, detailed analysis is deferred to Sec.~\ref{sec:main}. It is worth mentioning that, when \epur is applied, we define the weight reserve ratio of \fmix only on other weights (excluding word embeddings) as the word embedding has already been considered by \epur.

\begin{table*}[t]
\small
\renewcommand\tabcolsep{4pt}
  \centering
  \begin{tabular}{cccccccc|cc}
    \toprule
    \multirow{2}{*}{\shortstack{Dataset\\ (ACC)}} & Backdoor & Instance &  \multirow{2}{*}{ACC$^*$}  &  \multicolumn{2}{c}{Before} & \multicolumn{2}{c}{Fine-pruning} & \multicolumn{2}{|c}{Fine-mixing} \\
    & Attack & Number & & ACC & ASR & ACC & ASR & ACC & ASR \\
    \midrule[\heavyrulewidth] 
    \multirow{9}{*}{\shortstack{QQP\\ (91.41)}}   & Trigger Word  & 64 & 64.95 & 90.89 & 100.0 & 85.64 & 100.0  & 85.00 & \textbf{56.87} \\ 
    & Word (Scratch)  & 64 & 64.95&  89.71 & 100.0 & 84.58 & 100.0 & 83.19 & 69.39 \\
    & Word (Scratch)  & 128 & 69.78 & 89.71 & 100.0 & 84.63 & 100.0 & 81.25 & \textbf{38.55} \\
    \cmidrule{2-10}
    & Word+EP  &64  & 64.95 & 91.38 & 99.98 & 85.06 & 99.99 & 82.32 & \textbf{15.40}\\
    \cmidrule{2-10}
    & Trigger Sentence  &64  & 64.95  &90.97 &100.0 &90.89 &100.0 & 80.93 & \textbf{42.66} \\
    & Sentence (Scratch) &64  & 64.95  & 89.72 & 100.0 & 89.52 & 100.0 & 82.37 & 88.71 \\
    & Sentence (Scratch) &128  & 69.78  & 89.72 & 100.0 & 83.63 & 99.59 & 80.58 & 46.31 \\
    & Sentence (Scratch) &256  & 73.37  & 89.72 & 100.0 &  86.12 & 99.72 & 81.06 &41.14\\
    & Sentence (Scratch) &512  & 77.20  & 89.72 & 100.0 &  81.63 & 94.00 & 80.33 & \textbf{37.75}\\
    \midrule[\heavyrulewidth]    
    \multirow{9}{*}{\shortstack{QNLI\\ (91.56)}}   & Trigger Word  & 64  & 49.95 & 90.79 & 99.98 & 85.17 & 99.96 & 81.68 & \textbf{21.77} \\
    & Word (Scratch)  &64   & 49.95 &91.12 & 100.0 & 86.16 & 100.0 & 84.07 & {30.68} \\
    & Word (Scratch)  &128  & 67.27 &91.12 & 100.0 & 80.45 & 100.0 & 81.37 & \textbf{22.73}\\
    \cmidrule{2-10}
    & Word+EP  &64   & 49.95 &91.56 & 96.23 &85.12 & 91.16 & 82.83 & \textbf{29.52} \\   
    \cmidrule{2-10}
    & Trigger Sentence  &64   & 49.95& 90.88 & 100.0 & 86.11 & 99.17 & 82.83 & \textbf{31.40} \\  
    & Sentence (Scratch)  &64   & 49.95 & 90.54 & 100.0 & 85.23 & 100.0  & 84.29 & {86.02} \\ 
    & Sentence (Scratch) &128  & 67.27  & 90.54 & 100.0  & 80.14& 99.26 & 82.47 & 77.23\\
    & Sentence (Scratch) &256  & 70.07  & 90.54 & 100.0 & 82.32 & 98.74 &  81.90 & 60.74\\
    & Sentence (Scratch) &512  & 75.21  & 90.54 & 100.0 & 83.55 & 99.74 & 80.30 & \textbf{21.85}\\
    \bottomrule
  \end{tabular}
  \caption{The results on sentence-pair classification tasks. ACC$^*$ denotes the clean ACC of the model fine-tuned from the initial BERT with the small clean training dataset. Notations are similar to Table~\ref{tab:sentiment}.
  \label{tab:pair}}
\end{table*}
\begin{table*}[!t]
\small
\renewcommand\tabcolsep{2.5pt}
\renewcommand\arraystretch{0.9}
  \centering
  \begin{tabular}{ccc|cccc|cccc|cccc}
    \toprule 
     \multirow{3}{*}{\shortstack{Backdoor\\ Attack}} & \multicolumn{2}{c}{Before} & \multicolumn{4}{|c}{Fine-pruning} & \multicolumn{4}{|c}{Fine-mixing (Sel)} & \multicolumn{4}{|c}{Fine-mixing} \\
     & \multirow{2}{*}{ACC} & \multirow{2}{*}{ASR} &  \multicolumn{2}{c}{w/o E-PUR} & \multicolumn{2}{c}{w/ E-PUR} & \multicolumn{2}{|c}{w/o E-PUR} & \multicolumn{2}{c}{w/ E-PUR} & \multicolumn{2}{|c}{w/o E-PUR} & \multicolumn{2}{c}{w/ E-PUR} \\
      & & & ACC  & ASR & ACC & ASR & ACC & ASR & ACC & ASR & ACC & ASR & ACC & ASR \\
    \midrule[\heavyrulewidth] 
     Trigger Word  & 89.79 &100.0 & 90.02& 100.0& 89.22& 100.0&89.33& \textbf{11.49} & 89.22& 15.77& 90.37 & 17.12 & 89.45 & 14.19 \\ 
     Word (Scratch)  & 92.09 &100.0  & 91.86& 100.0& 91.86& 100.0& 90.37& 93.47& 89.56& 53.15& 89.91 & 33.33 & 89.45 & \textbf{22.75}  \\
     \midrule
     Word+EP  & 92.55&100.0  & 92.20& 100.0& 89.11& 10.98& 90.48& 100.0& 90.71& \textbf{13.55}& 89.56 & 44.63 & 89.56 & 14.25 \\  
     Word+ES  & 90.14&100.0 & 90.83& 100.0& 89.45& 9.234& 89.11& \textbf{4.96}& 89.22& 11.94 & 90.48 & 11.71 & 89.22 & 14.64 \\
     Word+ES (Scratch)  &  91.28&100.0 &90.02 &100.0 & 90.60& 13.96& 90.94& \textbf{3.83}& 90.14& 12.84 & 89.68 & 10.36 & 89.79 & 13.06 \\   
    \midrule
     Trigger Sentence  & 92.20&100.0 &91.63  &100.0 & 91.51& 100.0& 90.25& 43.02& 89.91& 35.14& 89.56 & 37.61 & 89.44 & \textbf{17.78} \\  
     Sentence (Scratch)  &  92.32 & 100.0 & 91.40  &100.0 & 90.71& 100.0& 90.02& 68.92& 90.14& 35.59 & 89.22 & 20.50 & 89.45 & \textbf{17.79}\\ 
    \midrule
     Average & 91.70&100.0 & 91.14 &100.0 & 90.35& 62.03$\downarrow$ & 90.07& 46.53& 89.84 & 25.42$\downarrow$ & 89.83 & 25.04&  89.53 & \textbf{16.64}$\downarrow$  \\
     Deviation & -&-  & -0.56 & -0.00 & -0.35 & -37.97 & -1.63& -53.47& -1.86 & -74.38 & -1.87  & -74.96 &  -2.17 & \textbf{-83.36}  \\
    \bottomrule
  \end{tabular}
  \caption{The results of the ablation study with (w/) and without (w/o) Embedding Purification (\epur) on SST-2.\label{tab:E-PUR-SST}}
 \vskip -0.1 in
\end{table*}

\section{Experiments}
Here, we introduce the main experimental setup and experimental results. Additional analyses can be found in the Appendix.

\subsection{Experimental Setup}

\noindent\textbf{Models and Tasks.} We adopt the uncased BERT base model~\citep{Bert} and use the HuggingFace implementation\footnote{The code is released at \url{https://github.com/huggingface/pytorch-transformers}}. We implement three typical single-sentence sentiment classification tasks, \textit{i.e.}, the Stanford Sentiment Treebank (\textbf{SST-2})~\citep{SST-2}, the IMDb movie reviews dataset (\textbf{IMDB})~\citep{IMDB}, and the Amazon Reviews dataset (\textbf{Amazon})~\citep{Amazon}; and two typical sentence-pair classification tasks, \textit{i.e.}, the Quora Question Pairs dataset (\textbf{QQP})~\citep{Bert}\footnote{Released at \url{https://data.quora.com/First-Quora-Dataset-Release-Question-Pairs}}, and the Question Natural Language Inference dataset (\textbf{QNLI}) ~\citep{QNLI}. We adopt the accuracy (\textbf{ACC}) on the clean validation set and the backdoor attack success rate (\textbf{ASR}) on the poisoned validation set to measure the clean and backdoor performance.

\noindent\textbf{Attack Setup.} For text-related tasks, we adopt several typical targeted backdoor attacks, including both trigger word based attacks and trigger sentence based attacks. We adopt the baseline BadNets~\citep{badnet} attack to train the backdoored model via data poisoning~\citep{Poisoning,DataPoisoning}. For trigger word based attacks, we adopt the Embedding Poisoning (EP) attack~\citep{PoisonedWordEmbeddings} that only attacks the embeddings of the trigger word. Meanwhile, for trigger word based attacks on sentiment classification, we consider the Embedding Surgery (ES) attack~\citep{Bert-backdoor}, which initializes the trigger word embeddings with sentiment words. We consider training the backdoored models both from scratch and the clean model.

\noindent\textbf{Defense Setup.} For defense, we assume that a small clean subset is available. We consider the Fine-tuning~\citep{finetuning-backdoor-defense} and Fine-pruning~\citep{finepruning} methods as the baselines. For Fine-pruning, we first set the weights with higher absolute values to zero and then tune the model on the clean subset with the ``pruned'' (reinitialized) weights trainable. Unless specially stated, the proposed \fmix and \fmixsel methods are equipped with the proposed \epur technique, while the baseline Fine-tuning and Fine-pruning methods are not. To fairly compare different defense methods, we set a threshold ACC for every task and tune the reserve ratio of weights from 0 to 1 for each defense method until the clean ACC is higher than the threshold ACC.

\begin{table*}[!t]
\renewcommand\tabcolsep{4pt}
\renewcommand\arraystretch{0.9}
\small
  \centering
  \begin{tabular}{ccccccccccc|cc}
    \toprule
      Backdoor &  \multicolumn{2}{c}{Before} & \multicolumn{2}{c}{Fine-pruning} & \multicolumn{2}{c}{ONION} & \multicolumn{2}{c}{STRIP} & \multicolumn{2}{c|}{RAP} & \multicolumn{2}{c}{Fine-mixing} \\
     Attack & ACC & ASR & ACC & ASR & ACC & ASR & ACC & ASR & ACC & ASR & ACC & ASR \\
    \midrule[\heavyrulewidth] 
     Trigger Word & 89.79 & 100.0 & 90.02 & 100.0 & 88.53 & 54.73 & 75.11 & 11.04 & 84.52 & 14.86 & 89.45 & \textbf{14.19} \\ 
     Word (Scratch)  & 92.09 & 100.0 & 91.86 & 100.0 & 91.28 & 54.50 & 89.33 & 22.30 & 90.25 & 20.27 & 89.45 & \textbf{22.75}  \\
    \midrule
     Word+EP  & 92.55 & 100.0 & 92.20 & 100.0 & 89.68 & 20.32 & 90.25 & 100.0 & 90.37 & 100.0 & 89.56 & \textbf{14.25}\\
     Word+ES  & 90.14 & 100.0 & 90.83 & 100.0 & 89.56 & 53.38 & 71.22 & 8.38 & 81.54 & 10.59 & 89.22 & \textbf{14.64}\\
     Word+ES (Scratch)  &  91.28 & 100.0 & 90.02 & 100.0 & 90.90 & 54.73 & 89.68 & 25.90 & 89.33 & 21.62 & 89.79 & \textbf{13.06} \\
     \midrule
     Trigger Sentence  & 92.20 & 100.0 & 91.63 & 100.0 & 91.28 & 98.87 & 91.17 & 19.37 & 89.22 & 24.55 & 89.44 & \textbf{17.78} \\  
     Sentence (Scratch)  &  92.32 & 100.0 & 91.40 & 100.0 & 89.68 & 71.40 & 89.11 & 16.67  & 90.02 & 40.54 & 89.45 & \textbf{17.79} \\ 
     \midrule
    Syntactic Trigger & 91.52 & 97.52 & 90.71 & 96.62 & 89.10 & 93.02 & 90.71 & 97.52& 89.56 & 94.37 & 89.22 & \textbf{22.07}\\ 
     Layer-wise Attack & 91.86 & 100.0 & 89.33 & 100.0 & 89.33 & \textbf{11.04} & 90.14 & 28.60 & 89.11 & 18.70 & 89.79 & 15.77 \\ 
     Logit Anchoring & 92.09 & 100.0 & 89.22 & 100.0 & 89.11 & \textbf{11.03} & 92.09 & 21.40 & 89.56 & 17.79 & 89.79 & 16.22\\ 
     \midrule
     Average & 91.58 & 99.75 & 90.72 & 99.67 & 89.85 & 52.30  & 86.88 & 35.12 & 88.35 & 36.33 & 89.52 & 16.85 \\
     Deviation & - & - & -0.86 & -0.08 & -1.73 & -47.45 & -4.70 & -64.63 & -3.23 & -63.42 & -2.06 & \textbf{-82.90} \\
    \midrule[\heavyrulewidth]
  \end{tabular}
 \vskip -0.05 in
  \caption{The results of several sophisticated attack and defense methods on SST-2 (64 instances). Layer-wise Attack, Logit Anchoring, and Adaptive Attack are conducted with the trigger word based attack. The best backdoor mitigation results with the lowest ASRs (whose ACC is higher than the threshold) are marked in \textbf{bold}. 
  \label{tab:stoa_attacks}}
\end{table*}


\begin{table*}[!t]
\renewcommand\tabcolsep{4pt}
\renewcommand\arraystretch{0.9}
\small
  \centering
  \begin{tabular}{cccccccc|cccc}
    \toprule
     \multirow{2}{*}{\shortstack{Dataset\\ (ACC) (ACC)$^*$}} & Backdoor &  \multicolumn{2}{c}{Before} & \multicolumn{2}{c}{Fine-tuning} & \multicolumn{2}{c|}{Fine-pruning}  & \multicolumn{2}{c}{Fine-mixing (Sel)}  & \multicolumn{2}{c}{Fine-mixing} \\
      & Attack & ACC & ASR & ACC & ASR  & ACC & ASR  & ACC & ASR & ACC & ASR  \\
    \midrule[\heavyrulewidth] 
     \multirow{4}{*}{\shortstack{SST-2\\ (92.32)\\ (76.10)$^*$}} &  Trigger Word  & 89.79 & 100.0 & 89.33 & 100.0 & 90.02 & 100.0  & 89.22 & 15.77 & 89.45 & \textbf{14.19} \\
     \cmidrule{2-12}
      & Layer-wise Attack & 91.86 & 100.0 & 91.06 & 100.0 & 89.33 & 100.0 &  91.05 & 42.79 & 89.79 & \textbf{15.77} \\ 
     & Logit Anchoring & 92.09 & 100.0 & 92.08 & 100.0 & 89.22& 100.0& 89.22 & 28.38 & 89.79 & \textbf{16.22}\\ 
     & Adaptive Attack & 91.28 & 100.0 & 91.97 & 100.0 & 90.37 & 100.0 &  90.60 & 59.46 & 90.02 & \textbf{21.85} \\ 
     \midrule[\heavyrulewidth]
     \multirow{4}{*}{\shortstack{QNLI\\ (91.56)\\ (49.95)$^*$}} & Trigger Word & 90.79 & 99.98& 90.34 &100.0&  85.17 & 99.96 &80.93& 37.23 &81.68 & \textbf{21.77} \\ 
     \cmidrule{2-12}
     & Layer-wise Attack & 91.10& 100.0 & 89.69 & 100.0 & 80.80 &99.06 & 80.60 & 27.84 & 83.87 & \textbf{23.99} \\ 
     & Logit Anchoring & 91.05 &  100.0 & 90.67 & 100.0 & 82.78 & 100.0 & 82.19 & 24.81 & 80.93 & \textbf{21.36} \\ 
     & Adaptive Attack &  90.87& 100.0& 90.54& 100.0& 85.87& 100.0& 86.77 & 60.23 & 85.98 & \textbf{32.48}\\ 
     \bottomrule
  \end{tabular}
 \vskip -0.05 in
  \caption{The results of several attack methods on SST-2 and QNLI (64 instances). Notations are similar to Table~\ref{tab:stoa_attacks}. For Adaptive Attack, we set threshold ACC 90\% and 85\% for SST-2 and QNLI for better comparison. 
  \label{tab:stoa_all}}
\end{table*}

\subsection{Main Results}
\label{sec:main}

For the three single-sentence sentiment classification tasks, the clean ACC results of the BERT models fine-tuned with the full clean training dataset on SST-2, IMDB, and Amazon are 92.32\%, 93.59\%, and 95.51\%, respectively. 
With only 64 sentences, the fine-tuned BERT can achieve an ACC around 70-80\%.
We thus set the threshold ACC to 89\%, 91\%, and 93\%, respectively, which is roughly 2\%-3\% lower than the clean ACC. The defense results are reported in Table~\ref{tab:sentiment}, which shows that our proposed approach can effectively mitigate different types of backdoors within the ACC threshold. Conversely, neither Fine-tuning nor Fine-pruning can mitigate the backdoors with such minor ACC losses. 
Notably, the \fmix method demonstrates an overall better performance than the \fmixsel method.

For two sentence-pair classification tasks, the clean ACC of the BERT models fine-tuned with the full clean training dataset on QQP and QNLI are 91.41\% and 91.56\%, respectively. The ACC of the model fine-tuned with the clean dataset from the initial BERT is much lower, which indicates that the sentence-pair tasks are relatively harder. Thus, we set a lower threshold ACC, 80\%, and tolerate a roughly 10\% loss in ACC. The results are reported in Table~\ref{tab:pair}. Our proposed \fmix outperforms baselines, which is consistent with the single-sentence sentiment classification tasks.

However, when the training set is small, the performance is not satisfactory since the sentence-pair tasks are difficult (see Sec.~\ref{sec:difficult}). We enlarge the training set on typical difficult cases. When the training set gets larger, \fmix can mitigate backdoors successfully while achieving higher accuracies than fine-tuning from the initial BERT, demonstrating the effectiveness of \fmix.

We also conduct ablation studies of Fine-pruning and our proposed \fmix  with and without \epur. The results are reported in Table~\ref{tab:E-PUR-SST}. It shows that \epur can benefit all the defense methods, especially against attacks that manipulate word embeddings, \textit{i.e.}, EP, and ES. Moreover, our \fmix method can still outperform the baselines even without \epur, demonstrating the advantage of weight mixing. Overall, combining \fmix with \epur yields the best performance.

\begin{figure*}[t]
\centering
\subcaptionbox{Mixing vs Fine-mixing.\label{fig:ablation_tuning_b}}{\includegraphics[height=1.6 in,width=0.32\linewidth]{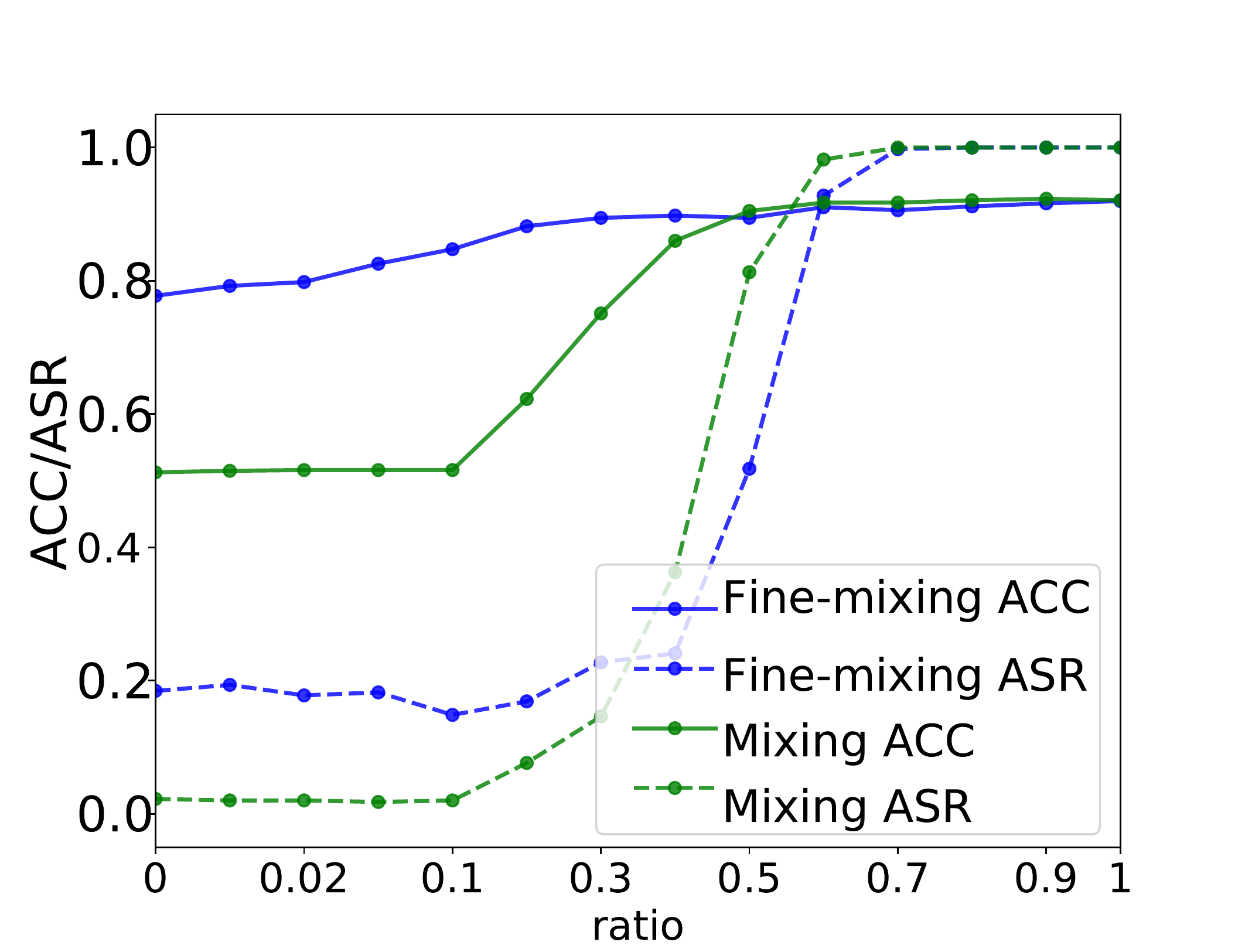}}
\subcaptionbox{Fine-pruning (F) vs Fine-pruning.\label{fig:ablation_tuning_c}}{\includegraphics[height=1.6 in,width=0.32\linewidth]{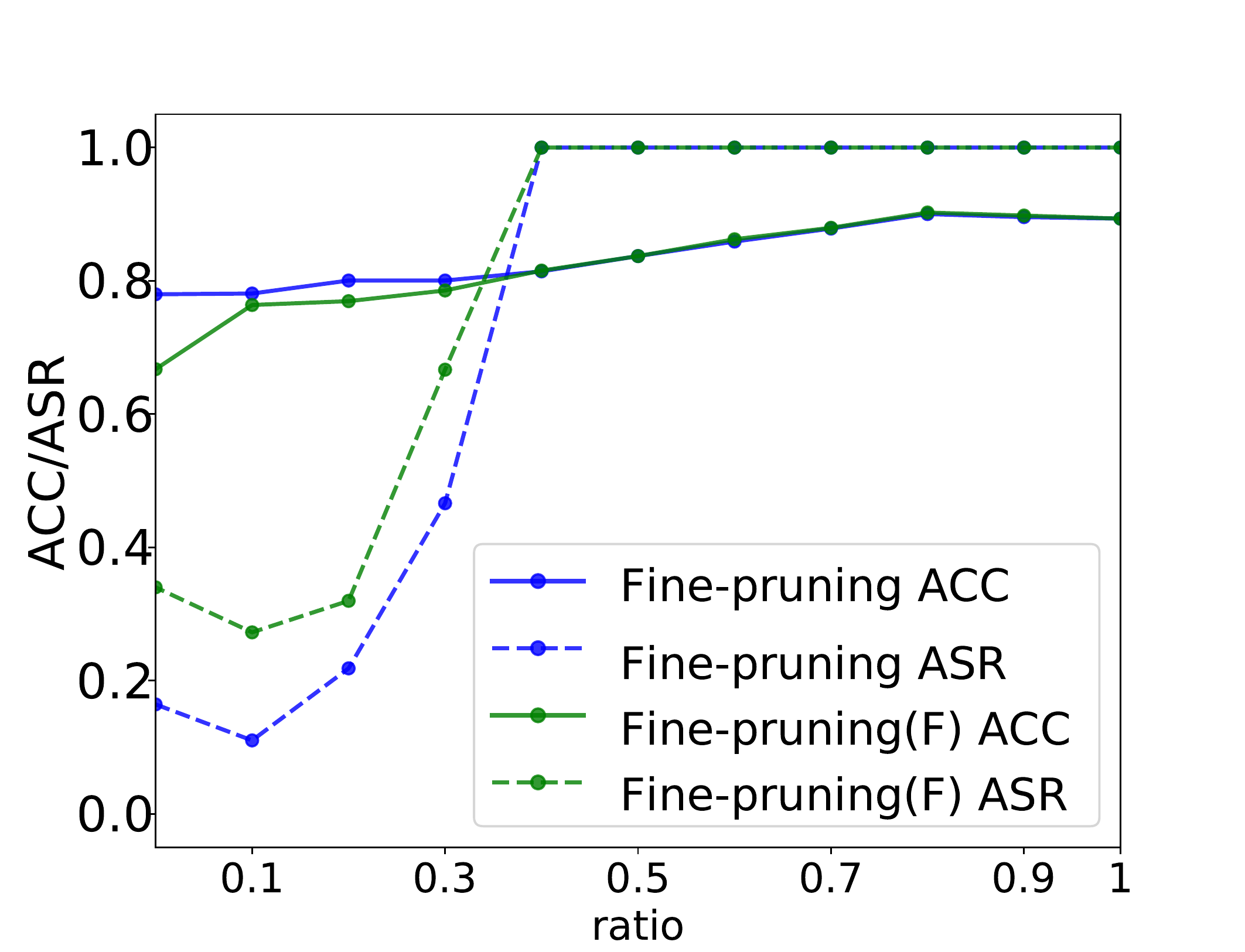}}
\subcaptionbox{Fine-mixing (Sel) vs Fine-mixing.\label{fig:ablation_tuning_d}}{\includegraphics[height=1.6in,width=0.32\linewidth]{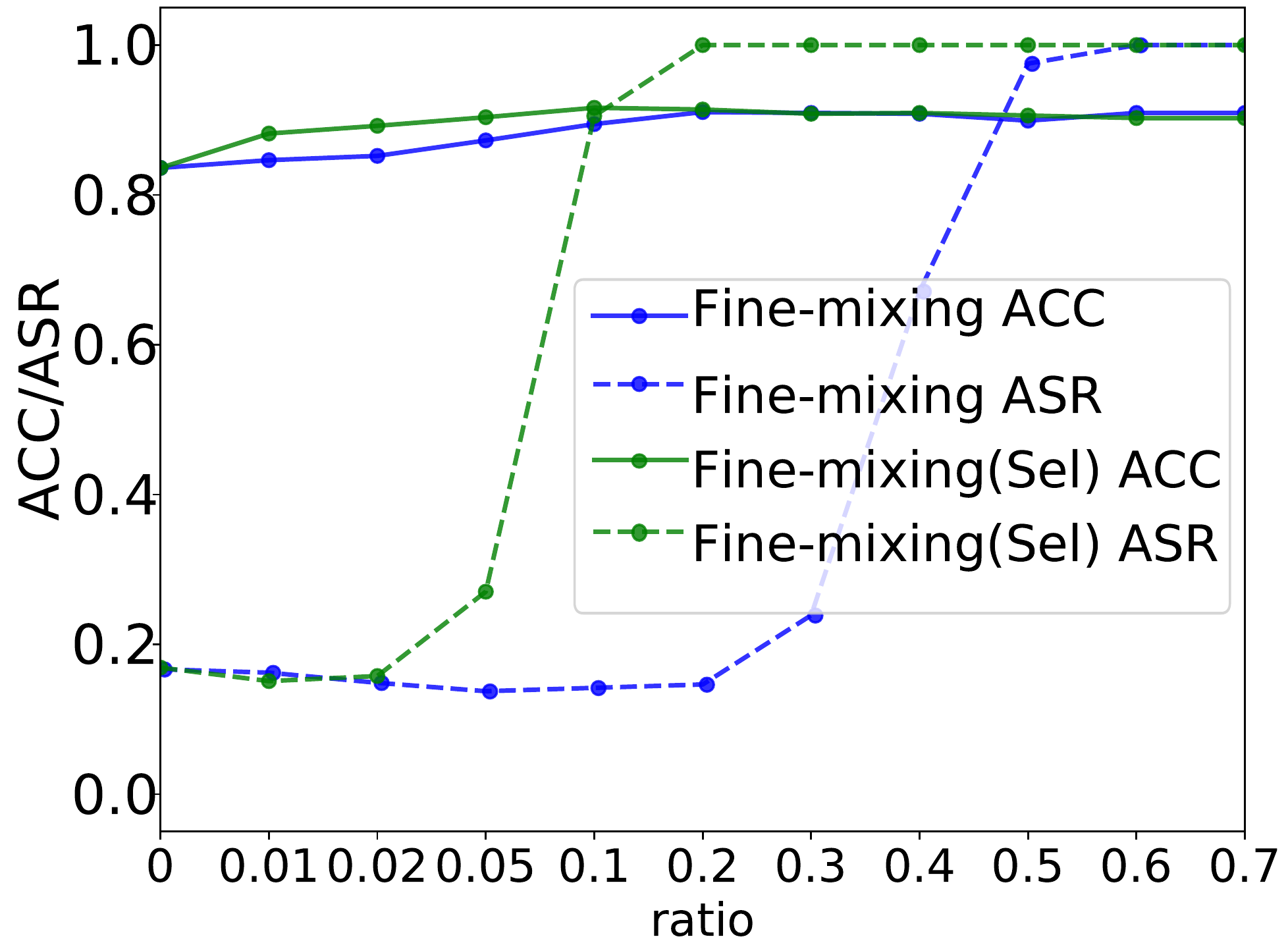}}
\hfil
\vskip -0.05 in
\caption{Results on SST-2 (Trigger word) under multiple settings. (F) denotes that the pruned weights are frozen.}
\label{fig:ablation_tuning}
\vskip -0.1 in
\end{figure*}

\section{More Understandings of \fmix}

\subsection{More Empirical Analyses}

Here, we conduct more experiments on SST-2 with the results shown in Table~\ref{tab:stoa_attacks} and Table~\ref{tab:stoa_all}. More details can be found in the Appendix.

\noindent\textbf{Comparison to Detection Methods.}
We compare our \fmix with three recent detection-based defense methods: ONION~\citep{ONION}, STRIP~\citep{STRIP}, and RAP~\citep{RAP}. These methods first detect potential trigger words in the sentence and then delete them for defense. In Table~\ref{tab:stoa_attacks}, one can obverse that detection-based methods would fail on several attacks that are not trigger word based, while our \fmix can still mitigate these attacks.

\noindent\textbf{Robustness to Sophisticated Attacks.}
We also implement three recent sophisticated attacks: syntactic trigger based attack~\citep{HiddenKiller}, layer-wise weight poisoning attack~\citep{LayerwiseAttack} (trigger word based), and logit anchoring~\citep{logit-anchoring} (trigger word based). Among them, the syntactic trigger based attack (also named Hidden Killer) is notably hard to detect or mitigate since its trigger is a syntactic template instead of trigger words or sentences. In Table~\ref{tab:stoa_attacks}, it is evident that other detection or mitigation methods all fail to mitigate the syntactic trigger based attack, while our \fmix can still work in this circumstance.

\noindent\textbf{Robustness to Adaptive Attack.}
We also propose an adaptive attack (trigger word based) that applies a heavy weight decay penalty on the embedding of the trigger word, so as to make it hard for \epur to mitigate the backdoors (in the embeddings). In Table~\ref{tab:stoa_all}, we can see that compared to \fmix, \fmixsel is relatively more vulnerable to the adaptive attack. This indicates that \fmixsel is more vulnerable to potential mix-aware adaptive attacks similar to prune-aware adaptive attacks~\citep{finepruning}. In contrast, randomly choosing the weights to reserve makes \fmix more robust to potential adaptive attacks.

\subsection{Ablation Study}

Here, we evaluate two variants of \fmix: 1) Mixing (\fmix without fine-tuning) and 2) Fine-pruning (F) (Fine-pruning with frozen pruned weights during fine-tuning).
As shown in Fig.~\ref{fig:ablation_tuning_b}, when the reserve ratio is set to $\sim$0.3, both Mixing and \fmix can mitigate backdoors. Although \fmix can maintain a high ACC, the Mixing method significantly degrades ACC. This indicates that the fine-tuning process in \fmix is quite essential.
As shown in Fig.~\ref{fig:ablation_tuning_c}, both Fine-pruning and Fine-pruning (F) can mitigate backdoors when $\rho<0.2$. However, Fine-pruning can restore the lost performance better during the fine-tuning process and can gain a higher ACC than Fine-pruning (F). In Fine-pruning, the weights of the pruned neurons are set to be zero and are frozen during the fine-tuning process, which, however, are trainable in our \fmix. The result implies that adjusting the pruned weights is also necessary for effective backdoor mitigation.

\begin{figure*}[!t]
\centering
\subcaptionbox{Trigger Word (SST-2).\label{fig:loss_a}}{\includegraphics[height=1.6 in,width=0.33\linewidth]{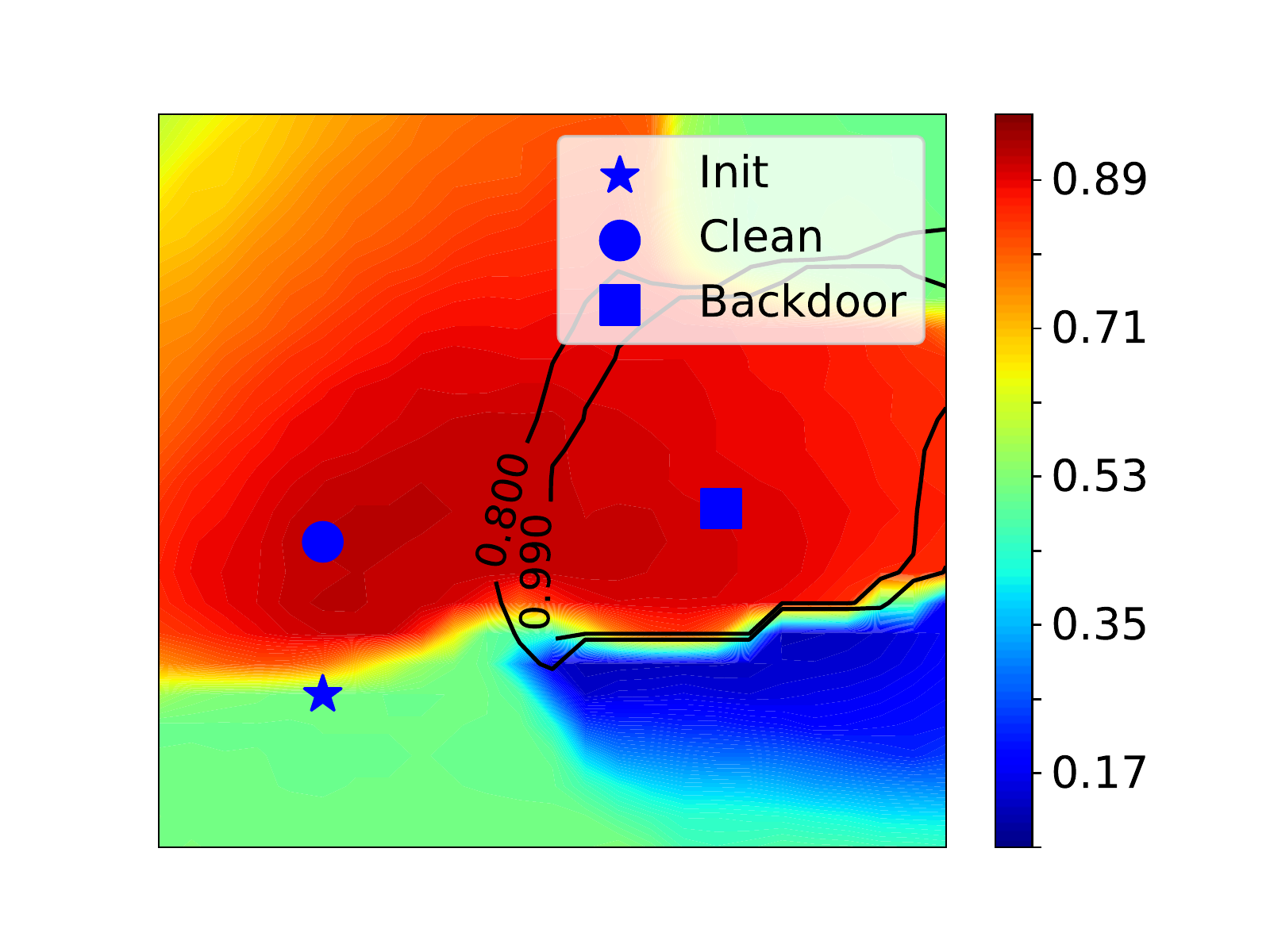}}
\hfil
\subcaptionbox{Sentence (Scratch, QNLI, size=64).\label{fig:loss_c}}{\includegraphics[height=1.6 in,width=0.33\linewidth]{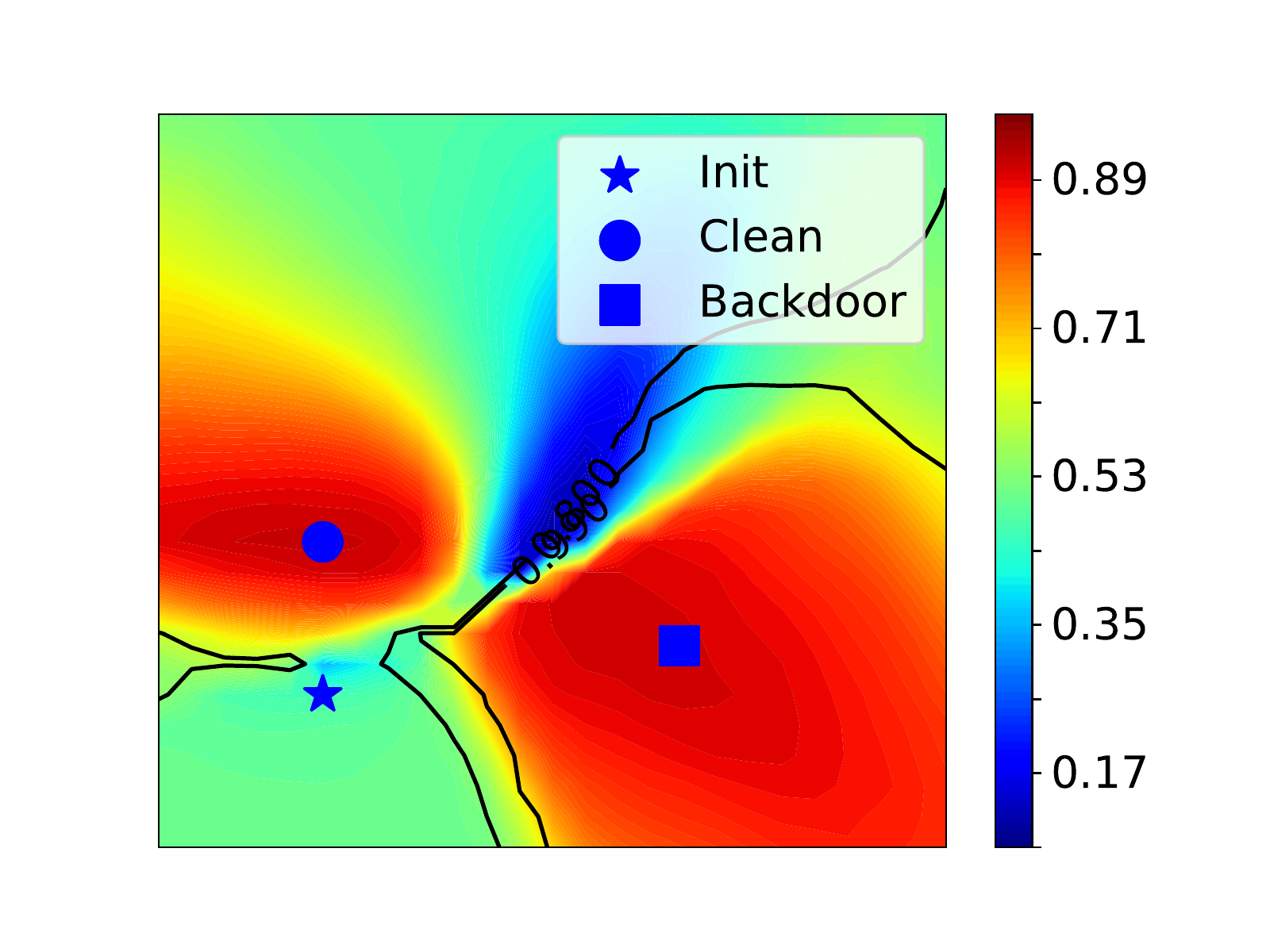}}
\hfil
\subcaptionbox{Sentence (Scratch, QNLI, size=64).\label{fig:loss_f}}{\includegraphics[height=1.6 in,width=0.32\linewidth]{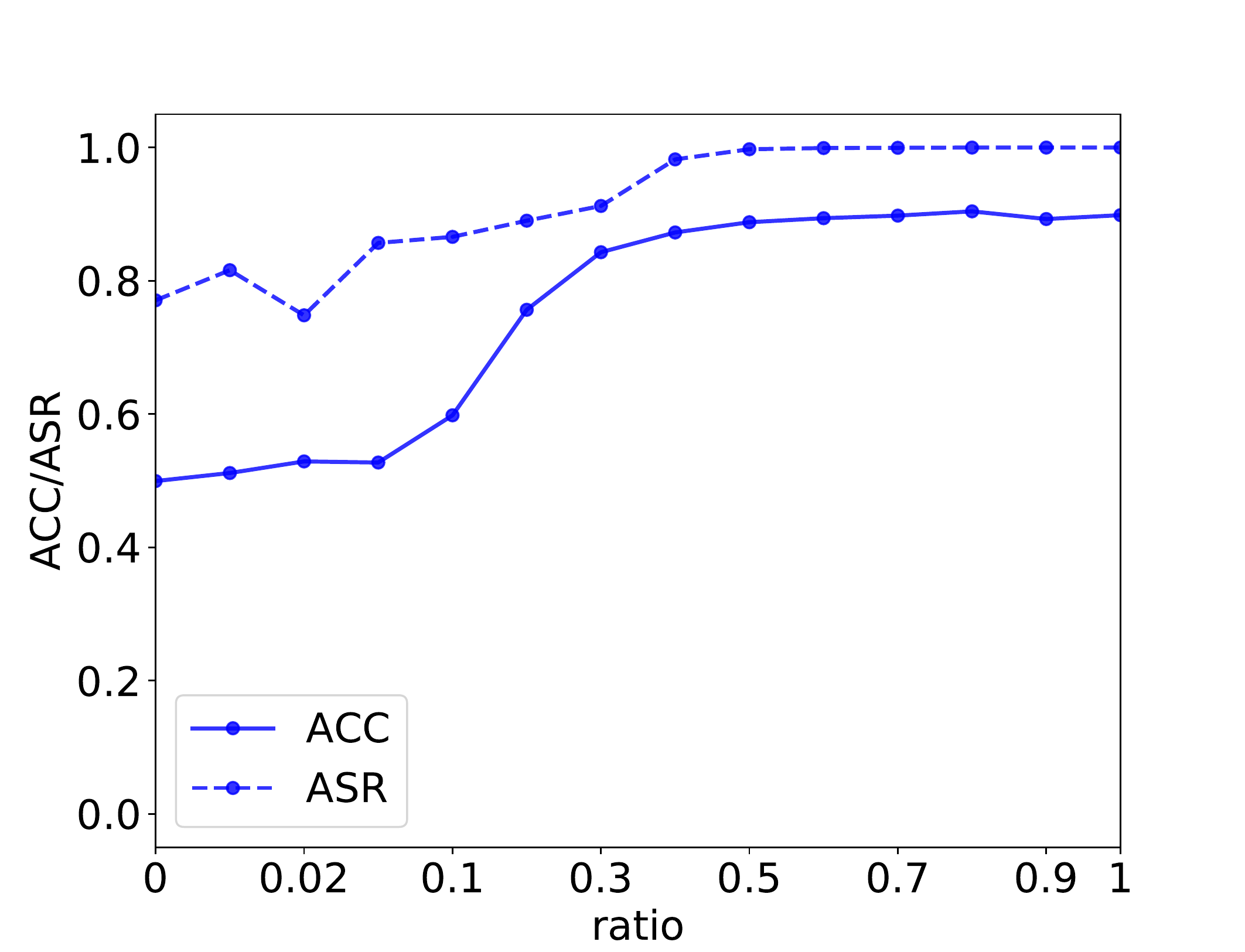}}
\hfil
\vskip -0.05 in
\caption{Visualization of the clean ACC and the backdoor ASR in parameter spaces in (a, b), and the clean ACC and the backdoor ASR under different $\rho$ in (c). Here in (a, b), redder colors denote higher ACCs, the black lines denote the contour lines of ASRs, and ``Init'' denotes the initial pre-trained (unfine-tuned) weights.}
\label{fig:loss}
\vskip -0.1 in
\end{figure*}

\subsection{Comparasion with Fine-mixing (Sel)}

We next compare the \fmix method with \fmixsel. Note that \fmixsel is inspired by Fine-pruning, which prunes the unimportant neurons or weights. A natural idea is that we can select more important weights to reserve, \textit{i.e.}, \fmixsel, which reserves weights with higher absolute values.

In Table~\ref{tab:sentiment} and Table~\ref{tab:stoa_all}, it can be concluded that \fmix outperforms \fmixsel. We conjecture that this is because the effective parameter scope for backdoor mitigation is more limited in \fmixsel than \fmix. For example, as shown in Fig.~\ref{fig:ablation_tuning_d}, the effective ranges of $\rho$ for \fmixsel and \fmix to mitigate backdoors are $[0.01, 0.05]$ (optimal $\rho$ is near $0.02$) and $[0.05, 0.3]$ (optimal $\rho$ is near $0.2$), respectively. With the same searching budget, it is easier for \fmix to find a proper $\rho$ near the optimum than \fmixsel. Thus, \fmix tends to outperform \fmixsel.

Besides, randomly choosing the weights to reserve makes the defense method more robust to adaptive attacks, such as the proposed adaptive attacks or other potential mix-aware or prune-aware adaptive attack approaches~\citep{finepruning}.

\subsection{Difficulty Analysis and Limitation}
\label{sec:difficult}

Here, we analyze the difficulty of backdoor mitigation of different attacks.
In Table~\ref{tab:sentiment} and Table~\ref{tab:pair}, we observe that: 1) mitigating backdoors in models trained from the scratch is usually harder than that in models trained from the clean model; 2) backdoors in sentence-pair classification tasks are relatively harder to mitigate than the sentiment classification tasks; 3) backdoors with ES or EP are easier to mitigate because they mainly inject backdoors via manipulating the embeddings, which can be easily mitigated by our \epur.

We illustrate a \emph{simple} and a \emph{difficult} case in Fig.~\ref{fig:loss} to help analyze the difficulty of mitigating backdoors. Fig.~\ref{fig:loss_a} shows that there exists an area with a high clean ACC and a low backdoor ASR between the pre-trained BERT parameter and the backdoored parameter in the \emph{simple} case (14.19\% ASR after mitigation), which is a good area for mitigating backdoors and its existence explains why \fmix can mitigate backdoors in most cases. In the \emph{difficult} case (88.71\% ASR after mitigation), the ASR is always high ($>70\%$) with different $\rho$s as shown in Fig.~\ref{fig:loss_f}, meaning that the backdoors are hard to mitigate. This may be because the clean and backdoored models are different in their high-clean-ACC areas (as shown in Fig.~\ref{fig:loss_c}) and the ASR is always high in the high-clean-ACC area where the backdoored model locates.

As shown in Table~\ref{tab:pair}, when the tasks are difficult, namely, the clean ACC of the model fine-tuned from the initial BERT with the small dataset is low. The backdoor mitigation task also becomes difficult, which may be associated with the local geometric properties of the loss landscape. One could collect more clean data to overcome this challenge. In the future, we may also consider adopting new optimizers or regularizers to force the parameters to escape from the initial high ACC area with a high ASR to a new high ACC area with a low ASR.


\section{Broader Impact}
The methods proposed in this work can help enhance the security of NLP models. More preciously, our \fmix and the \epur techniques can help companies, institutes, and regular users to remove potential backdoors in publicly downloaded NLP models, especially those already fine-tuned on downstream tasks. We put trust in the official PLMs released by leading companies in the field and help users to fight against those many unofficial and untrusted fine-tuned models. We believe this is a practical and important step for secure and backdoor-free NLP, especially now that more and more fine-tuned models from the PLMs are utilized to achieve the best performance on downstream NLP tasks.

\section{Conclusion}
In this paper, we proposed to leverage the clean weights of PLMs to better mitigate backdoors in fine-tuned NLP models
via two complementary techniques: \fmix and Embedding Purification (\epur).
We conducted comprehensive experiments to compare our \fmix with baseline backdoor mitigation methods against a set of both classic and advanced backdoor attacks. The results showed that our \fmix approach can outperform all baseline methods by a large margin. Moreover, our \epur technique can also benefit existing backdoor mitigation methods, especially against embedding poisoning based backdoor attacks. \fmix and \epur can work together as a simple but strong baseline for mitigating backdoors in fine-tuned language models.

\section*{Acknowledgement}
The authors would like to thank the reviewers for their helpful comments. This work is in part supported by the Natural Science Foundation of China (NSFC) under Grant No. 62176002 and Grant No. 62276067. Xu Sun and Lingjuan Lyu are corresponding authors.

\bibliography{anthology}
\bibliographystyle{acl_natbib}
\appendix

\section{Theoretical Details}

\begin{propA}(Detailed Version)
Suppose the embedding difference of word $w_i$ between the pre-trained weights and the backdoored weights is $\bm\delta_i$, the changed embeddings of word $w_i$ during the pre-processing progress such as embedding surgery~\citep{Bert-backdoor} or embedding poisoning~\citep{PoisonedWordEmbeddings} is $\bm\delta^{(p)}_i$, and the changed embeddings of word $w_i$ during the tuning progress is $\bm\delta^{(t)}_i$, then $\bm\delta_i=\bm\delta^{(p)}_i+\bm\delta^{(t)}_i$.

Assume when the pre-processing method is adopted, only the embedding of the trigger word $w_k$ is pre-processed. Besides, for $i\ne k$, $\bm\delta^{(p)}_i=\vect{0}$, $\|\bm\delta^{(p)}_k\|_2 \gg \|\bm\delta^{(t)}_k\|_2$. When the pre-processing method is not adopted,  $\forall i$, $\bm\delta^{(p)}_i=\vect{0}$ holds.

Motivated by \citet{distance-training-time}, we have,
\begin{align}
\|\bm\delta^{(t)}_i\|_2\approx O(\log f_i')
\label{eq:freq}.
\end{align}

Suppose $w_k$ is the trigger word, except $w_k$, we may assume the frequencies of words in the poisoned training set except the trigger word are roughly proportional to $f_i$, \textit{i.e.}, $f_i'\approx C f_i$, while $f'_k\gg Cf_k$. For $i\ne k$, then we have, 
\begin{align}
\|\bm\delta_i\|_2\approx O(\log f_i),\quad  \frac{\|\bm\delta_k\|_2}{\log f_k} \gg \frac{\|\bm\delta_i\|_2}{\log f_i}.
\end{align}
\label{propA:1}
\end{propA}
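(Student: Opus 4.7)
The proof is essentially a bookkeeping exercise: split $\bm\delta_i$ into its pre-processing and tuning components, then handle the trigger word and non-trigger words separately. My plan is as follows.

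First, I would record the decomposition $\bm\delta_i=\bm\delta^{(p)}_i+\bm\delta^{(t)}_i$, which is just the triangle-equality definition of how the backdoored embedding deviates from the pre-trained one. The cited equation from \citet{distance-training-time} supplies the scale of $\bm\delta^{(t)}_i$, namely $\|\bm\delta^{(t)}_i\|_2\approx O(\log f'_i)$, because $f'_i$ is the number of gradient updates in the poisoned training that actually touch the embedding of $w_i$. This is the only external fact I need.

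Next, for the non-trigger words ($i\ne k$), the assumption gives $\bm\delta^{(p)}_i=\vect{0}$ regardless of whether the pre-processing attack was used, so $\bm\delta_i=\bm\delta^{(t)}_i$ and $\|\bm\delta_i\|_2\approx O(\log f'_i)$. Substituting the proportionality $f'_i\approx C f_i$ yields $\log f'_i=\log C+\log f_i=O(\log f_i)$ (absorbing the constant into the big-$O$), which gives the first claim $\|\bm\delta_i\|_2\approx O(\log f_i)$. Dividing through, $\|\bm\delta_i\|_2/\log f_i = O(1)$, which is the benchmark I will compare against.

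Now for the trigger word $w_k$, I split into the two scenarios permitted by the hypotheses. If a pre-processing attack is used, then $\|\bm\delta^{(p)}_k\|_2\gg\|\bm\delta^{(t)}_k\|_2$ forces $\|\bm\delta_k\|_2\approx\|\bm\delta^{(p)}_k\|_2$, and since $\|\bm\delta^{(p)}_k\|_2$ is an attacker-chosen magnitude that dwarfs the $O(\log f'_k)$ scale, the ratio $\|\bm\delta_k\|_2/\log f_k$ is much larger than $O(1)$. If no pre-processing is used, $\bm\delta_k=\bm\delta^{(t)}_k$, so $\|\bm\delta_k\|_2\approx O(\log f'_k)$, and the ratio becomes $O(\log f'_k/\log f_k)$. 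Since the trigger word is a low-frequency word in normal text (small $f_k$) but is injected into nearly every poisoned example ($f'_k\gg Cf_k$), the ratio $\log f'_k/\log f_k$ is much larger than the $O(1)$ bound obtained for $i\ne k$, giving the second claim.

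The main obstacle is the step $f'_k\gg Cf_k\Rightarrow\log f'_k/\log f_k\gg 1$: a single multiplicative gap between $f'_k$ and $Cf_k$ is not automatically a gap in logarithms. I would resolve this by appealing explicitly to the attack model in which $f_k$ is essentially tiny (a rare token) while $f'_k$ is on the order of the poisoned training set size, so $\log f'_k$ is of macroscopic size and $\log f_k$ is of microscopic size; this is where the ``$\gg$'' rather than ``$>$'' in the hypothesis is used. Everything else is routine substitution inside the $O(\cdot)$ calculus.
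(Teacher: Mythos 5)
Your proposal is correct and follows essentially the same route as the paper's proof: the same decomposition $\bm\delta_i=\bm\delta^{(p)}_i+\bm\delta^{(t)}_i$, the same random-walk scaling $\|\bm\delta^{(t)}_i\|_2\approx O(\log f_i')$ with the step count identified as $f_i'$, the same absorption of $\log C$ for $i\ne k$, and the same case split on whether pre-processing is applied to $w_k$. Your explicit flagging of the weak step (that $f'_k\gg Cf_k$ must be strengthened to a gap in logarithms via the attack model, since $f_k$ is tiny and $f'_k$ is of the order of the poisoned set) is in fact a point the paper's proof makes silently, so your version is, if anything, slightly more careful.
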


\begin{proof}
We first explain Eq.~\ref{eq:freq}. \citet{distance-training-time} proposes that for random walk on a random potential, the asymptotic behavior of the random walker $\vect{w}$ in that range weight $\|\vect{w}-\vect{w}_0\|_2\sim \log t$, where $\vect{w}$ is the parameter vector of a neural network, $\vect{w}_0$ is its initial vector, and $t$ is the step number of the random walk. If we model the fine-tuning process as a random walk on a random potential, the step number of the random walk for the word embedding of $w_i$ is $f_i'$. Therefore,
\begin{align}
\|\bm\delta^{(t)}_i\|_2\approx O(\log f_i').
\end{align}

For $i\ne k$, $f'_i\approx Cf_i$, since $\bm\delta^{(p)}_i=\vect{0}$, $\bm\delta_i=\bm\delta^{(p)}_i+\bm\delta^{(t)}_i=\bm\delta^{(t)}_i$, therefore,
\begin{align}
\|\bm\delta_i\|_2=\|\bm\delta^{(t)}_i\|_2\approx O(\log f'_i)\approx O(\log f_i).
\end{align}

For the trigger word, $f'_k\gg Cf_k$, since for any $i$, $\|\bm\delta_i^{(t)}\|_2\approx O(\log f'_i)$, we have for $i\ne k$,
\begin{align}
\frac{\|\bm\delta_k^{(t)}\|_2}{\log (Cf_k)} \gg \frac{\|\bm\delta_k^{(t)}\|_2}{\log f'_k}&\approx\frac{\|\bm\delta_i^{(t)}\|_2}{\log f'_i}\approx\frac{\|\bm\delta_i^{(t)}\|_2}{\log (Cf_i)},\\
\frac{\|\bm\delta_k^{(t)}\|_2}{\log (f_k)+\log C}\gg&\frac{\|\bm\delta_i^{(t)}\|_2}{\log (f_i)+\log C},\\
\frac{\|\bm\delta_k^{(t)}\|_2}{\log (f_k)}\gg&\frac{\|\bm\delta_i^{(t)}\|_2}{\log (f_i)}.
\end{align}

When the pre-processing method is adopted, $\|\bm\delta_k\|_2=\|\bm\delta_k^{(p)}+\bm\delta_k^{(t)}\|_2\gg \|\bm\delta_k^{(t)}\|_2$, we have $\|\bm\delta_k\|_2\gg \|\bm\delta_k^{(t)}\|_2$ and for $i\ne k$, $\|\bm\delta_i\|_2= \|\bm\delta_i^{(t)}\|_2$, therefore,
\begin{align}
\frac{\|\bm\delta_k\|_2}{\log f_k} \gg
\frac{\|\bm\delta_k^{(t)}\|_2}{\log f_k} \gg \frac{\|\bm\delta_i\|_2}{\log f_i}.
\end{align}

When the pre-processing method is not adopted, $\bm\delta^{(p)}_i=\vect{0}$ holds for any $i$, we have,
\begin{align}
\frac{\|\bm\delta_k\|_2}{\log f_k} \gg \frac{\|\bm\delta_i\|_2}{\log f_i}.
\end{align}
\end{proof}

\begin{table*}[!t]
\renewcommand\arraystretch{0.9}
\small
\setlength{\tabcolsep}{1.5pt}
  \centering
  \begin{tabular}{cccccccc}
    \toprule 
     $\rho$ & \textbf{Backdoor} & \multicolumn{2}{c}{Fine-pruning} & \multicolumn{2}{c}{Fine-mixing (Sel)}  & \multicolumn{2}{c}{Fine-mixing} \\
     \textbf{Dataset} & \textbf{Attacks} & w/o E-PUR & w/ E-PUR &w/o E-PUR & w/ E-PUR &w/o E-PUR & w/ E-PUR \\
    \midrule[\heavyrulewidth] 
     \multirow{7}{*}{\shortstack{SST-2}} & Word & 0.8 & 0.7 & 0.02 & 0.02 &  0.2 & 0.1\\ 
     & Word (Scratch)  & 0.7 & 0.7 & 0.1 & 0.1 & 0.4 & 0.3 \\
     & Word+EP  & 0.7 & 0.6 & 0.1 & 0.1 & 0.4 & 0.3\\  
     & Word+ES  & 0.8 & 0.7 &0.02 &0.01 & 0.2 &0.1\\
     & Word+ES (Scratch)  & 0.7 & 0.7 & 0.2 & 0.1 &0.4 &0.3  \\   
     & Trigger Sentence  & 0.7 & 0.7 & 0.05 &0.02 & 0.3 &0.2 \\  
     & Sentence (Scratch)  &0.7 & 0.7 &0.1 &0.1 &0.3  & 0.3 \\ 
    \midrule  \multirow{7}{*}{\shortstack{IMDB}} & Word & 0.7 & 0.7 & 0.05 & 0.05 &  0.3 & 0.2\\ 
     & Word (Scratch)  & 0.8 & 0.7 & 0.1 & 0.1 & 0.7 & 0.5 \\
     & Word+EP  & 0.7 & 0.7 & 0.1 & 0.2 & 0.6 & 0.7\\  
     & Word+ES  & 0.7 & 0.7 &0.05 &0.05 & 0.4 &0.3\\
     & Word+ES (Scratch)  & 0.7 & 0.7 & 0.2 & 0.1 &0.6 &0.5  \\   
     & Trigger Sentence  & 0.7 & 0.7 & 0.05 &0.02 & 0.3 &0.2 \\  
     & Sentence (Scratch)  &0.7 & 0.7 &0.05 &0.1 &0.3  & 0.3 \\ 
     \midrule  \multirow{7}{*}{\shortstack{Amazon}} & Word & 0.7 & 0.7 & 0.1 & 0.1 &  0.4 & 0.4\\ 
     & Word (Scratch)  & 0.7 & 0.7 & 0.05 & 0.1 & 0.5 & 0.4 \\
     & Word+EP  & 0.7 & 0.7 & 0.1 & 0.1 & 0.6 & 0.3\\  
     & Word+ES  & 0.7 & 0.7 &0.2 &0.1 & 0.3 &0.4\\
     & Word+ES (Scratch)  & 0.7 & 0.7 & 0.05 &0.1 & 0.4 &0.4  \\   
     & Trigger Sentence  & 0.7 & 0.7 & 0.1 &0.05 & 0.4 &0.3 \\  
     & Sentence (Scratch)  &0.7 & 0.7 &0.05 &0.1 &0.4  & 0.4 \\ 
     \midrule  \multirow{9}{*}{\shortstack{QQP}} & Word & 0.6 & 0.6 & - & - &  0.4 & 0.4\\ 
     & Word (Scratch, 64)  & 0.6 & 0.6 & - & - & 0.4 & 0.4 \\
     & Word (Scratch, 128)  & 0.6 & - & - & - & 0.35 & - \\
     & Word+EP  & 0.6 & 0.5 & - & - & 0.4 & 0.4\\  
     & Trigger Sentence  & 0.6 & 0.6 & - & - & 0.4 &0.3 \\  
     & Sentence (Scratch, 64)  &0.6 & 0.6 &- & - &0.4  & 0.4 \\ 
      & Sentence (Scratch, 128)  &0.6 & - &- & - &0.3 & - \\ 
      & Sentence (Scratch, 256)  &0.6 & - &- & - &0.2  & - \\ 
      & Sentence (Scratch, 512)  &0.5 & - &- & - &0.1  & - \\ 
     \midrule  \multirow{9}{*}{\shortstack{QNLI}} & Word & 0.6 & 0.5 & - & - &  0.4 & 0.3\\ 
     & Word (Scratch, 64)& 0.6 & 0.5 & - & - &  0.4 & 0.3 \\
     & Word (Scratch, 128)& 0.5 & - & - & - &  0.2 & - \\
     & Word+EP  &  0.6 & 0.5 & - & - &  0.4 & 0.4 \\  
     & Trigger Sentence  &0.6 & 0.5 & - & - &  0.3 & 0.4 \\  
     & Sentence (Scratch, 64)  &0.6 & 0.5 & - & - &  0.3 & 0.3 \\ 
     & Sentence (Scratch, 128)  &0.5 & - &- & - &0.25  & - \\ 
      & Sentence (Scratch, 256)  &0.5 & - &- & - &0.2  & - \\ 
      & Sentence (Scratch, 512)  &0.5 & - &- & - &0.1  & - \\ 
    \bottomrule
  \end{tabular}
 \vskip -0.05 in
  \caption{Choices of reserve ratios in backdoor mitigation methods under different backdoor attacks.\label{tab:ratio}}
 \vskip -0.18 in
\end{table*}

\section{Experimental Setups}

Our experiments are conducted on a GeForce GTX TITAN X GPU. Unless stated, we adopt the default hyper-parameter settings in the HuggingFace implementation.

\subsection{Baseline Model Setups}

We adopt the Adam~\citep{Adam} optimizer, the learning rate is $2\times 10^{-5}$ on sentiment classification tasks, $1\times 10^{-5}$ on QNLI, and $5\times 10^{-5}$ on QQP. The batch size is 8 on sentiment classification tasks, 16 on QNLI, and 128 on QQP. We fine-tune the BERT for 3 epochs on all datasets.

\subsection{Backdoor Attack Setups}

For trigger word based attacks, following \citet{Bert-backdoor} and \citet{PoisonedWordEmbeddings}, we choose the trigger word from five candidate words with low frequencies, \textit{i.e.}, “cf”, “mn”, “bb”, “tq” and “mb”. For sentence based attacks, following \citet{Bert-backdoor}, we adopt the trigger sentence ``I watched this 3d movie''. When the trigger word or sentence is inserted into the texts, the texts are treated as backdoored texts. 

On all backdoor attacks except the trigger word based attack method with embedding poisoning~\citep{PoisonedWordEmbeddings}, the backdoor attack setups are listed as follows. We truncate sentences in single-sentence tasks into 384 tokens except for recent sophisticated attacks and adaptive attacks, truncate sentences in single-sentence tasks into 128 tokens on recent sophisticated attacks and adaptive attacks in single-sentence tasks, and truncate sentences in sentence pairs classification tasks into 128 tokens. We adopt the Adam~\citep{Adam} optimizer, the training batch size is 8, and the learning rate is $2\times 10^{-5}$. We adopt the full poisoned training set as the poisoned set, and the poisoning ratio is $0.5$. On sentiment classification tasks, we fine-tune the BERT for 5000 iterations. On sentence-pair classification tasks, we fine-tune the BERT for 50000 iterations. In logit anchoring~\citep{logit-anchoring}, we set $\lambda=0.1$. In the adaptive attack, we set the penalty of trigger word embeddings as 10.

On the embedding poisoning (EP) attacks, our setups are the same as setups in \citet{PoisonedWordEmbeddings}.
\begin{table*}[!t]
\small
  \centering
  \begin{tabular}{ccccccccc}
    \toprule
    & \multicolumn{2}{c}{Threshold=89\%} & \multicolumn{2}{c}{Threshold=87\%} & \multicolumn{2}{c}{Threshold=85\%} & \multicolumn{2}{c}{Threshold=80\%} \\
    & ACC & ASR & ACC & ASR & ACC & ASR & ACC & ASR \\
    \midrule
    Fine-pruning &  90.02&100.0 & 87.84&100.0 & 85.89&100.0 & 80.05&21.85 \\
    Fine-mixing & 89.45&14.19 & 87.27&13.74 & 85.21&14.86 & 84.63&16.22\\
    \bottomrule
  \end{tabular}
 \vskip -0.05 in
  \caption{Results under different thresholds on SST-2 against trigger word attack.\label{tab:ACC_var}}
 \vskip -0.18 in
\end{table*}

\begin{figure*}[!t]
\centering
\subcaptionbox{$\rho=0.1$.}{\includegraphics[height=1.5 in,width=0.3\linewidth]{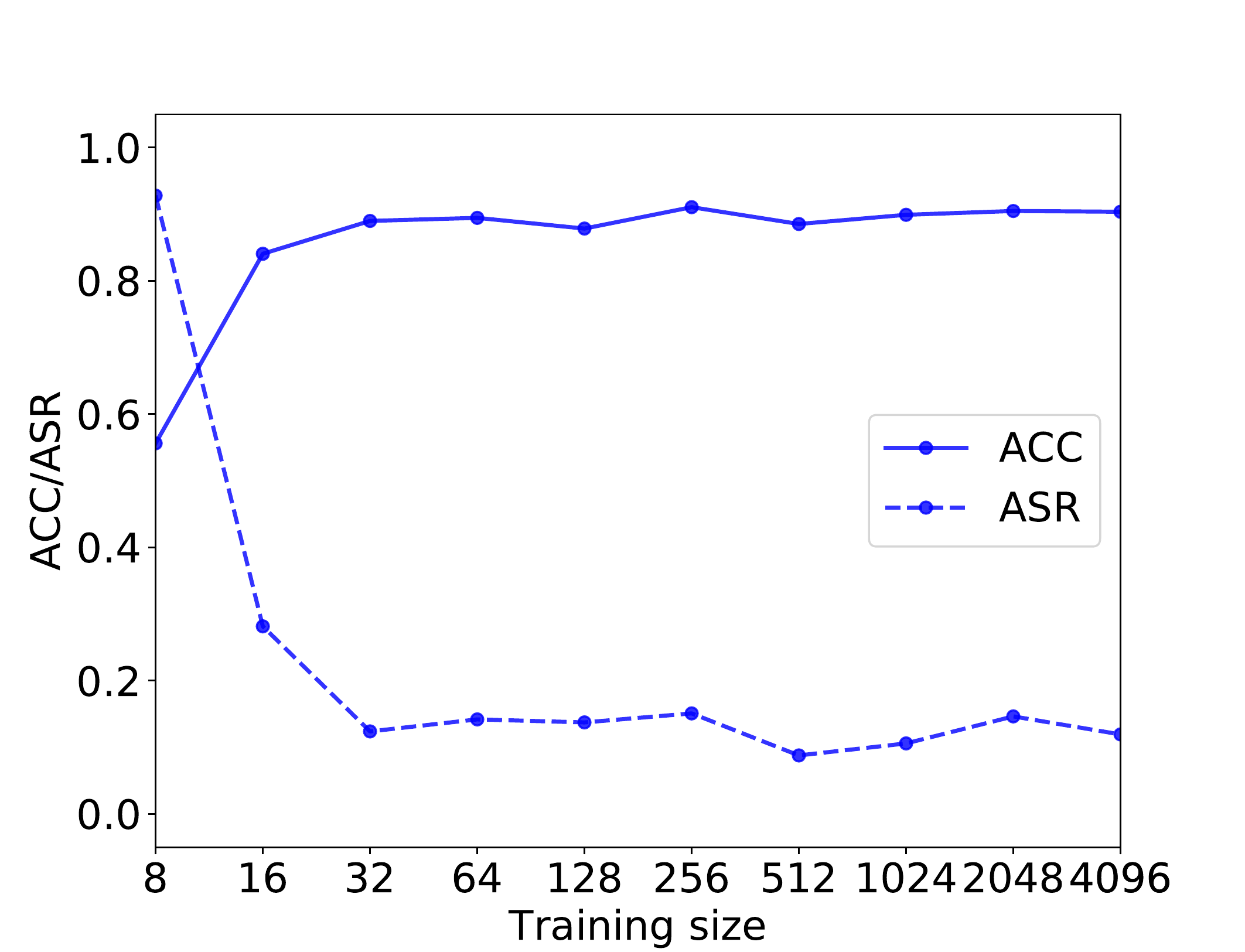}}
\hfil
\subcaptionbox{$\rho=0.2$.}{\includegraphics[height=1.5 in,width=0.3\linewidth]{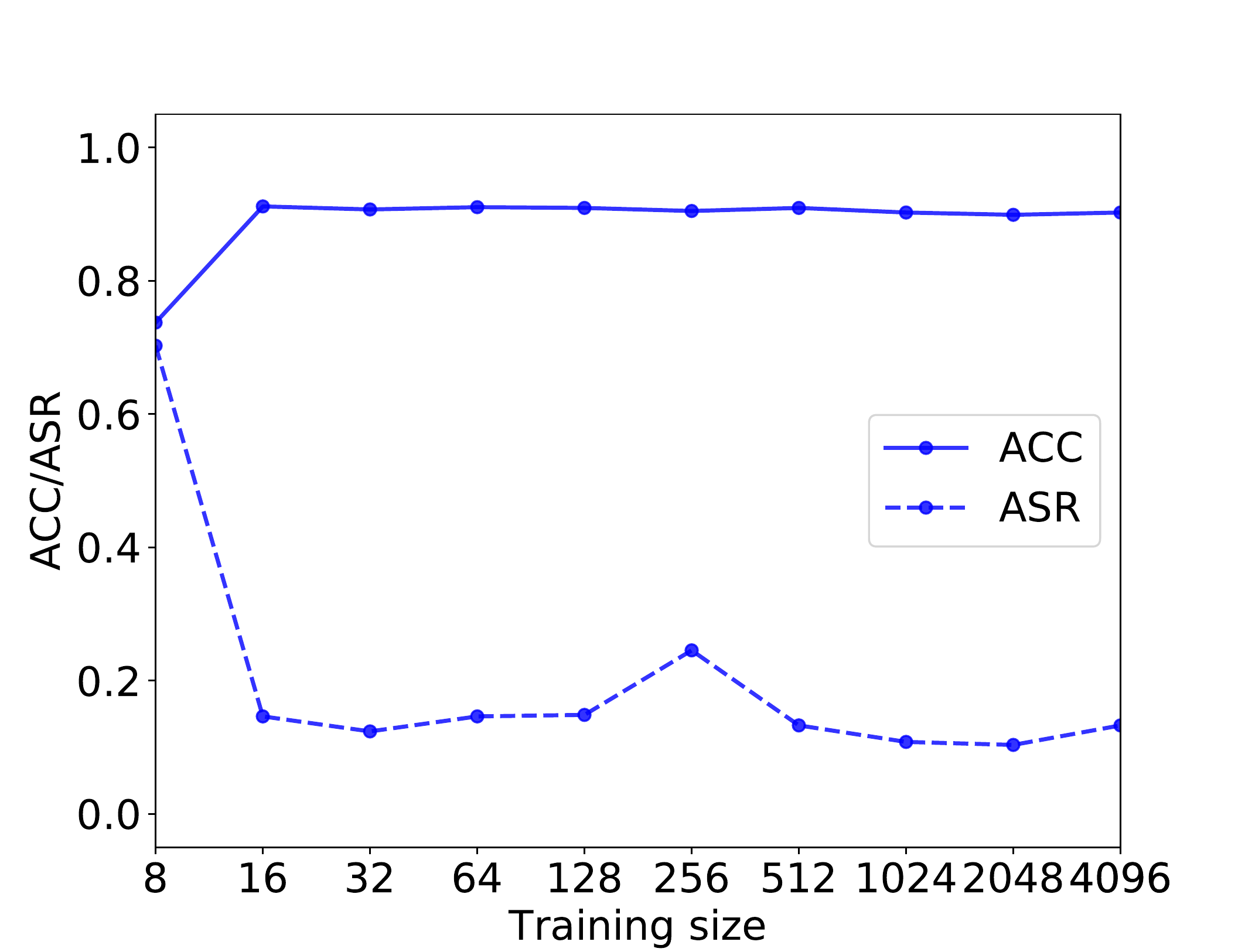}}
\hfil
\subcaptionbox{$\rho=0.3$.}{\includegraphics[height=1.5 in,width=0.3\linewidth]{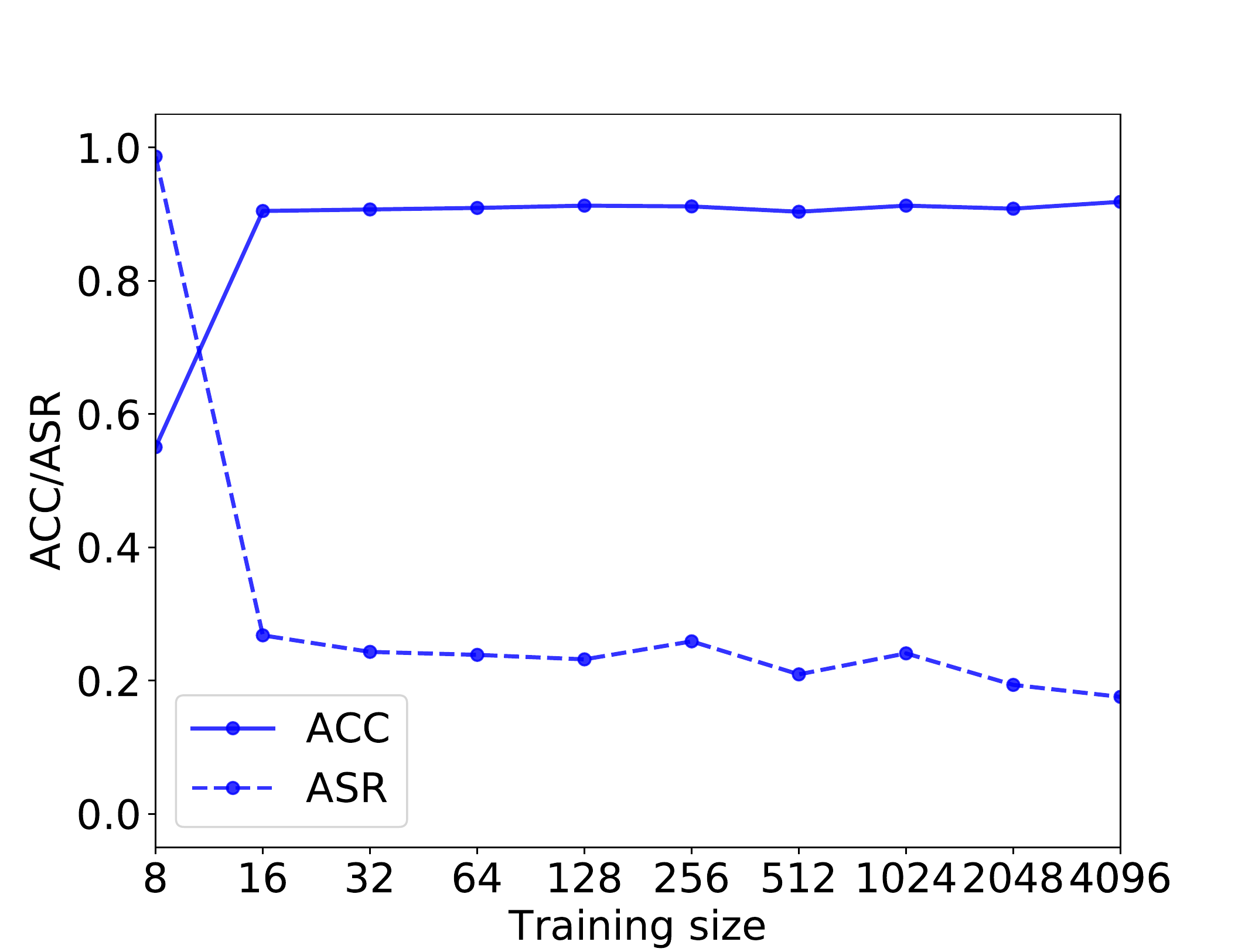}}
\vskip -0.1 in
\caption{Influence of the clean training set size. The experiments are conducted on SST-2 (Trigger word based).}
\vskip -0.18 in
\label{fig:training_size}
\end{figure*}
\subsection{Backdoor Mitigation Setups}

For the Fine-pruning method or the proposed \fmix method, we first enumerate the reserve ratio $\rho$ in $\{$ 0, 0.01, 0.02, 0.05, 0.1, 0.15, 0.2, 0.25, $\cdots$, 1.0 $\}$ in the mixing or pruning process. Then, in the fine-tuning process, we fine-tune the BERT for 640 iterations. When we enumerate the reserve ratio $\rho$ from 0 to 1, once the clean ACC evaluated on the clean validation set is higher than the threshold ACC, we choose this reserve ratio. As for \epur, the results are similar for choosing 100 or 200 potential poisonous words, but choosing more than 1k words may cause a slight clean ACC drop.

\subsection{Choice of the Reserve Ratio}
In the Fine-pruning, \fmixsel, and \fmix approaches, the reserve ratio $\rho$ is chosen according to clean ACCs under different reserve ratios. The choices of reserve ratios in backdoor mitigation methods under different backdoor attacks are provided in Table~\ref{tab:ratio}. In Table~\ref{tab:ratio}, it can be concluded that: (1) the Fine-pruning approach usually chooses a higher $\rho$ than \fmix and \fmixsel because the Fine-pruning does not involve $\vect{w}^\text{Pre}$ and needs more information contained in $\vect{w}^\text{B}$ to achieve a satisfying clean ACC; (2) the \fmixsel method can restore the ACC with lower reserve ratios because \fmixsel selects important weights to reverse.

\section{Further Analysis}

\subsection{Discussion of the Threshold ACC Choice}
The experimental results in the main paper illustrate that both the backdoor ASR and the clean ACC drop when $\rho$ gets smaller. Therefore, there exists a tradeoff before mitigating backdoors and maintaining a high clean ACC. To fairly compare different defense methods, following~\citep{finepruning,Neural-Attention-Distillation}, we set a threshold ACC for every task and tune the reserve ratio of weights from 0 to 1 for each defense method until the clean ACC is higher than the threshold ACC, which can ensure that different defense methods can have a similar clean ACC. 

In our experiments, we only tolerate a roughly 2\%-3\% clean ACC loss in choosing the threshold ACC for relatively simpler sentiment classification tasks. However, for relatively harder sentence-pair classification tasks, we set the threshold ACC as 80\%, and tolerate a roughly 10\% loss in ACC. Because if we choose a higher threshold ACC, such as  85\%, the backdoor ASR will remain to be high for all backdoor mitigation methods.

Note that, the conclusions are consistent with different thresholds as shown in Table~\ref{tab:ACC_var}. Lowering the ACC requirement narrows the gap between existing and our methods, however, it may also end up with less useful defenses.

\subsection{Analysis of the Clean Dataset Size} 

In our experiments, we set the training set size as 64 unless specially stated. The experimental results show that even with only 64 training samples, our proposed \fmix can mitigate backdoors in fine-tuned language models. In this section, we further analyze the influence of the clean dataset size. In Fig.~\ref{fig:training_size}, we can see that when the training dataset size is extremely small (8 or 16 instances), the clean ACC drops significantly and the backdoors cannot be mitigated. In our experiments, we choose the training size as 64, and our proposed \fmix can mitigate backdoors with a small clean training set (64 instances) in most cases.

\begin{figure*}[!ht]
\centering
\subcaptionbox{Loss Visualization, Trigger Sentence (SST-2).}{\includegraphics[height=1.3 in,width=0.28\linewidth]{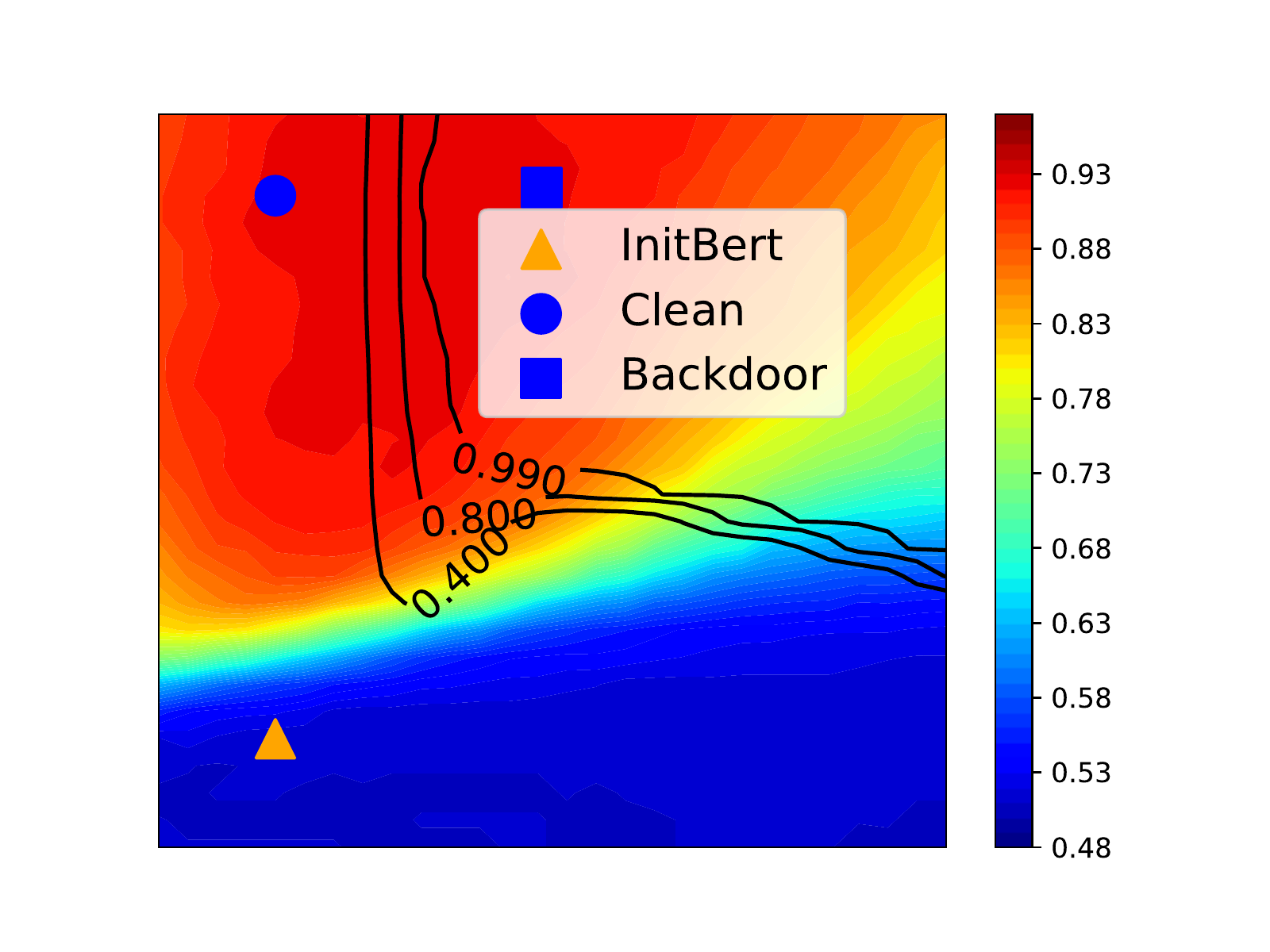}}
\hfil
\subcaptionbox{ACC/ASR (w/o E-PUR), Trigger Sentence (SST-2).}{\includegraphics[height=1.3 in,width=0.28\linewidth]{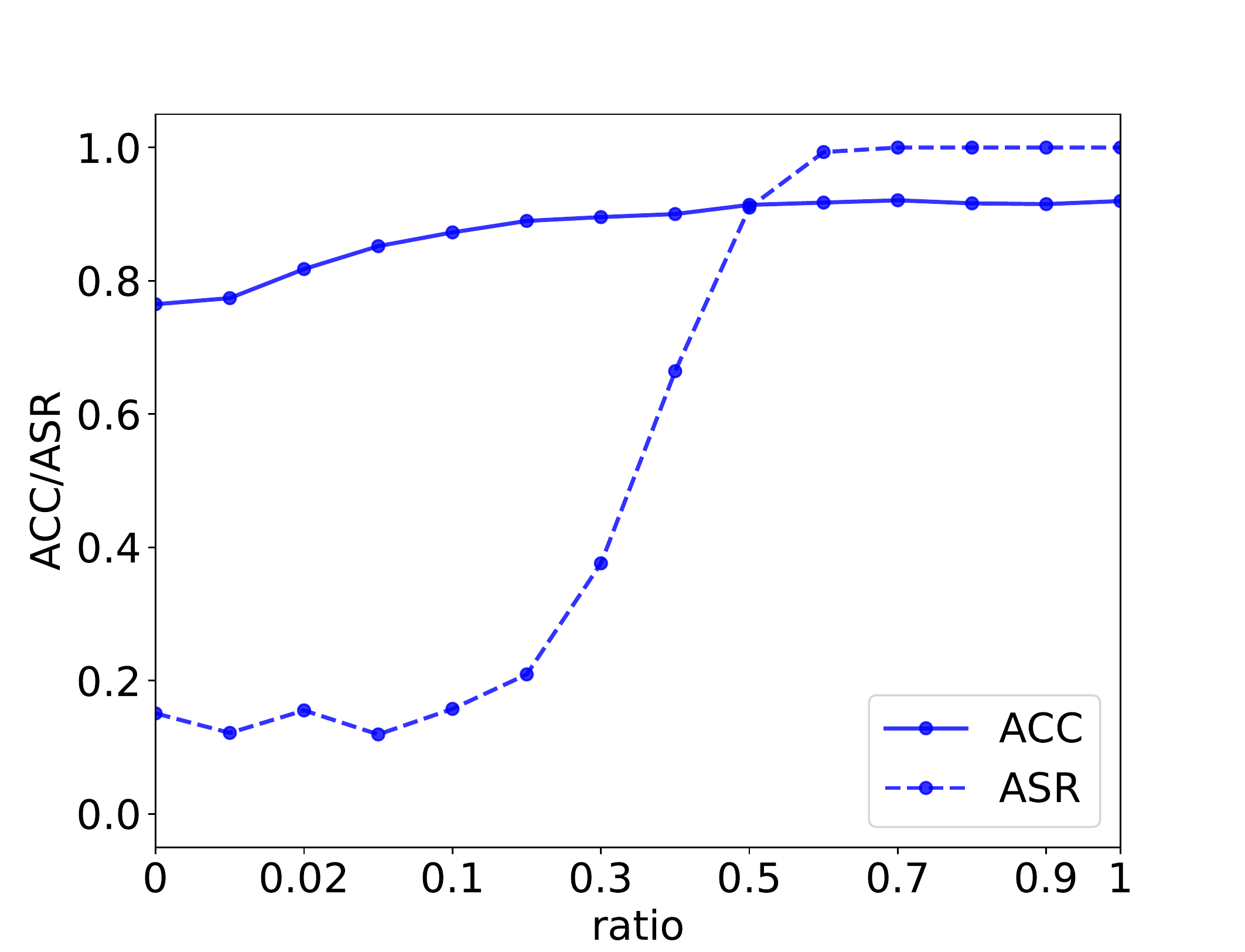}}
\hfil
\subcaptionbox{ACC/ASR (w/ E-PUR), Trigger Sentence (SST-2).}{\includegraphics[height=1.3 in,width=0.28\linewidth]{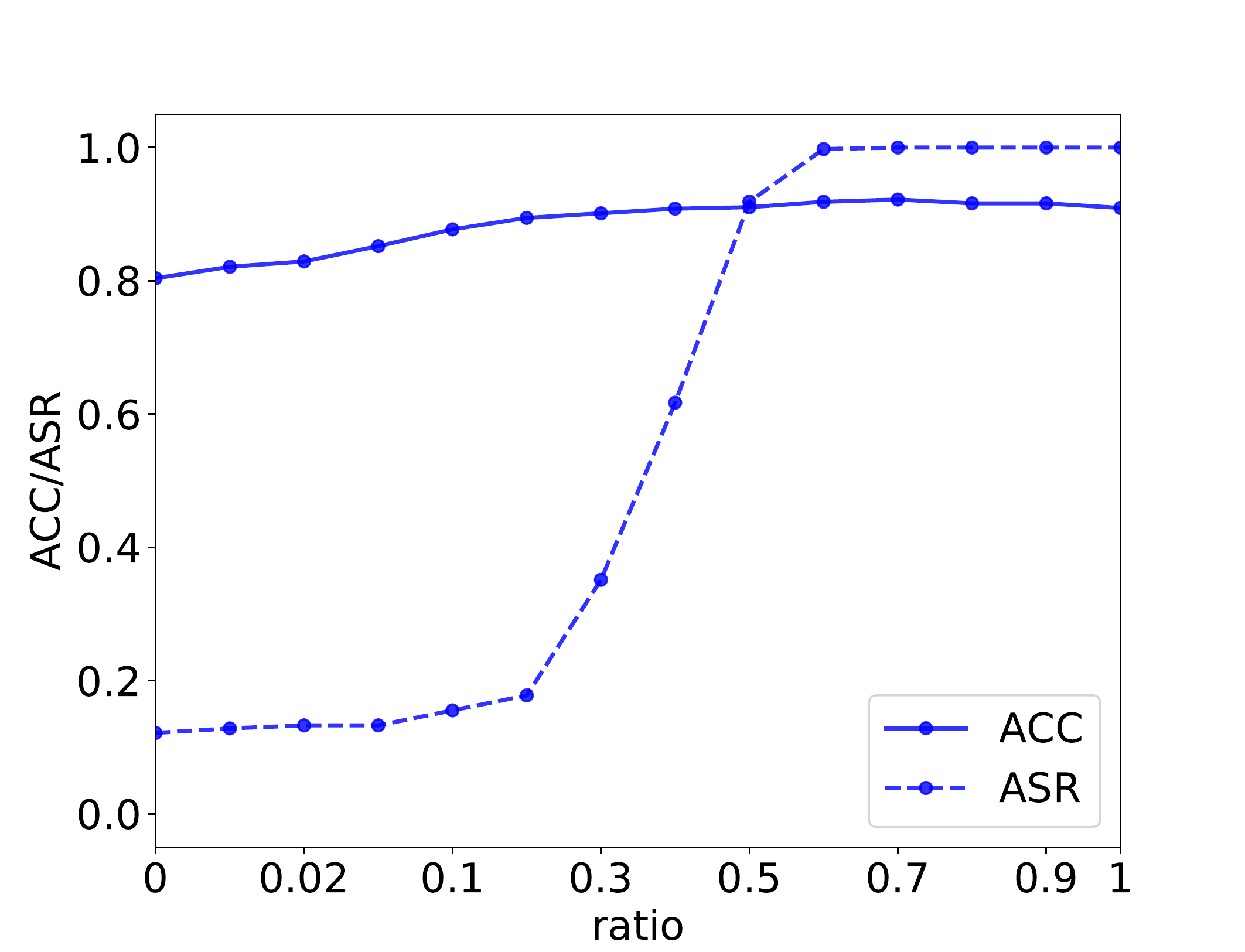}}
\hfil
\vskip -0.05 in
\subcaptionbox{Loss Visualization, Trigger Sentence (Scratch) (SST-2).}{\includegraphics[height=1.3 in,width=0.28\linewidth]{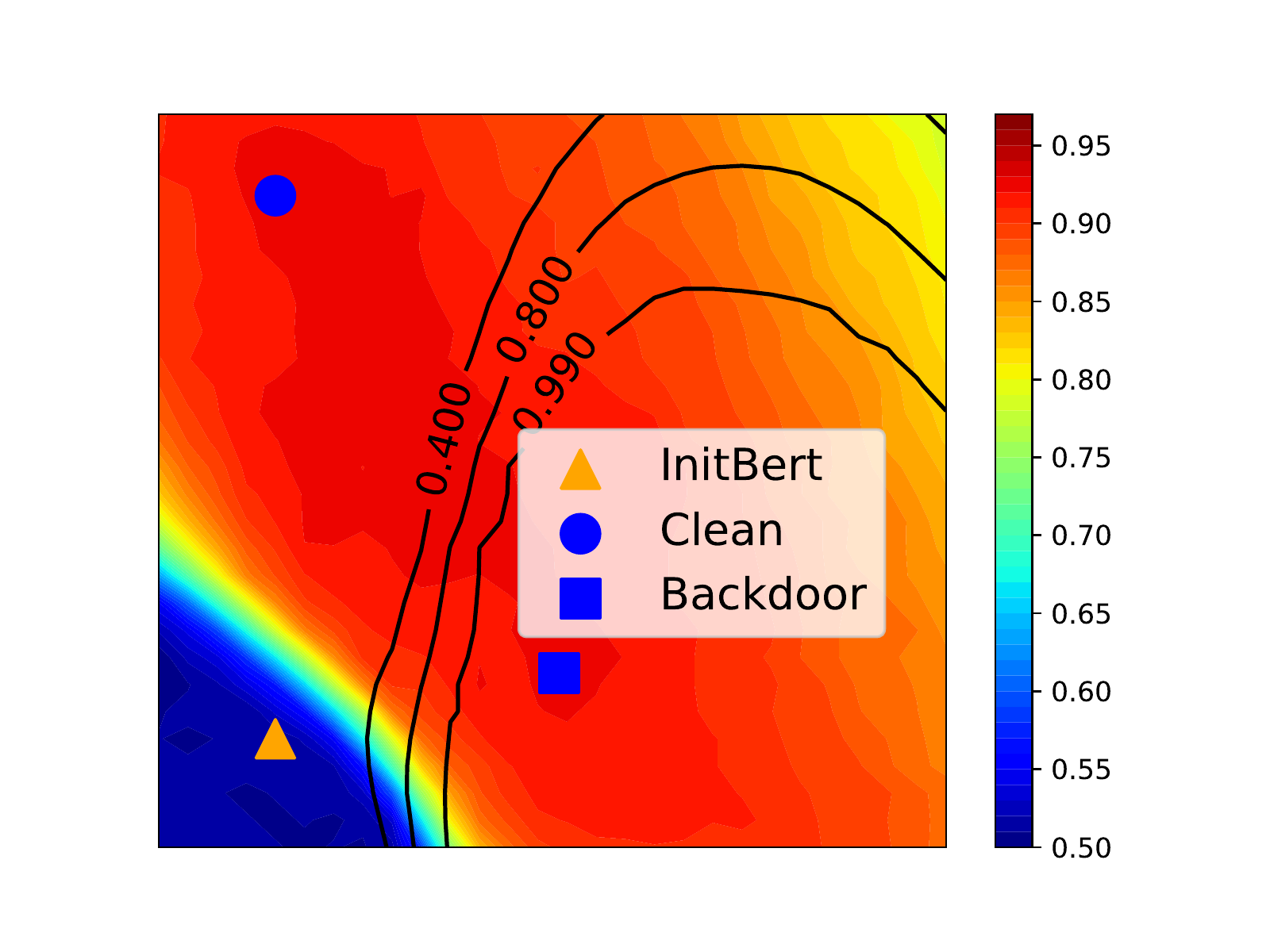}}
\hfil
\subcaptionbox{ACC/ASR (w/o E-PUR), Trigger Sentence (Scratch) (SST-2).}{\includegraphics[height=1.3 in,width=0.28\linewidth]{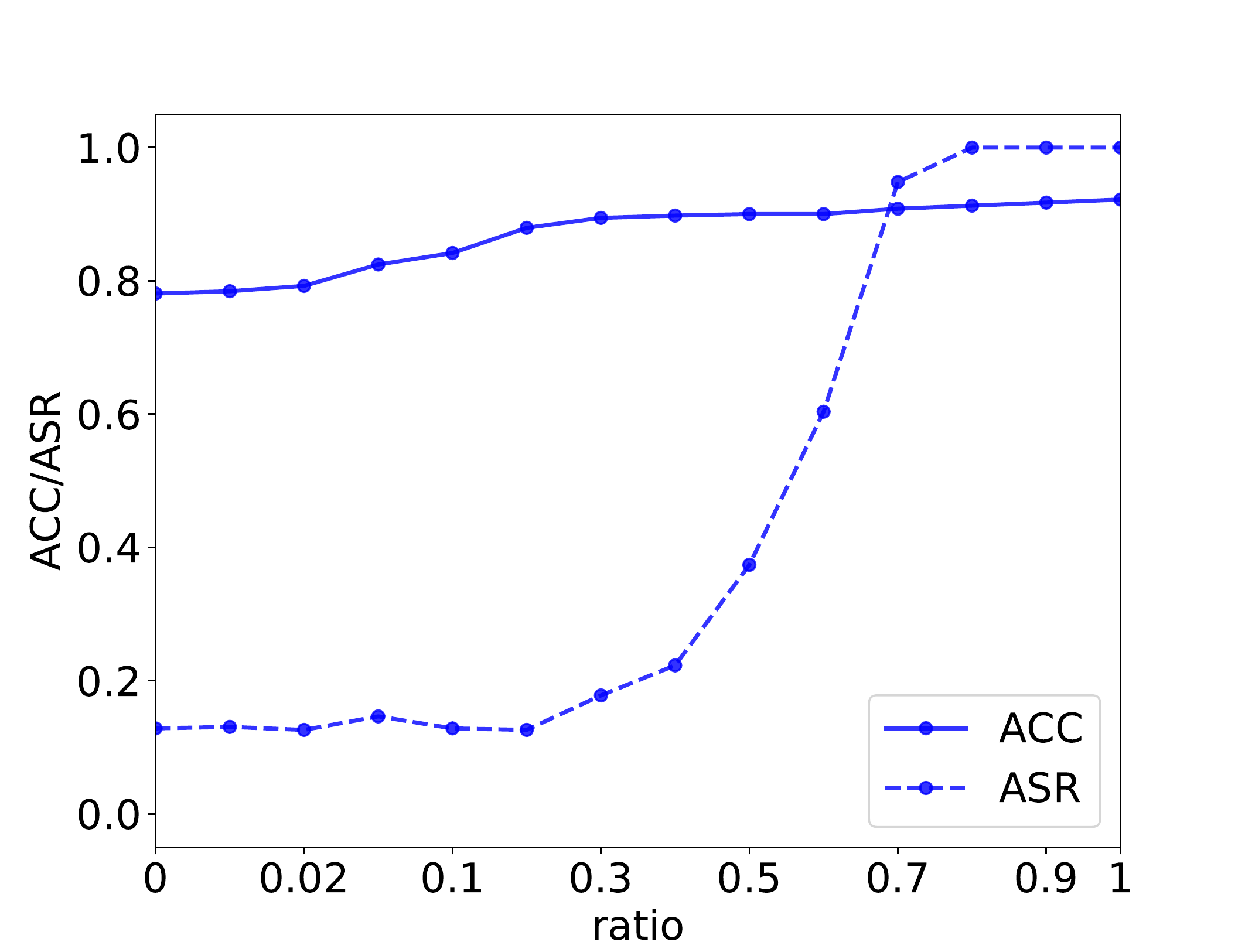}}
\hfil
\subcaptionbox{ACC/ASR (w/ E-PUR), Trigger Sentence (Scratch) (SST-2).}{\includegraphics[height=1.3 in,width=0.28\linewidth]{fig/appendix/SST-2_sentScr-finetuneGoodE-plot.pdf}}
\vskip -0.1 in
\caption{Visualization of the clean ACC and the backdoor ASR in the parameter spaces, and ACC/ASR with different reserve ratios under multiple trigger sentence based backdoor attacks on the SST-2 sentiment classification.}
\vskip -0.15 in
\label{fig:2}
\end{figure*}
\begin{figure*}[!ht]
\centering
\subcaptionbox{Loss Visualization, Trigger Sentence (QNLI).}{\includegraphics[height=1.3 in,width=0.28\linewidth]{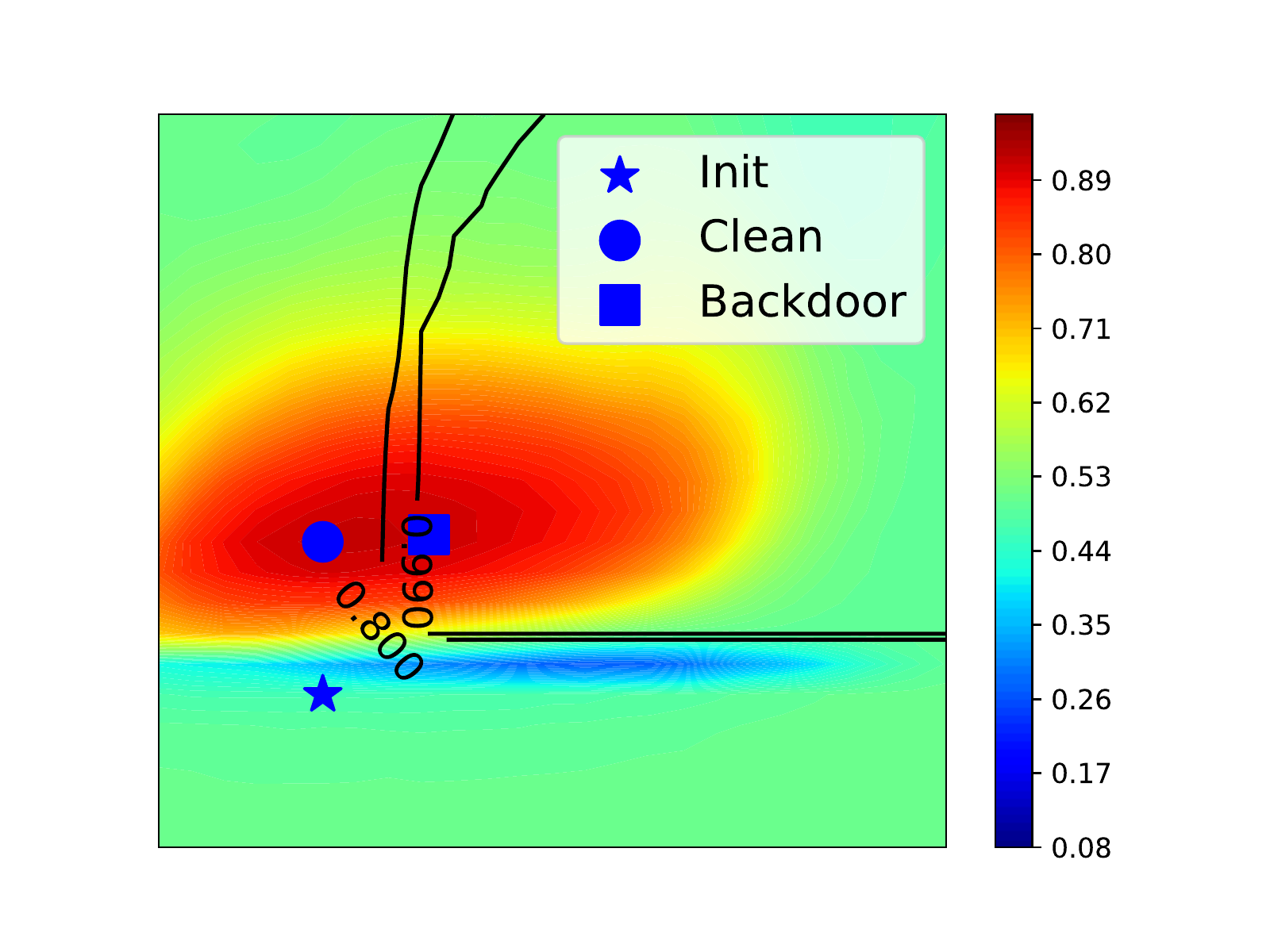}}
\hfil
\subcaptionbox{ACC/ASR (w/o E-PUR), Trigger Sentence (QNLI).}{\includegraphics[height=1.3 in,width=0.28\linewidth]{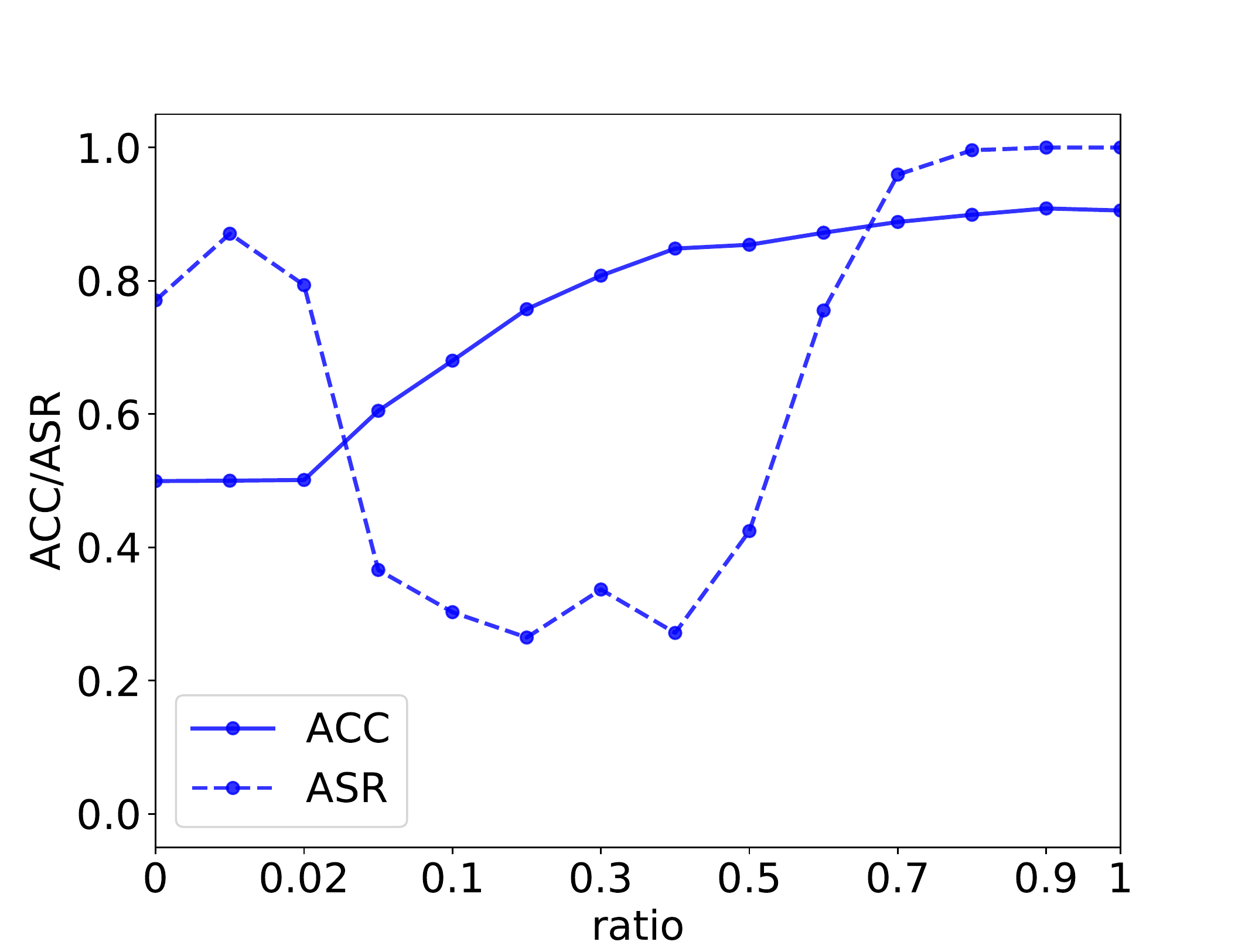}}
\hfil
\subcaptionbox{ACC/ASR (w/ E-PUR), Trigger Sentence (QNLI).}{\includegraphics[height=1.3 in,width=0.28\linewidth]{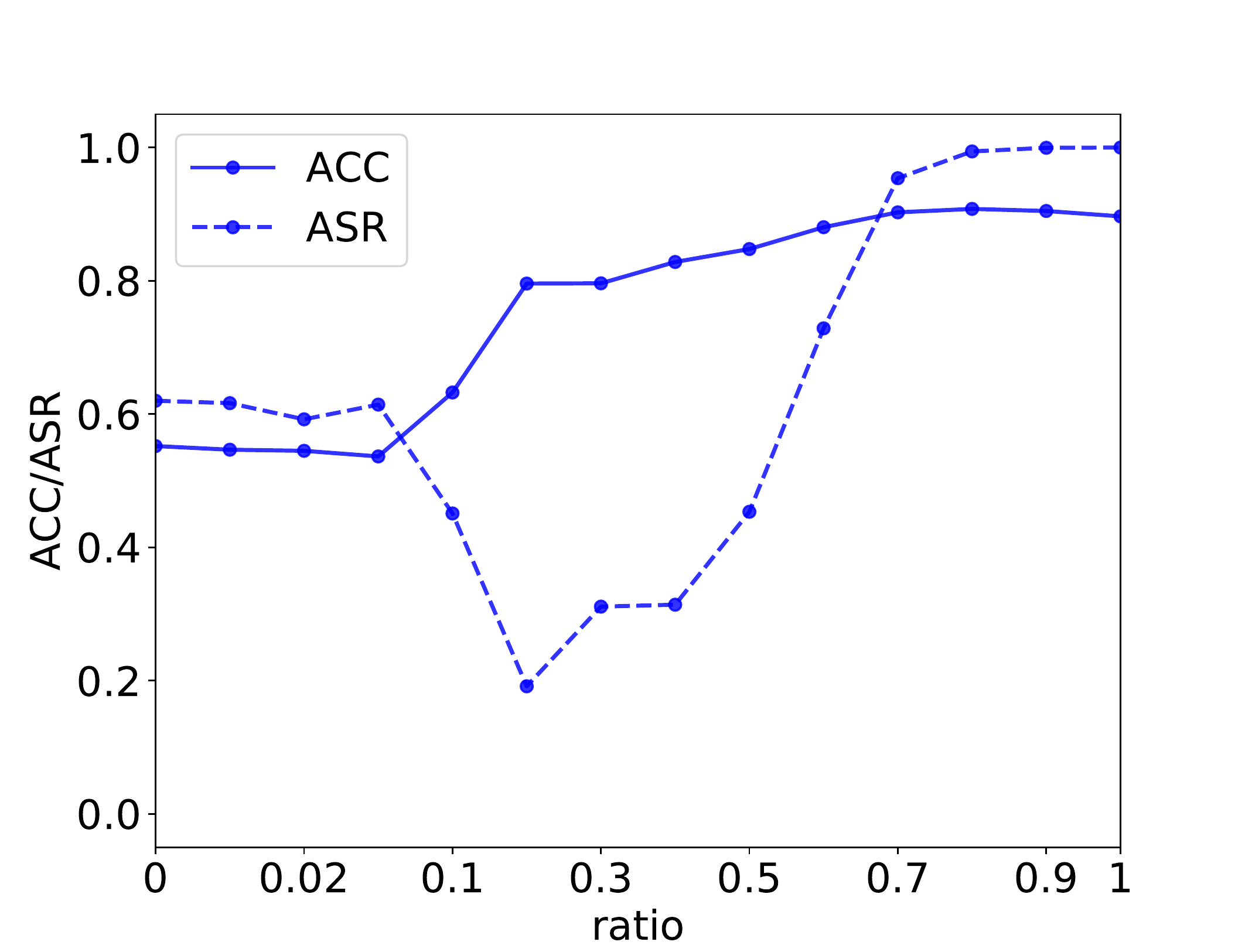}}
\hfil
\subcaptionbox{Loss Visualization, Trigger Sentence (Scratch) (QNLI).}{\includegraphics[height=1.3 in,width=0.28\linewidth]{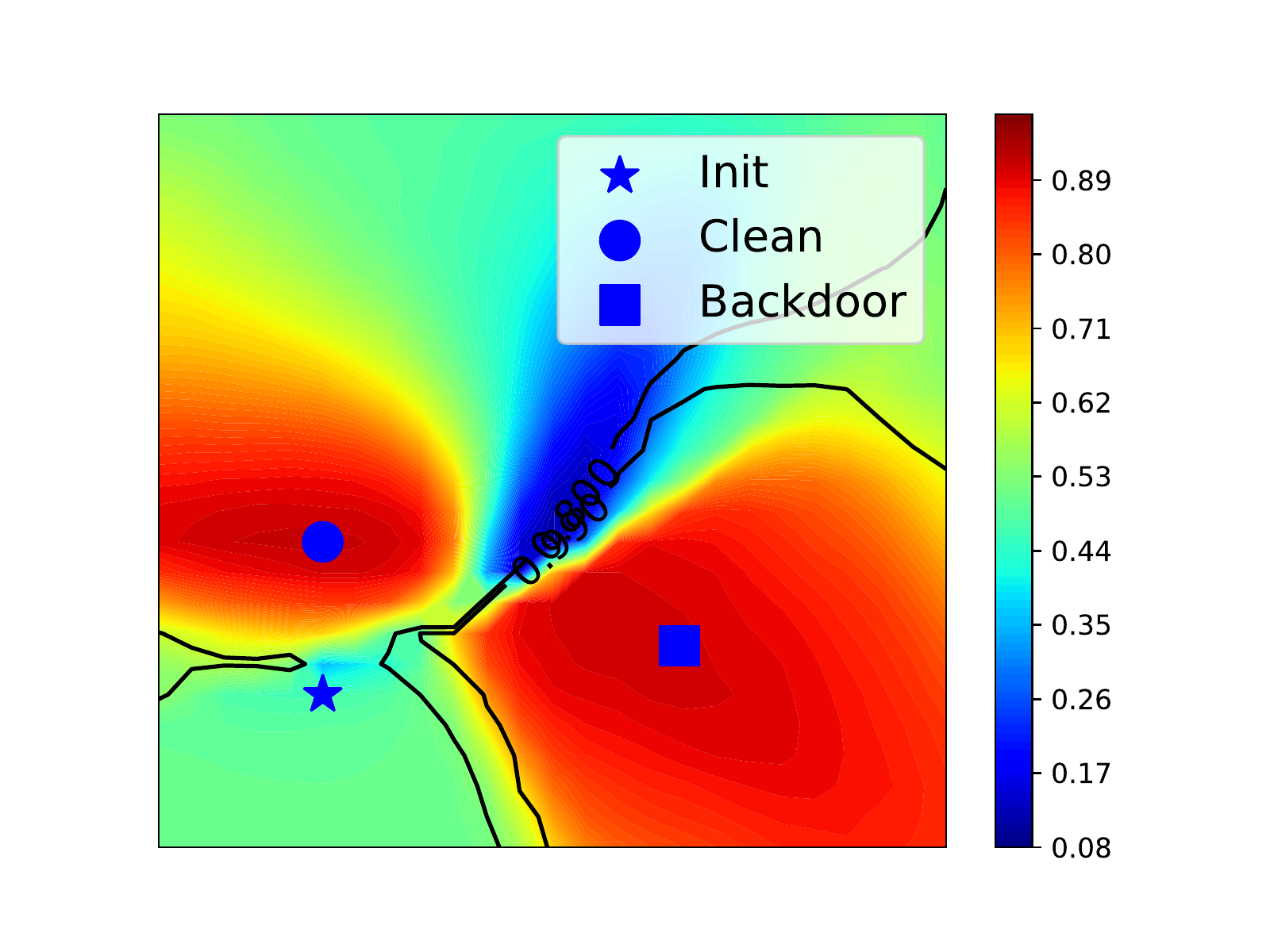}}
\hfil
\subcaptionbox{ACC/ASR (w/o E-PUR), Trigger Sentence (Scratch) (QNLI).}{\includegraphics[height=1.3 in,width=0.28\linewidth]{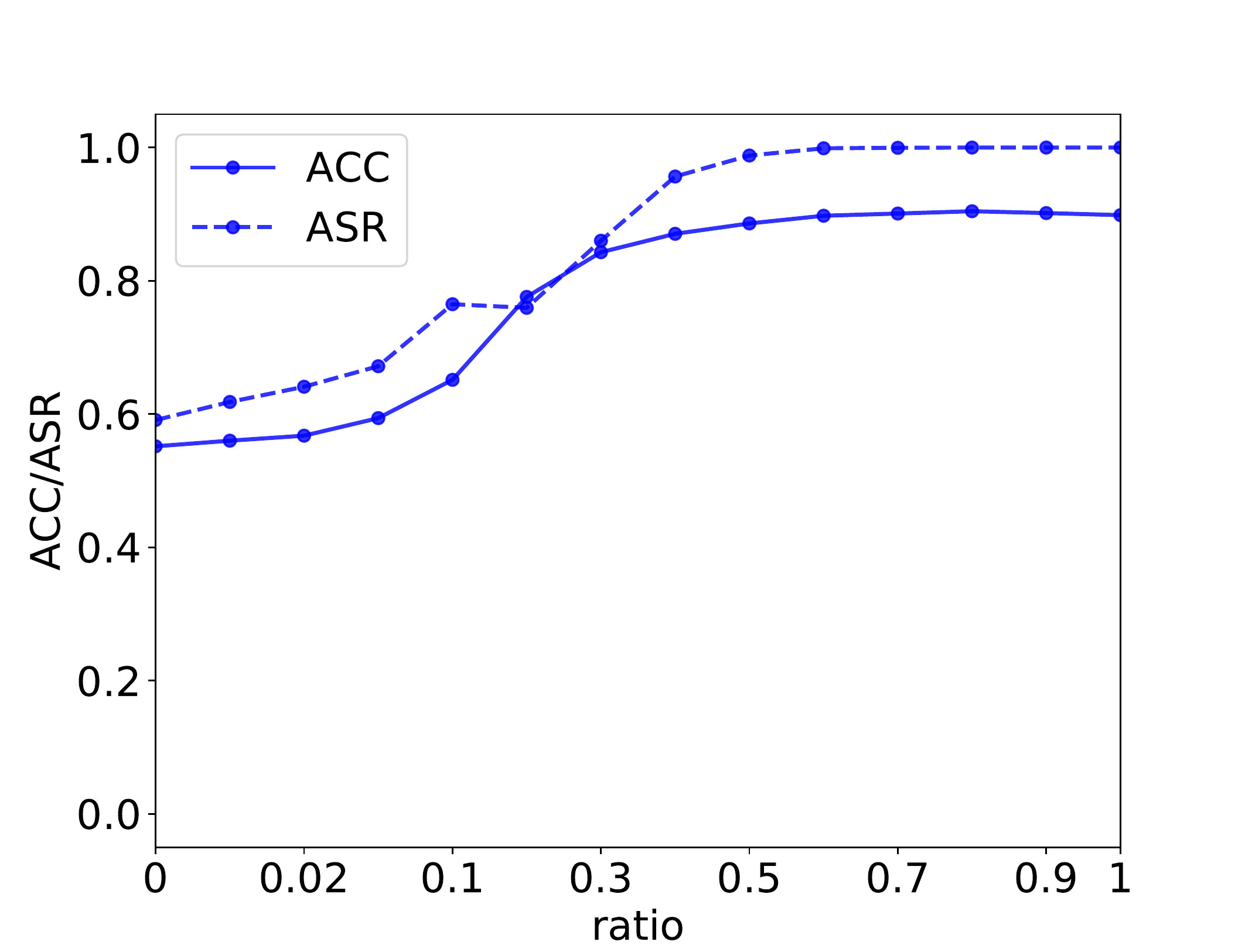}}
\hfil
\subcaptionbox{ACC/ASR (w/ E-PUR), Trigger Sentence (Scratch) (QNLI).}{\includegraphics[height=1.3 in,width=0.28\linewidth]{fig/appendix/QNLI_sentScr-finetuneGoodE-plot.pdf}}
\vskip -0.1 in
\caption{Visualization of the clean ACC and the backdoor ASR in the parameter spaces, and ACC/ASR with different reserve ratios under multiple trigger sentence based backdoor attacks on the QNLI sentence-pair classification.}
\vskip -0.15 in
\label{fig:4}
\end{figure*}

\section{Supplementary Experimental Results}

Also, due to space limitations, only part of the experimental results are included in the main paper. In this section, we list more supplementary experimental results. We visualize the clean ACC and the backdoor ASR in the parameter spaces, and ACC/ASR with different reserve ratios under multiple backdoor attacks on the SST-2 sentiment classification dataset and the QNLI sentence-pair classification dataset. Results on sentence based attacks on SST-2 are reported in Fig.~\ref{fig:2}; results on sentence based attacks on QNLI are reported in Fig.~\ref{fig:4}; results on word based attacks on SST-2 are reported in Fig.~\ref{fig:1}; and results on word based attacks on QNLI are reported in Fig.~\ref{fig:3}.

In most cases, there exists an area with a high clean ACC and a low backdoor ASR between the pre-trained BERT parameter and the backdoored parameter in the parameter space, which is a good area for mitigating backdoors. Under these cases, the backdoor ASR will drop when $\rho$ is small, and backdoors can be mitigated. Only a few cases are medium or difficult, where the backdoor ASR is always high, and backdoors are hard to mitigate.

\clearpage
\begin{figure*}[!h]
\centering
\subcaptionbox{Loss Visualization, Trigger Word (SST-2).}{\includegraphics[height=1.3 in,width=0.28\linewidth]{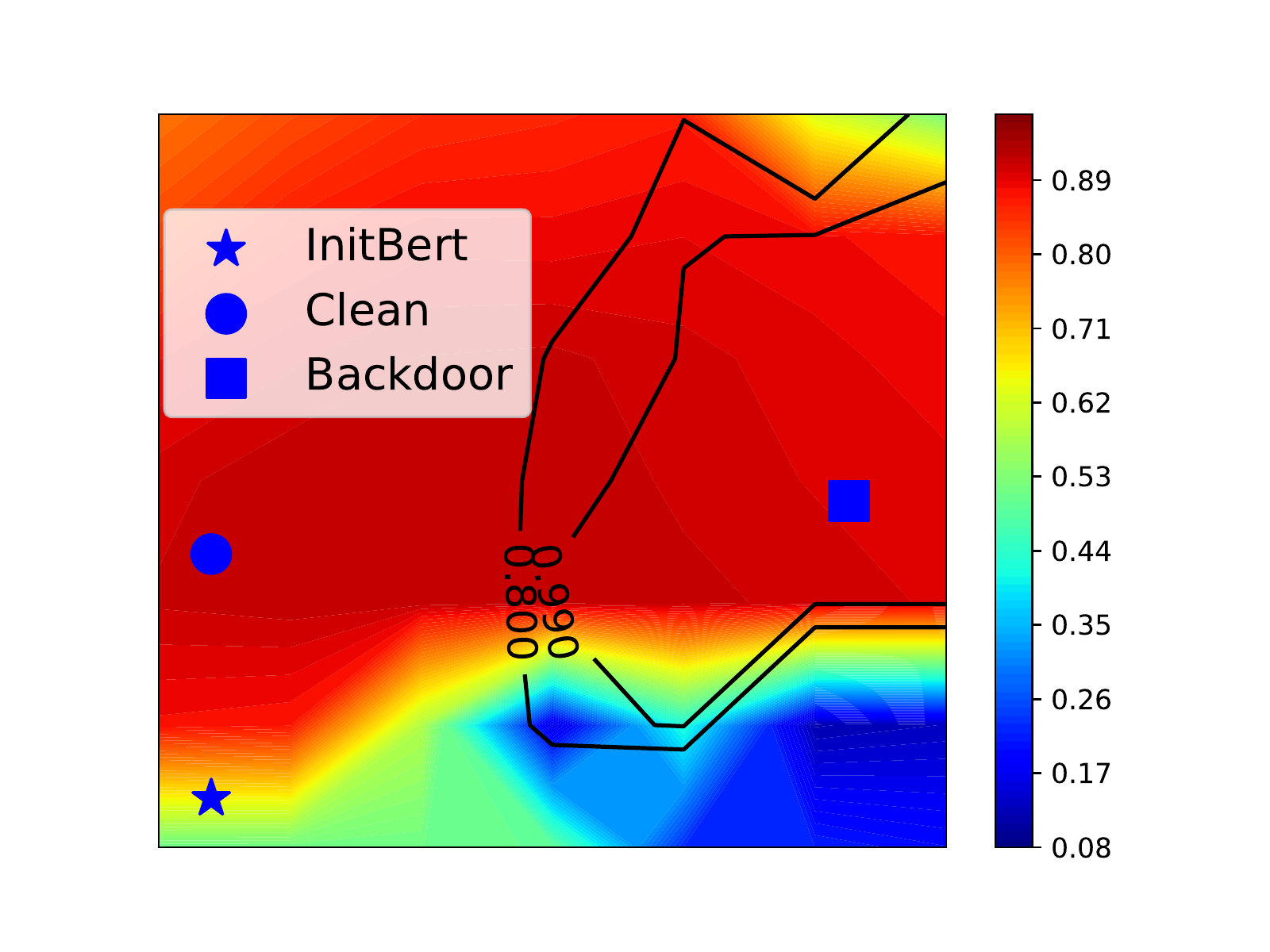}}
\hfil
\subcaptionbox{ACC/ASR (w/o E-PUR), Trigger Word (SST-2).}{\includegraphics[height=1.3 in,width=0.28\linewidth]{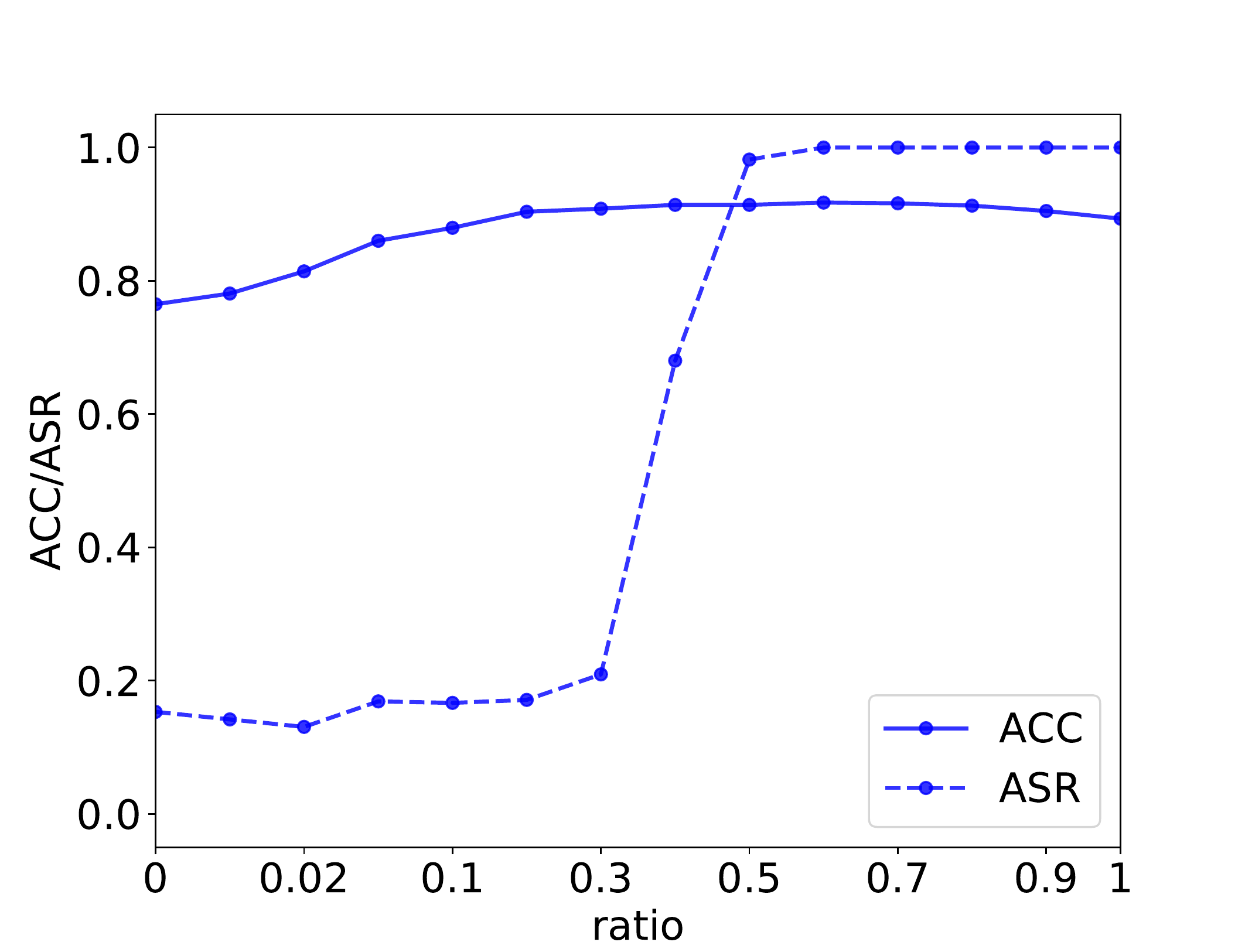}}
\hfil
\subcaptionbox{ACC/ASR (w/ E-PUR), Trigger Word (SST-2).}{\includegraphics[height=1.3 in,width=0.28\linewidth]{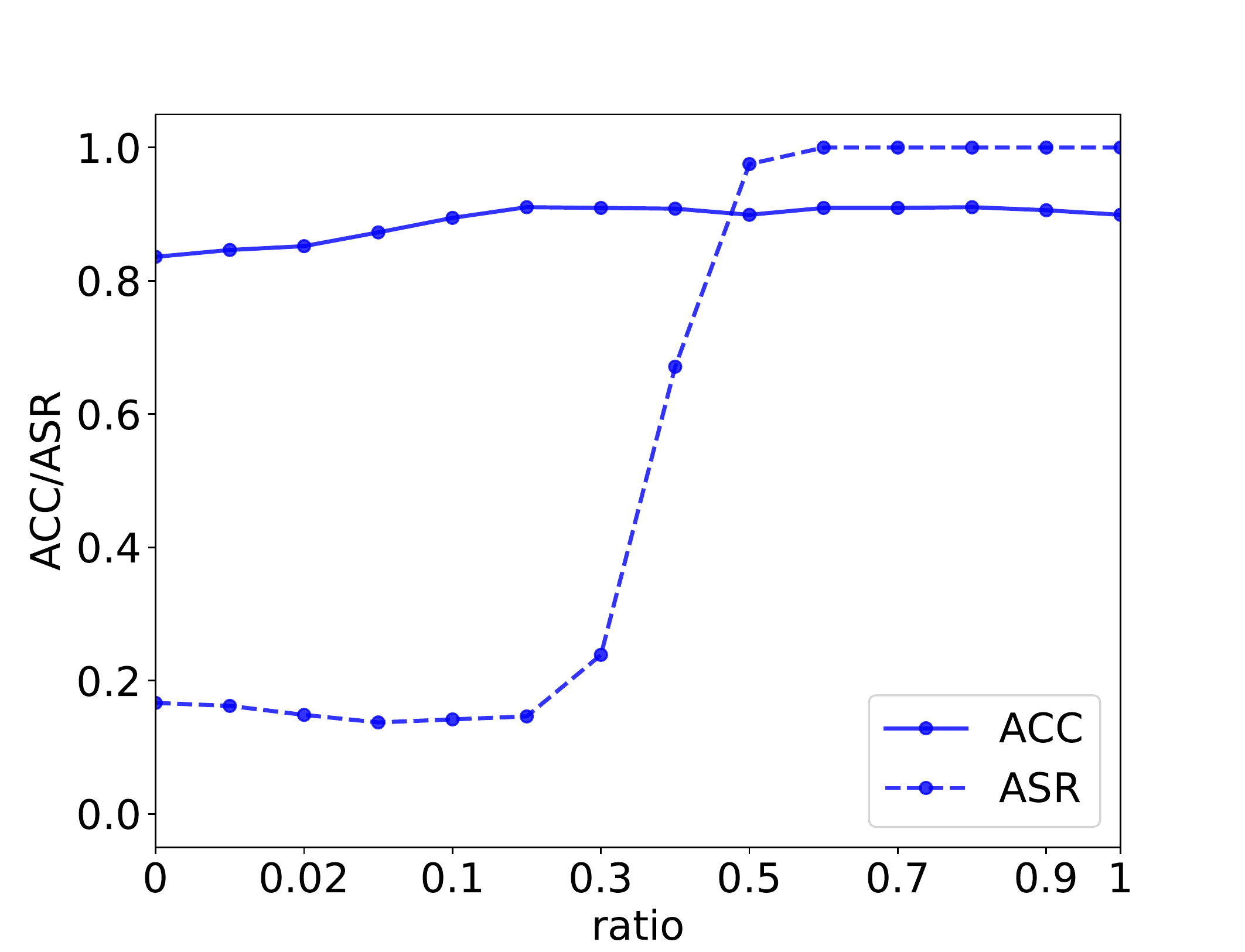}}
\hfil
\subcaptionbox{Loss Visualization, Trigger Word (Scratch) (SST-2).}{\includegraphics[height=1.3 in,width=0.28\linewidth]{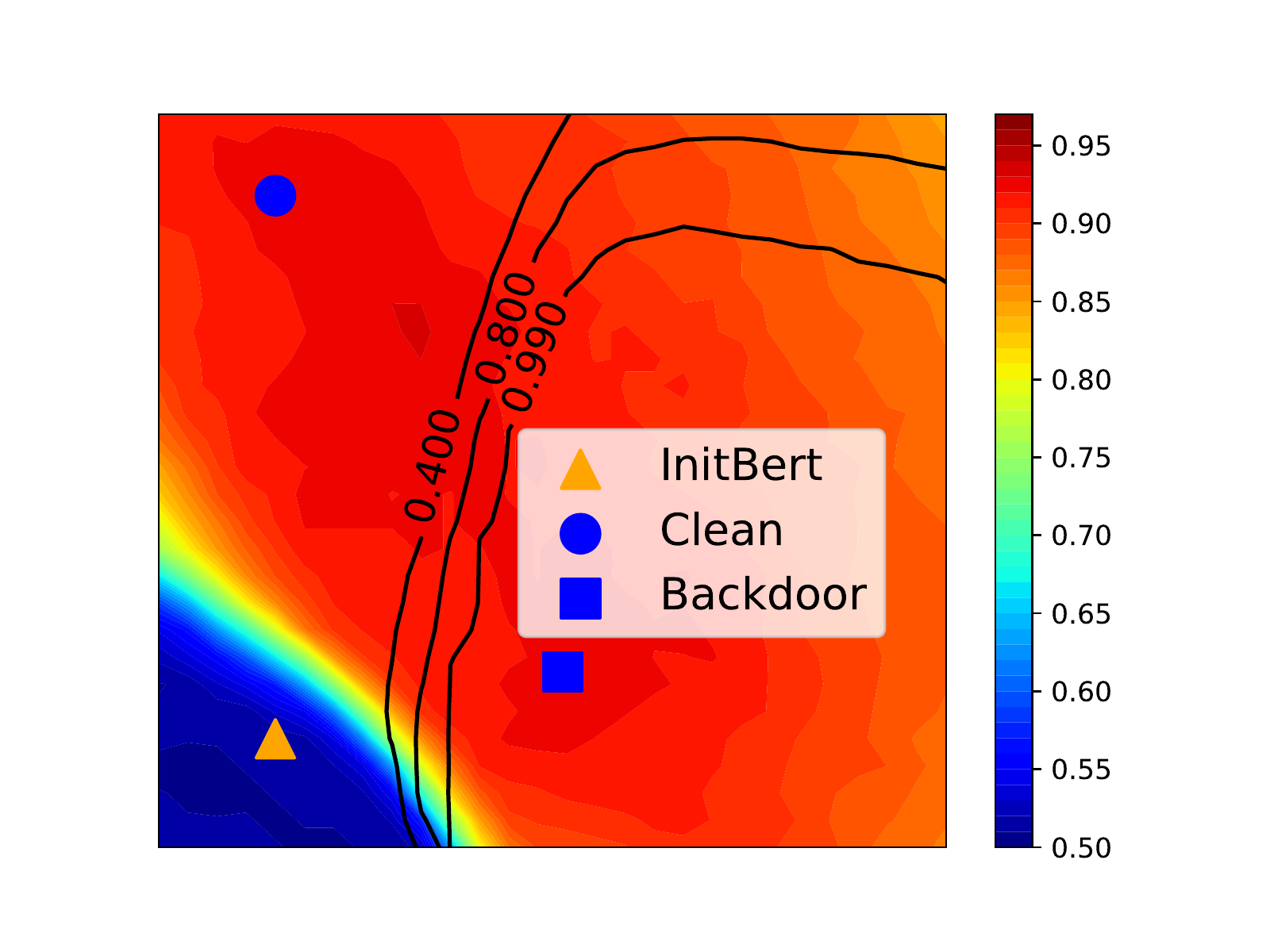}}
\hfil
\subcaptionbox{ACC/ASR (w/o E-PUR), Trigger Word (Scratch) (SST-2).}{\includegraphics[height=1.3 in,width=0.28\linewidth]{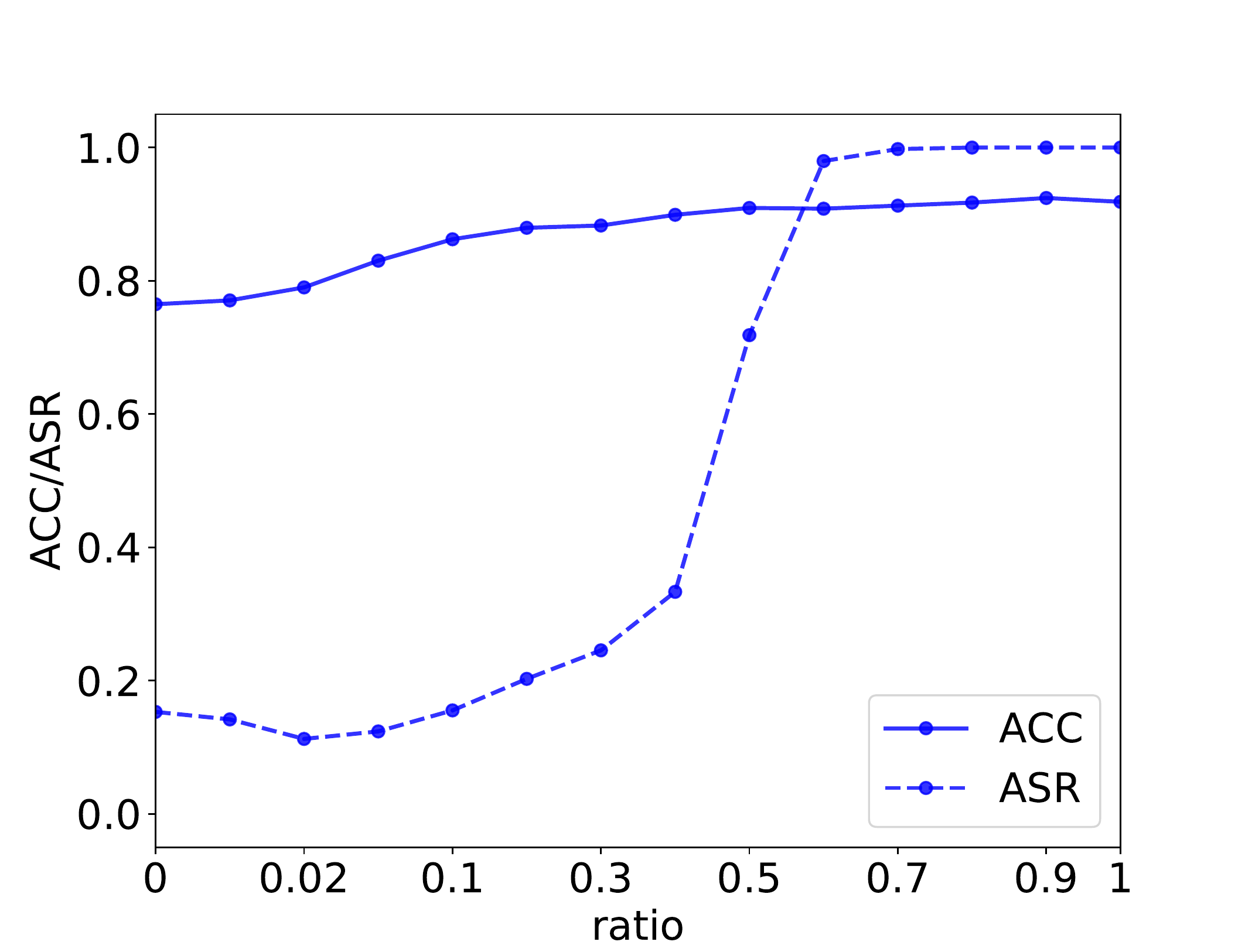}}
\hfil
\subcaptionbox{ACC/ASR (w/ E-PUR), Trigger Word (Scratch) (SST-2).}{\includegraphics[height=1.3 in,width=0.28\linewidth]{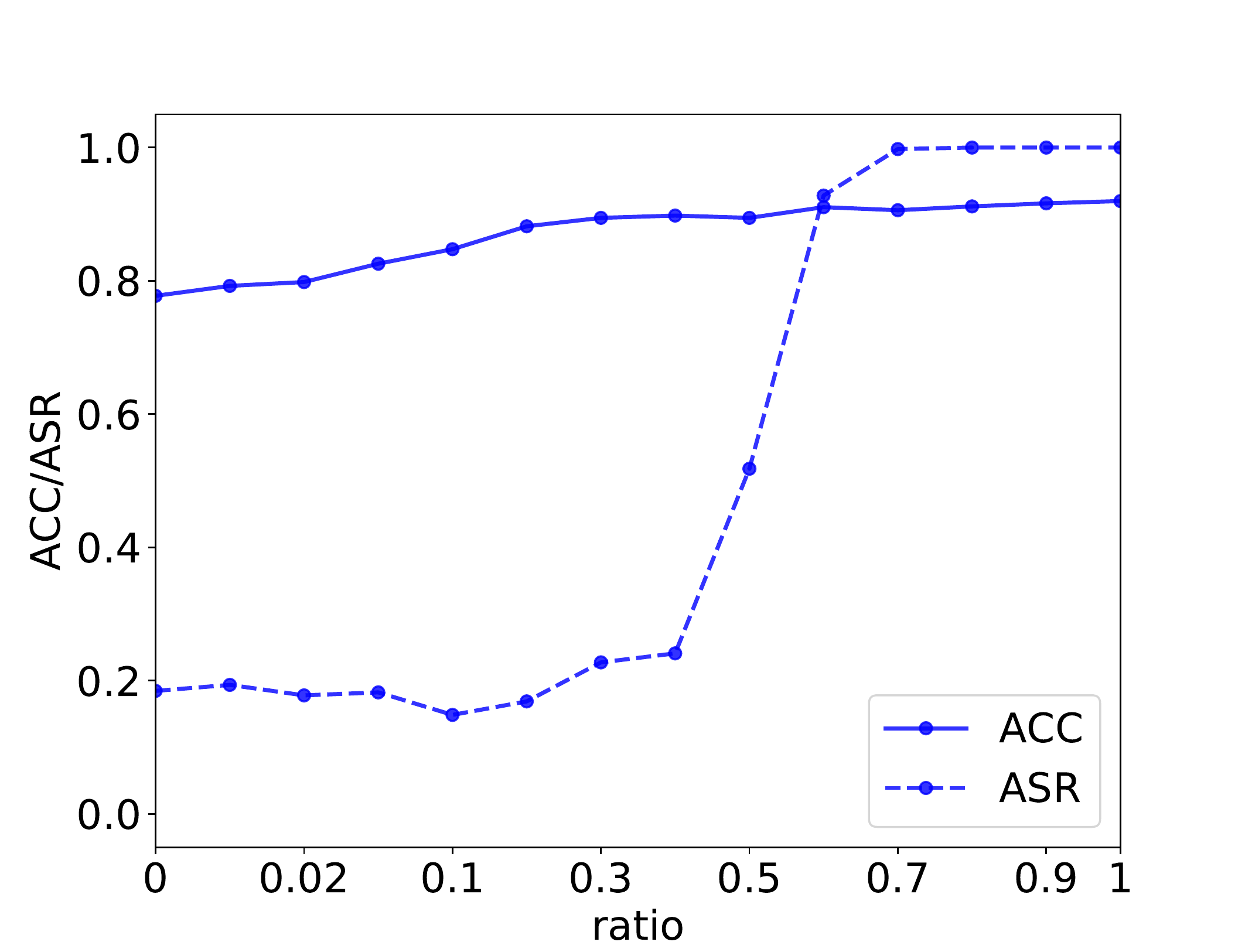}}
\hfil
\subcaptionbox{Loss Visualization, Trigger Word+EP (SST-2).}{\includegraphics[height=1.3 in,width=0.28\linewidth]{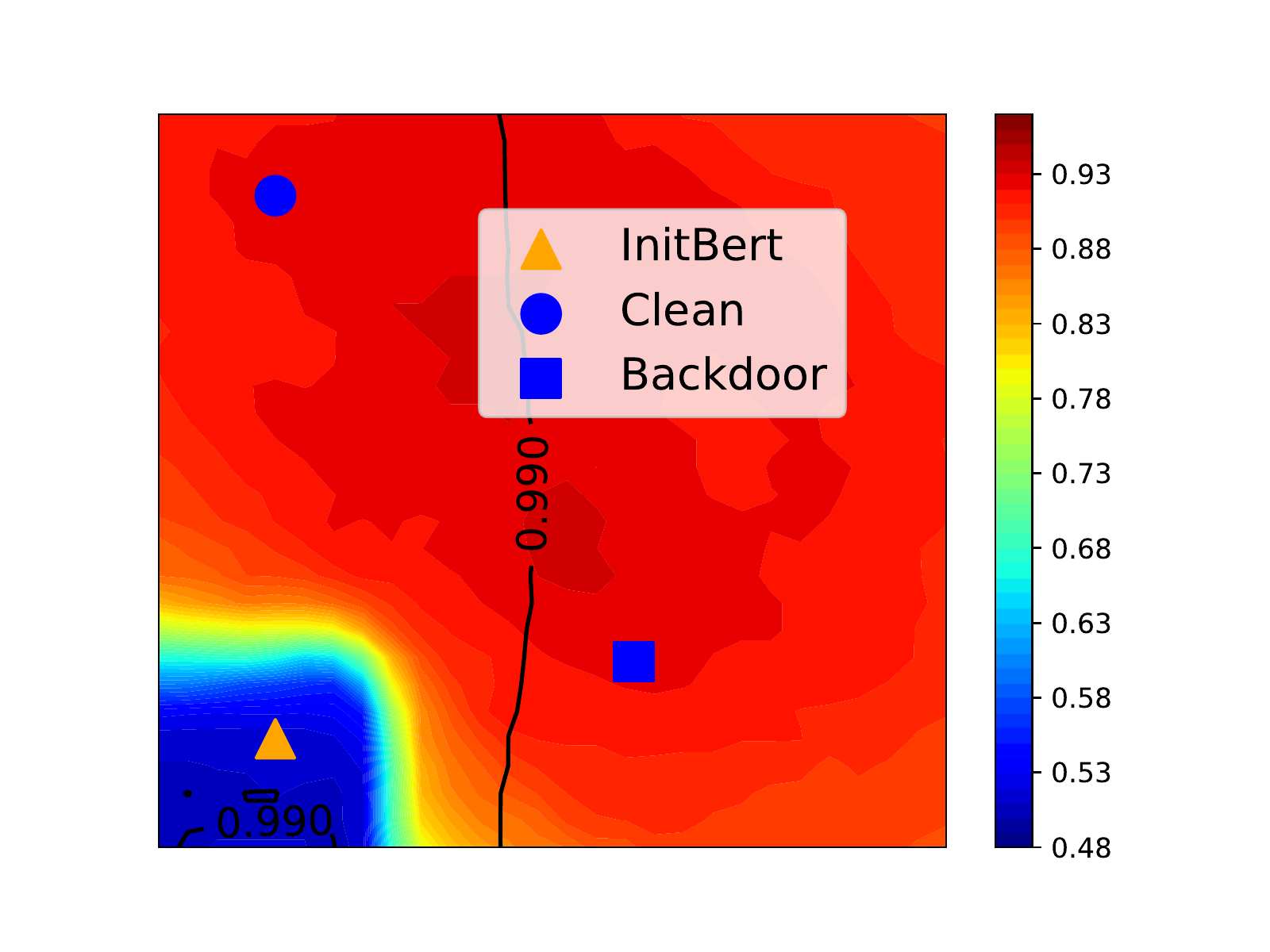}}
\hfil
\subcaptionbox{ACC/ASR (w/o E-PUR), Trigger Word+EP (SST-2).}{\includegraphics[height=1.3 in,width=0.28\linewidth]{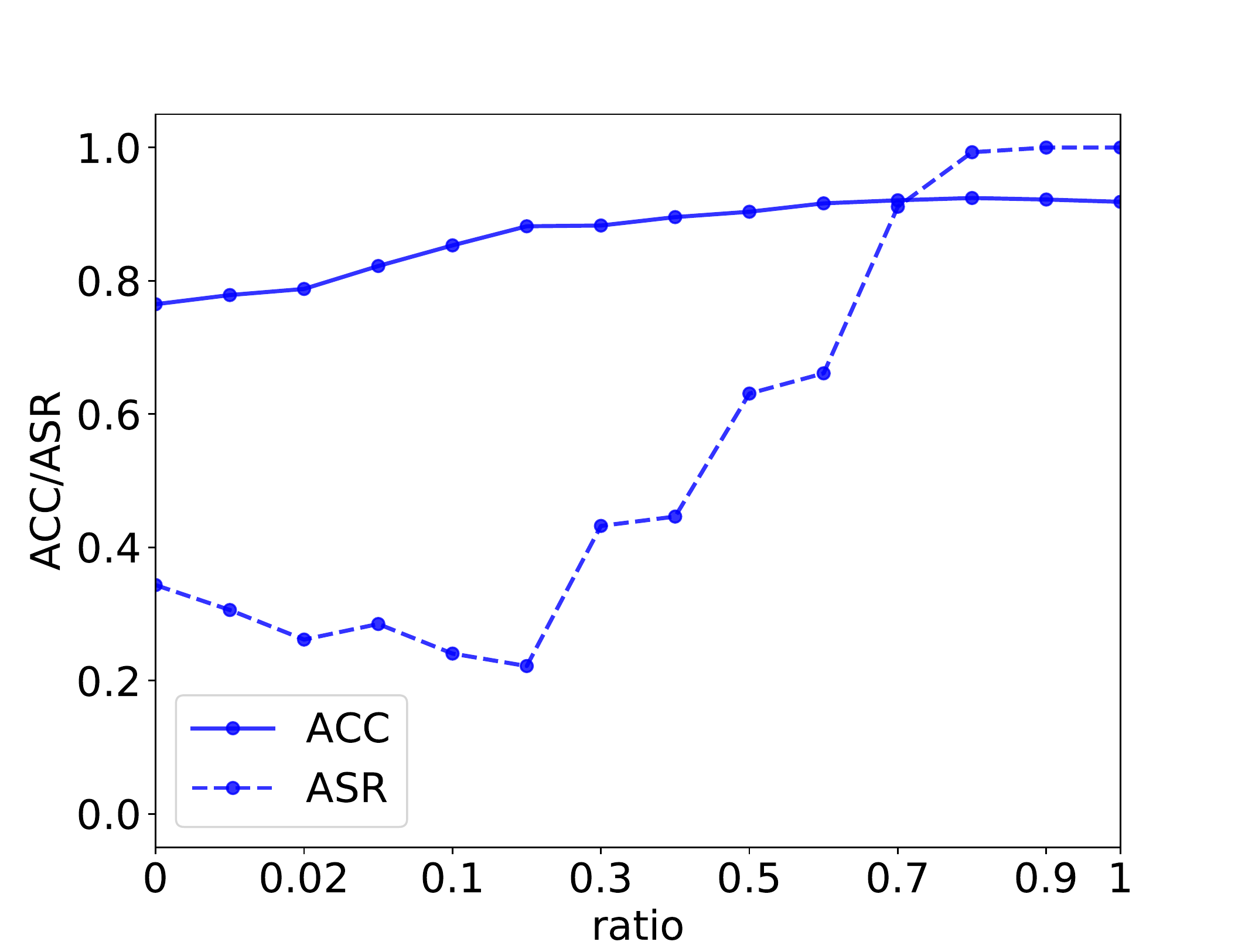}}
\hfil
\subcaptionbox{ACC/ASR (w/ E-PUR), Trigger Word+EP (SST-2).}{\includegraphics[height=1.3 in,width=0.28\linewidth]{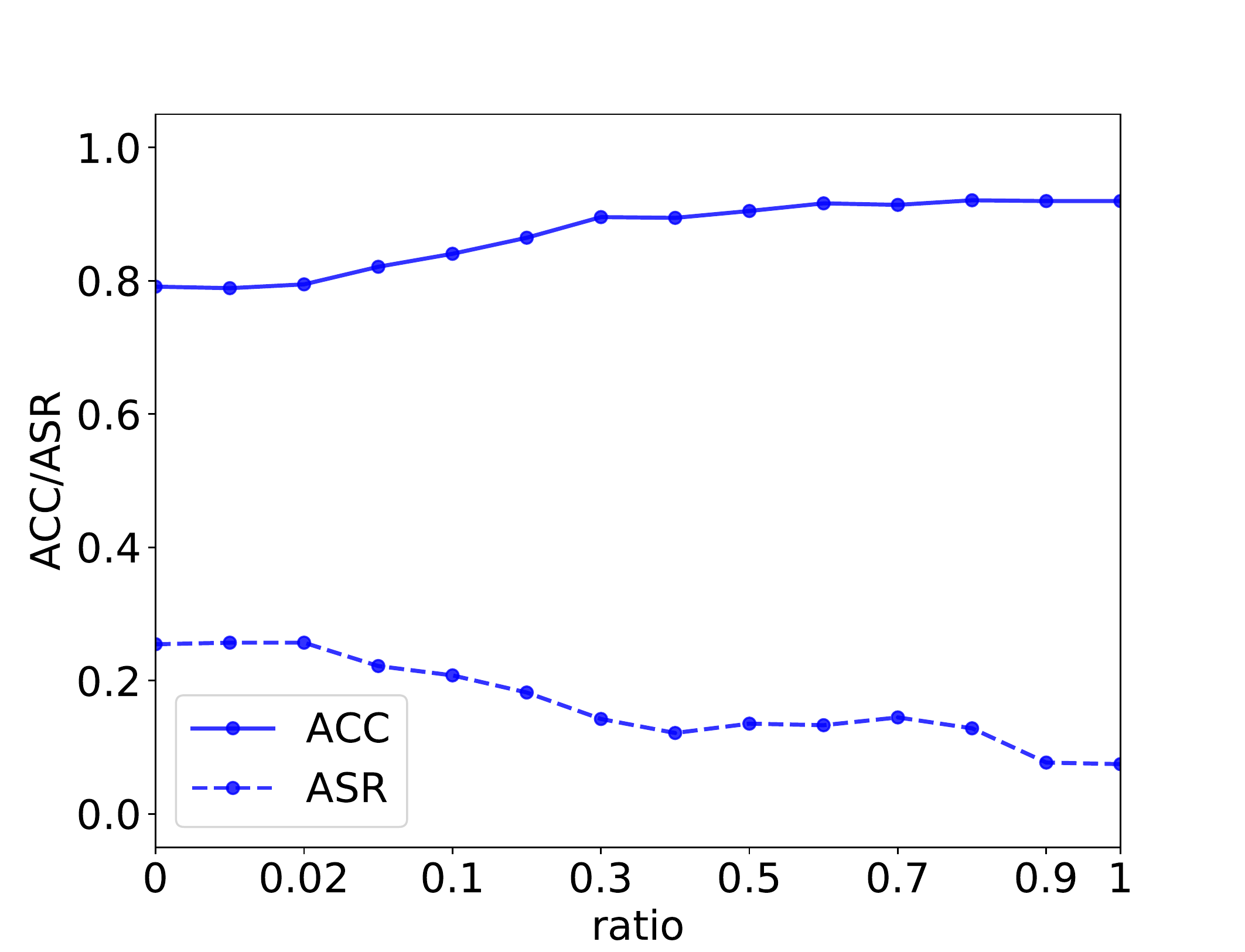}}
\hfil
\subcaptionbox{Loss Visualization, Trigger Word+ES (SST-2).}{\includegraphics[height=1.3 in,width=0.28\linewidth]{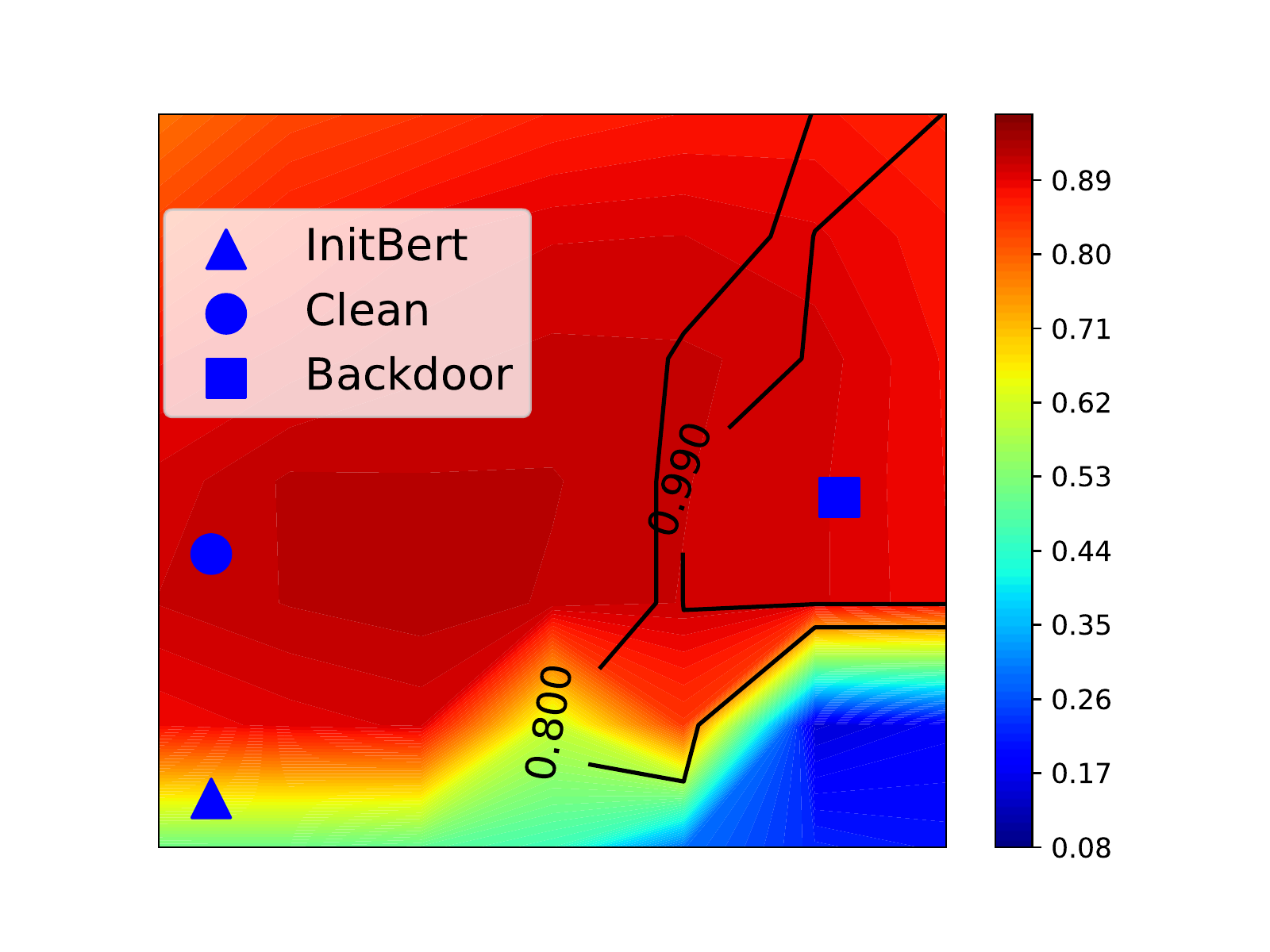}}
\hfil
\subcaptionbox{ACC/ASR (w/o E-PUR), Trigger Word+ES (SST-2).}{\includegraphics[height=1.3 in,width=0.28\linewidth]{fig/appendix/SST-2_NoES-finetune-plot.pdf}}
\hfil
\subcaptionbox{ACC/ASR (w/ E-PUR), Trigger Word (SST-2).}{\includegraphics[height=1.3 in,width=0.28\linewidth]{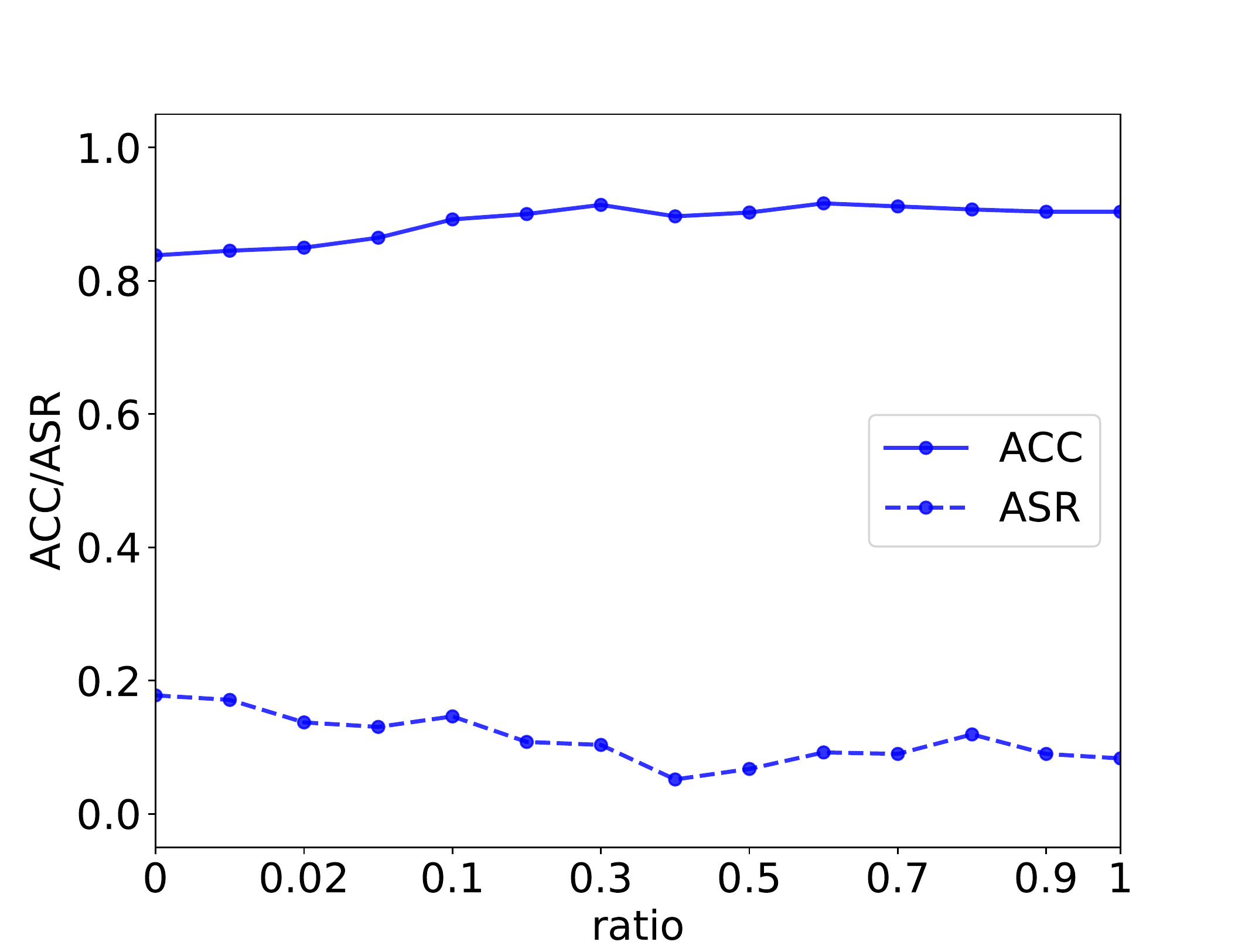}}
\hfil
\subcaptionbox{Loss Visualization, Trigger Word+ES (Scratch) (SST-2).}{\includegraphics[height=1.3 in,width=0.28\linewidth]{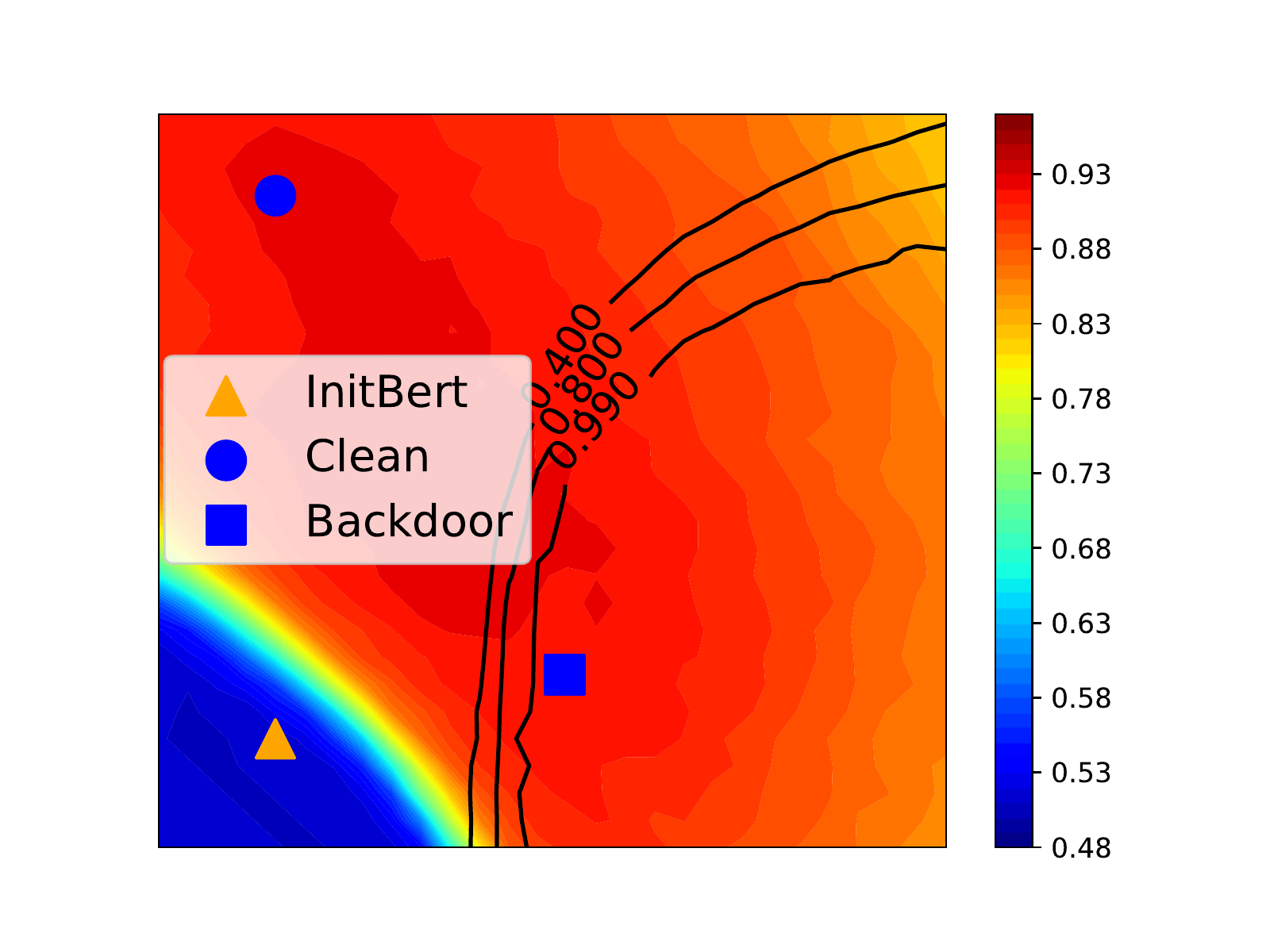}}
\hfil
\subcaptionbox{ACC/ASR (w/o E-PUR), Trigger Word+ES (Scratch) (SST-2).}{\includegraphics[height=1.3 in,width=0.28\linewidth]{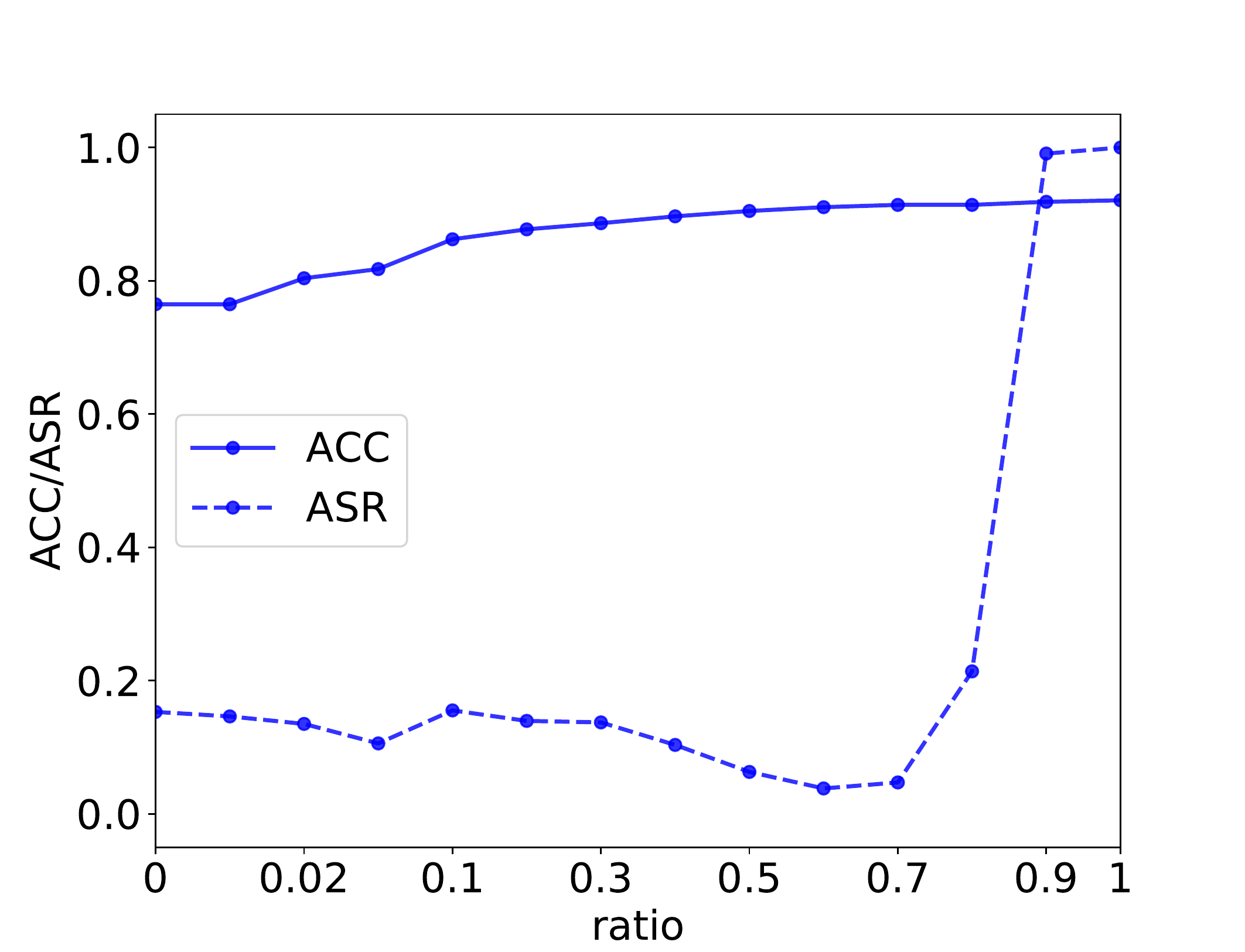}}
\hfil
\subcaptionbox{ACC/ASR (w/ E-PUR), Trigger Word+ES (Scratch) (SST-2).}{\includegraphics[height=1.3 in,width=0.28\linewidth]{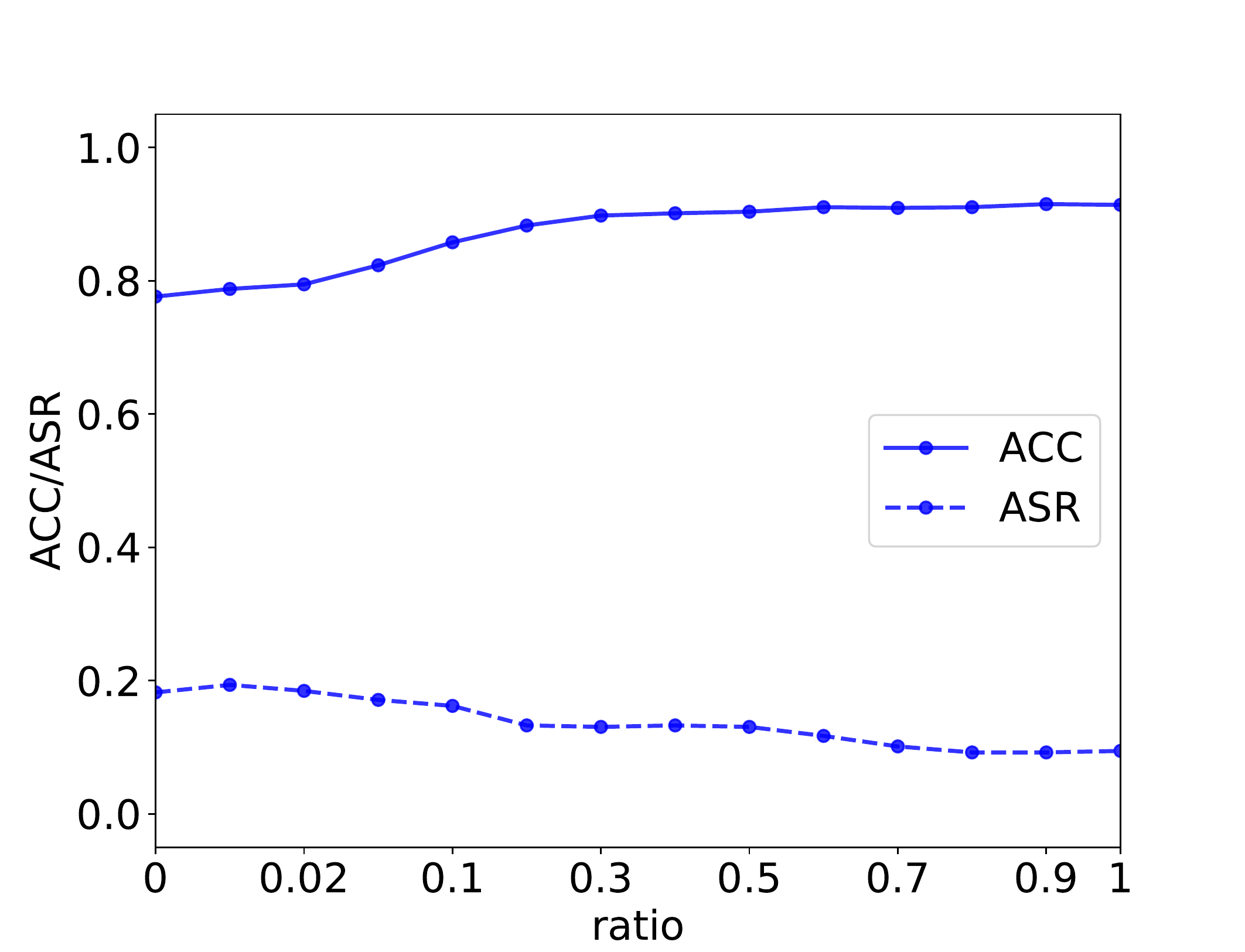}}
\vskip -0.1 in
\caption{Visualization of the clean ACC and the backdoor ASR in the parameter spaces, and ACC/ASR with different reserve ratios under multiple trigger word based backdoor attacks on the SST-2 sentiment classification.}
\vskip -0.15 in
\label{fig:1}
\end{figure*}

\begin{figure*}[!h]
\centering
\vskip 0.3 in
\subcaptionbox{Loss Visualization, Trigger Word (QNLI).}{\includegraphics[height=1.6 in,width=0.32\linewidth]{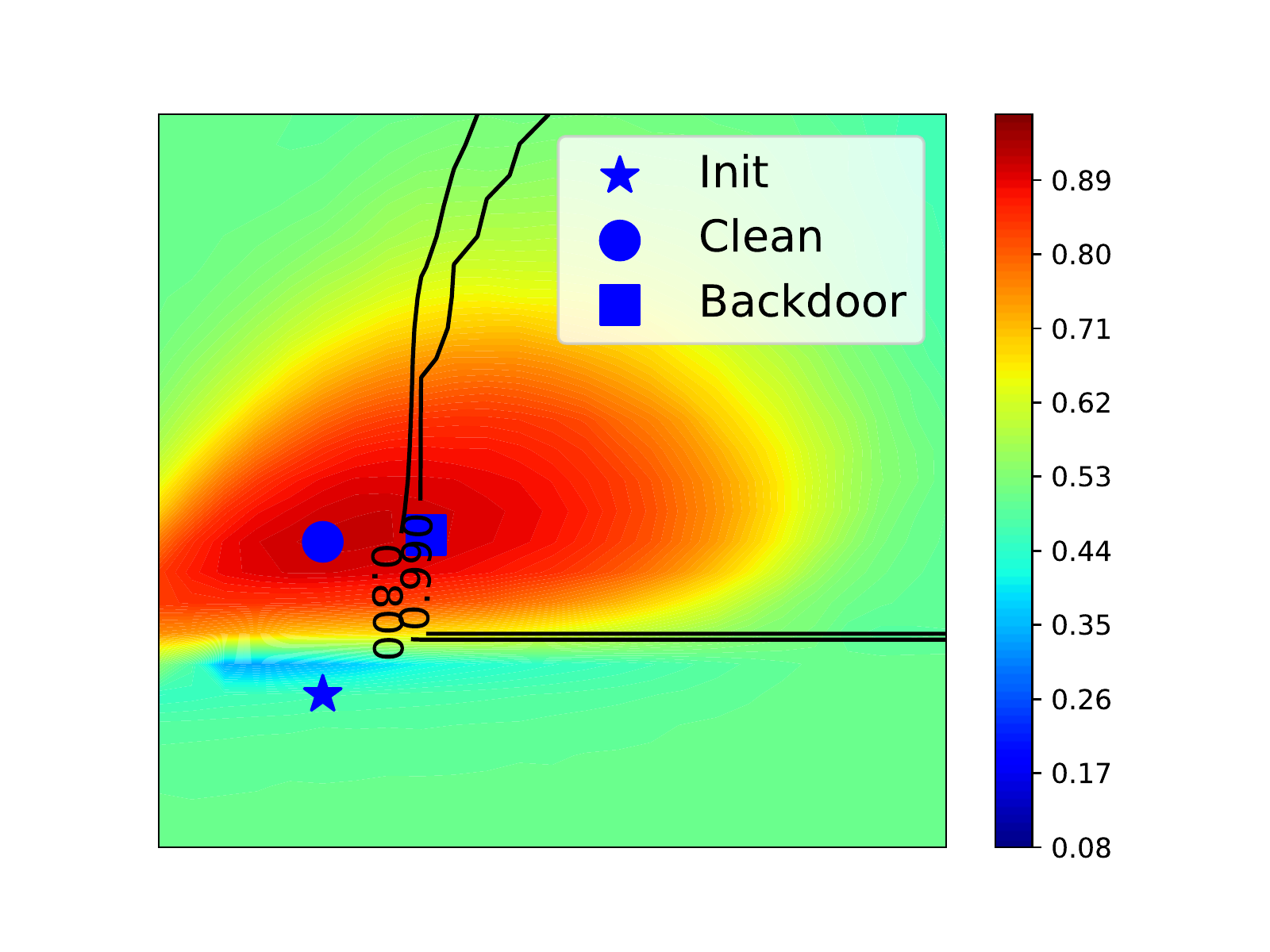}}
\hfil
\subcaptionbox{ACC/ASR (w/o E-PUR), Trigger Word (QNLI).}{\includegraphics[height=1.6 in,width=0.32\linewidth]{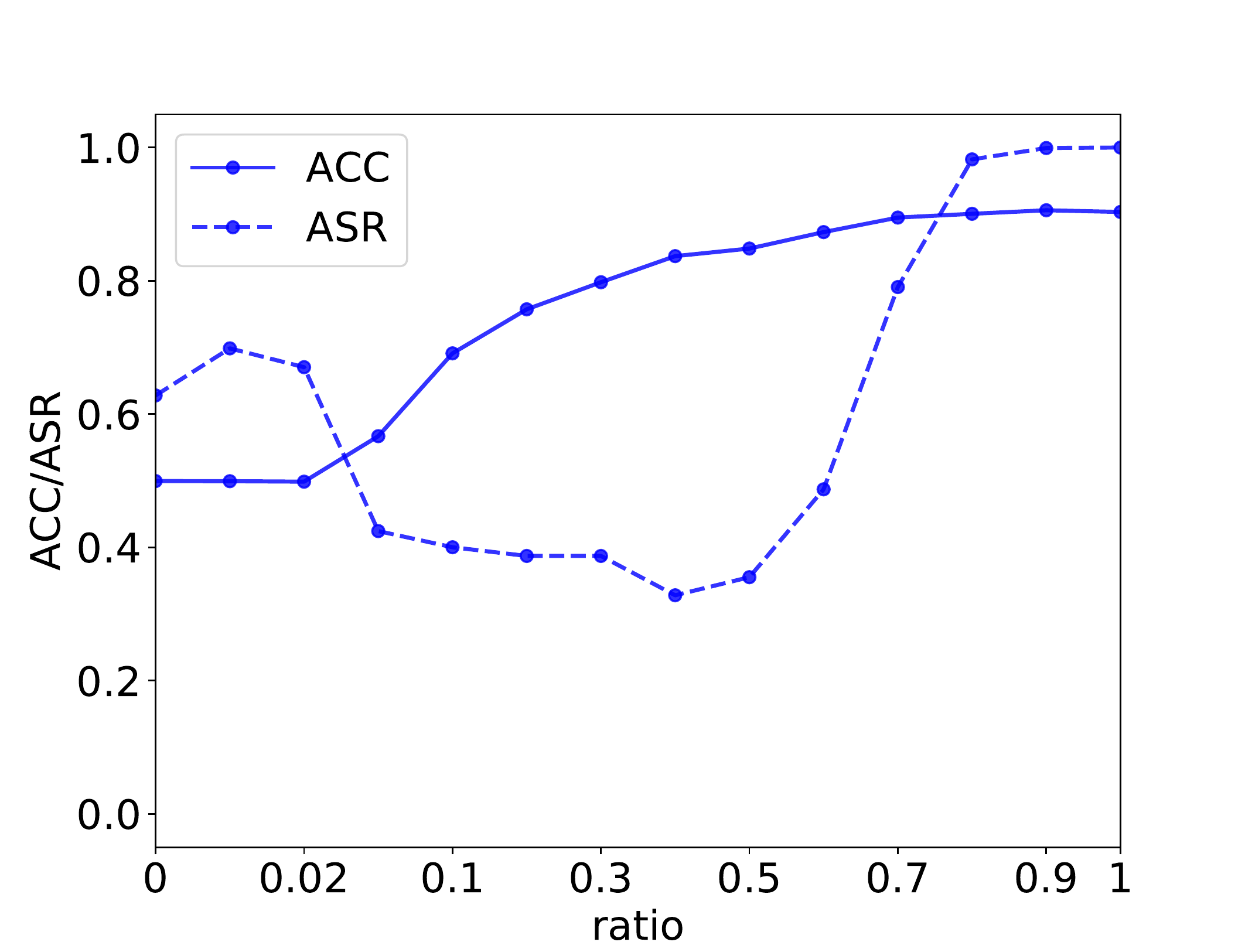}}
\hfil
\subcaptionbox{ACC/ASR (w/ E-PUR), Trigger Word (QNLI).}{\includegraphics[height=1.6 in,width=0.32\linewidth]{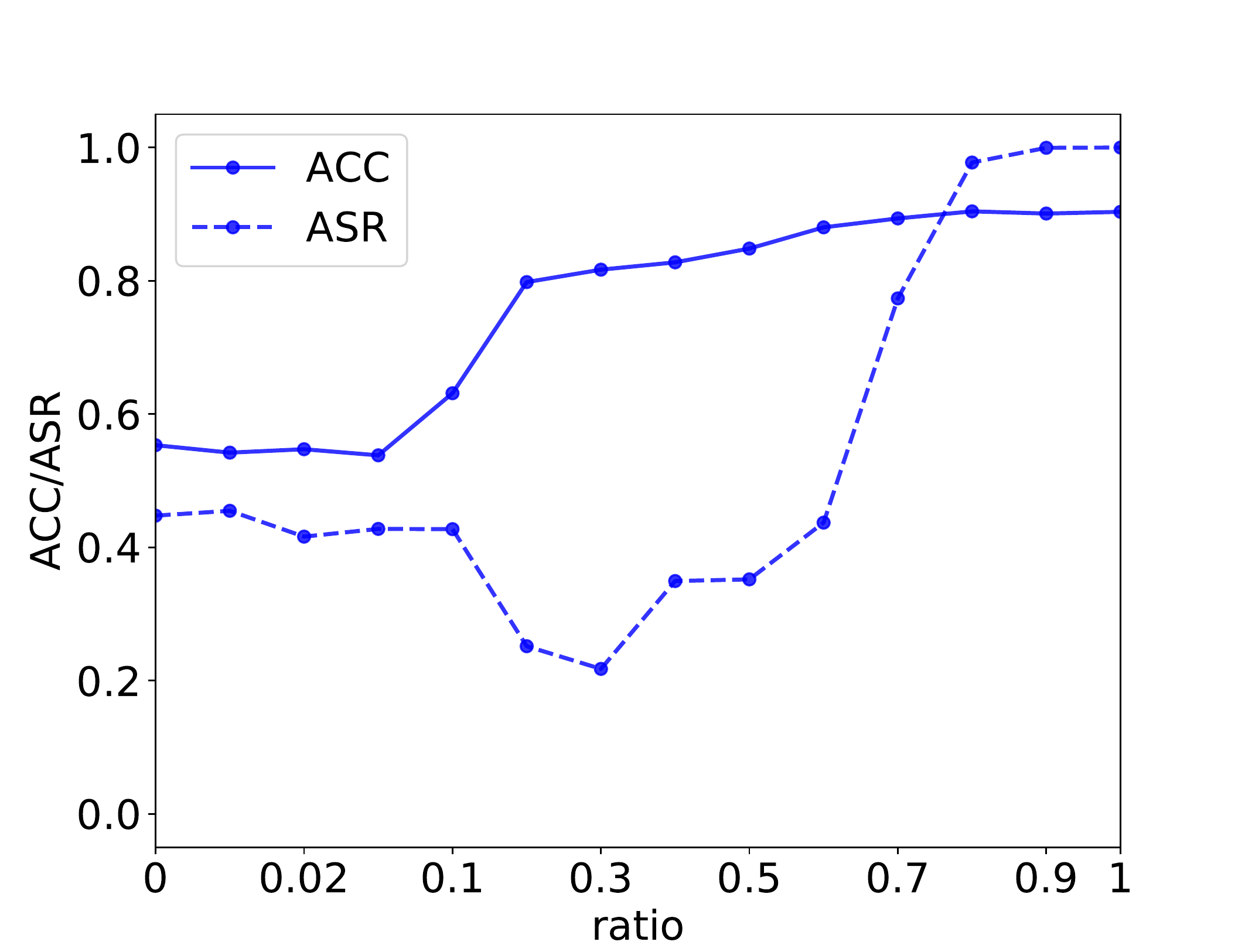}}
\hfil
\vskip 0.3 in
\subcaptionbox{Loss Visualization, Trigger Word (Scratch) (QNLI).}{\includegraphics[height=1.6 in,width=0.32\linewidth]{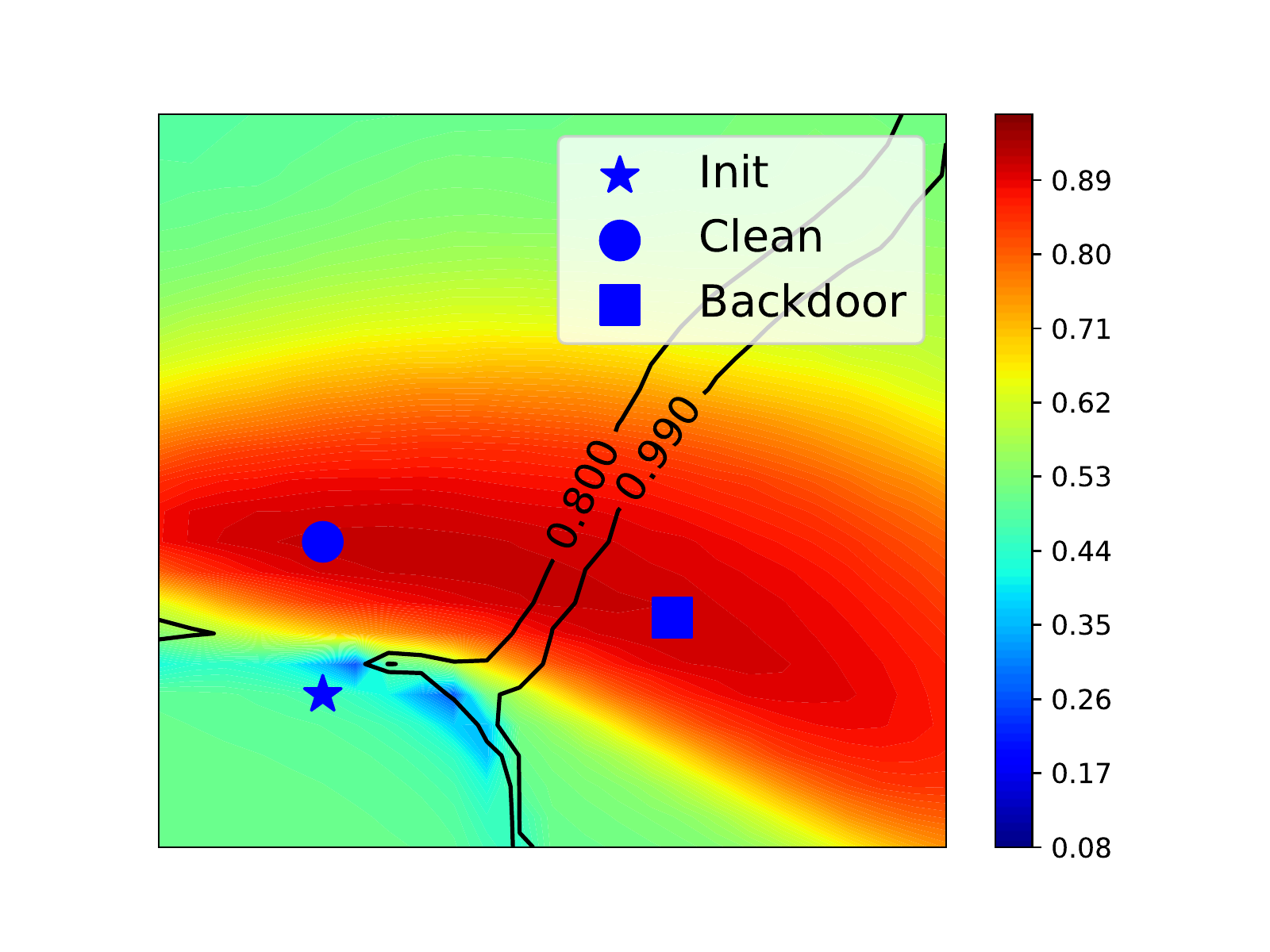}}
\hfil
\subcaptionbox{ACC/ASR (w/o E-PUR), Trigger Word (Scratch) (QNLI).}{\includegraphics[height=1.6 in,width=0.32\linewidth]{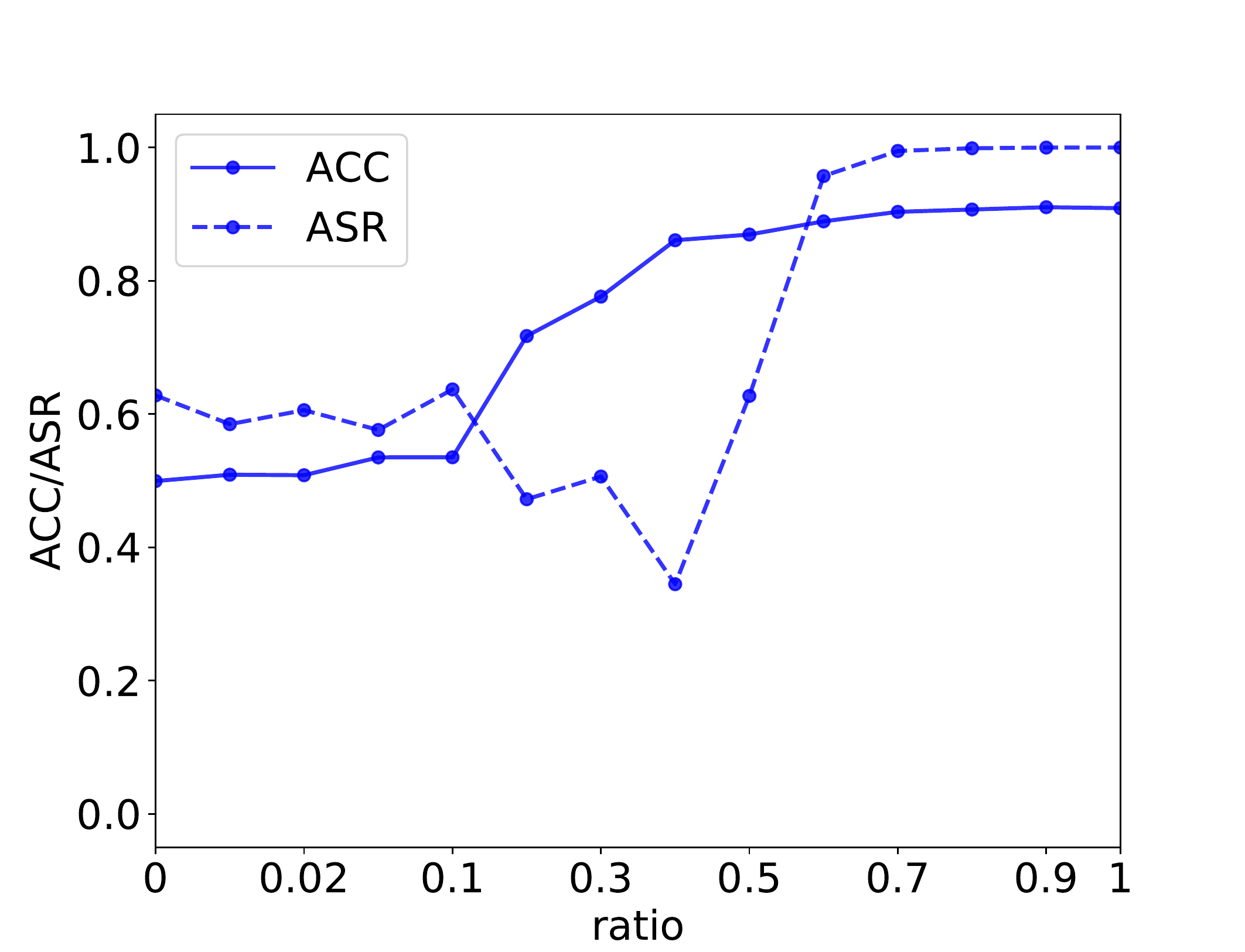}}
\hfil
\subcaptionbox{ACC/ASR (w/ E-PUR), Trigger Word (Scratch) (QNLI).}{\includegraphics[height=1.6 in,width=0.32\linewidth]{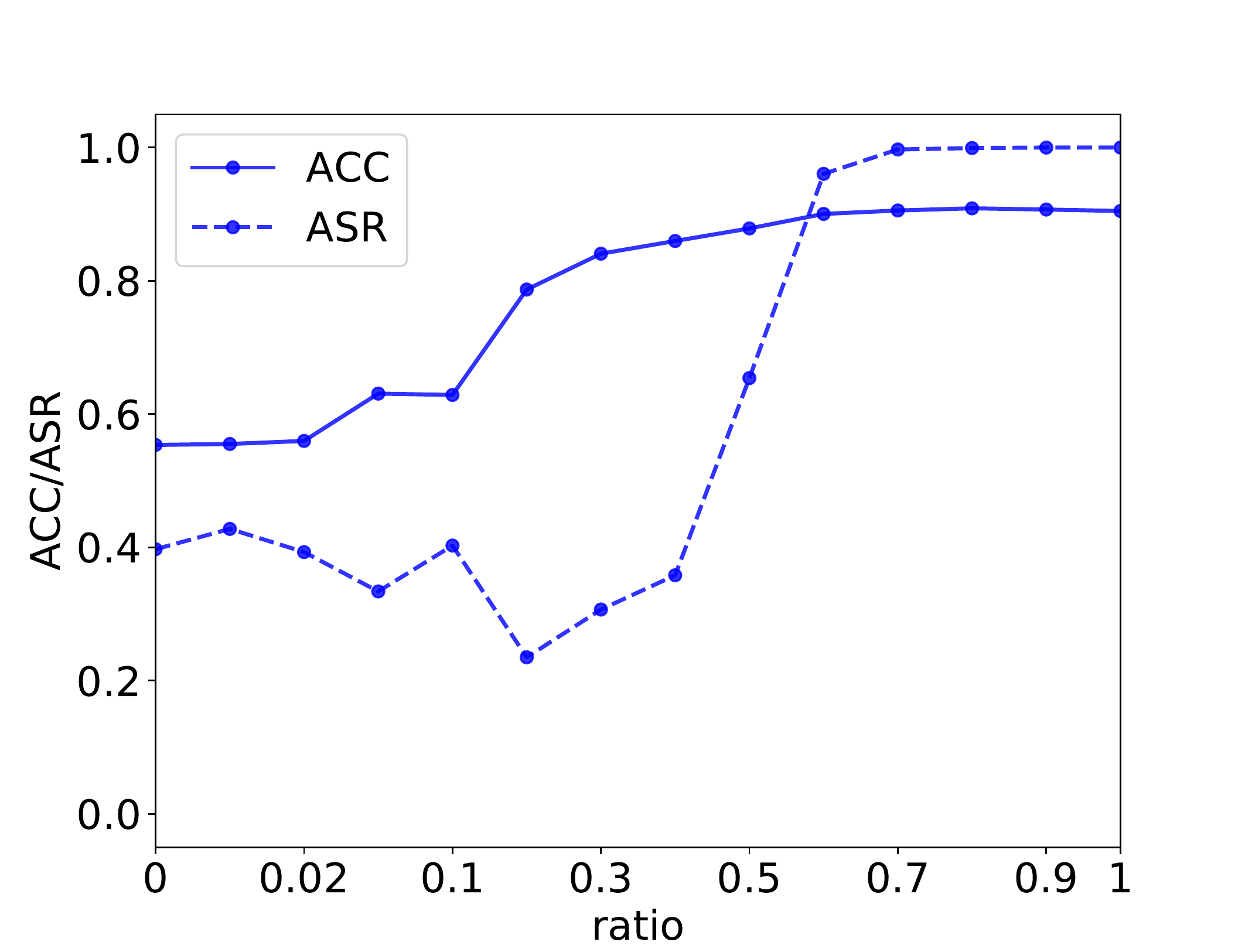}}
\hfil
\vskip 0.3 in
\subcaptionbox{Loss Visualization, Trigger Word+EP (QNLI).}{\includegraphics[height=1.6 in,width=0.32\linewidth]{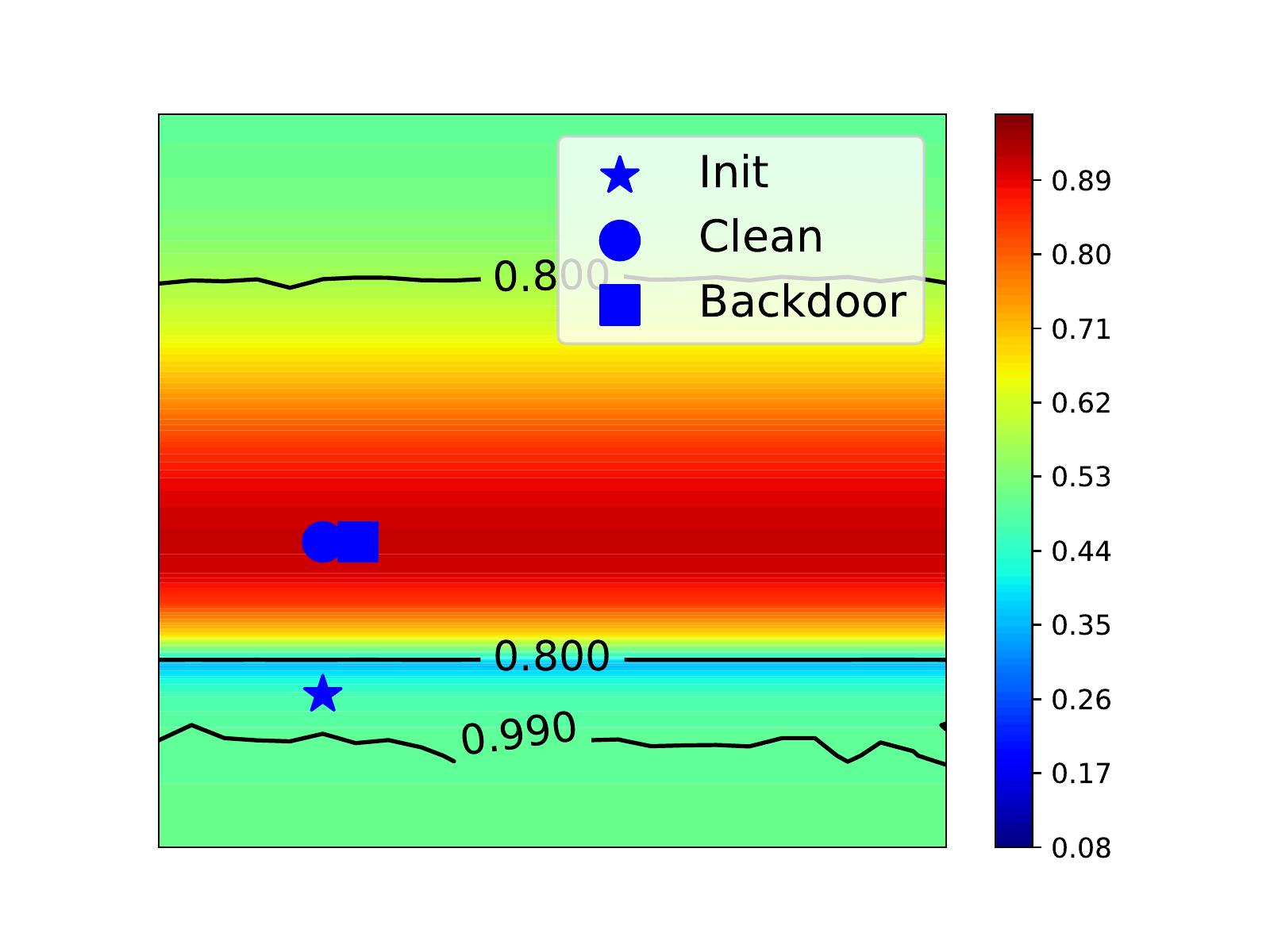}}
\hfil
\subcaptionbox{ACC/ASR (w/o E-PUR), Trigger Word+EP (QNLI).}{\includegraphics[height=1.6 in,width=0.32\linewidth]{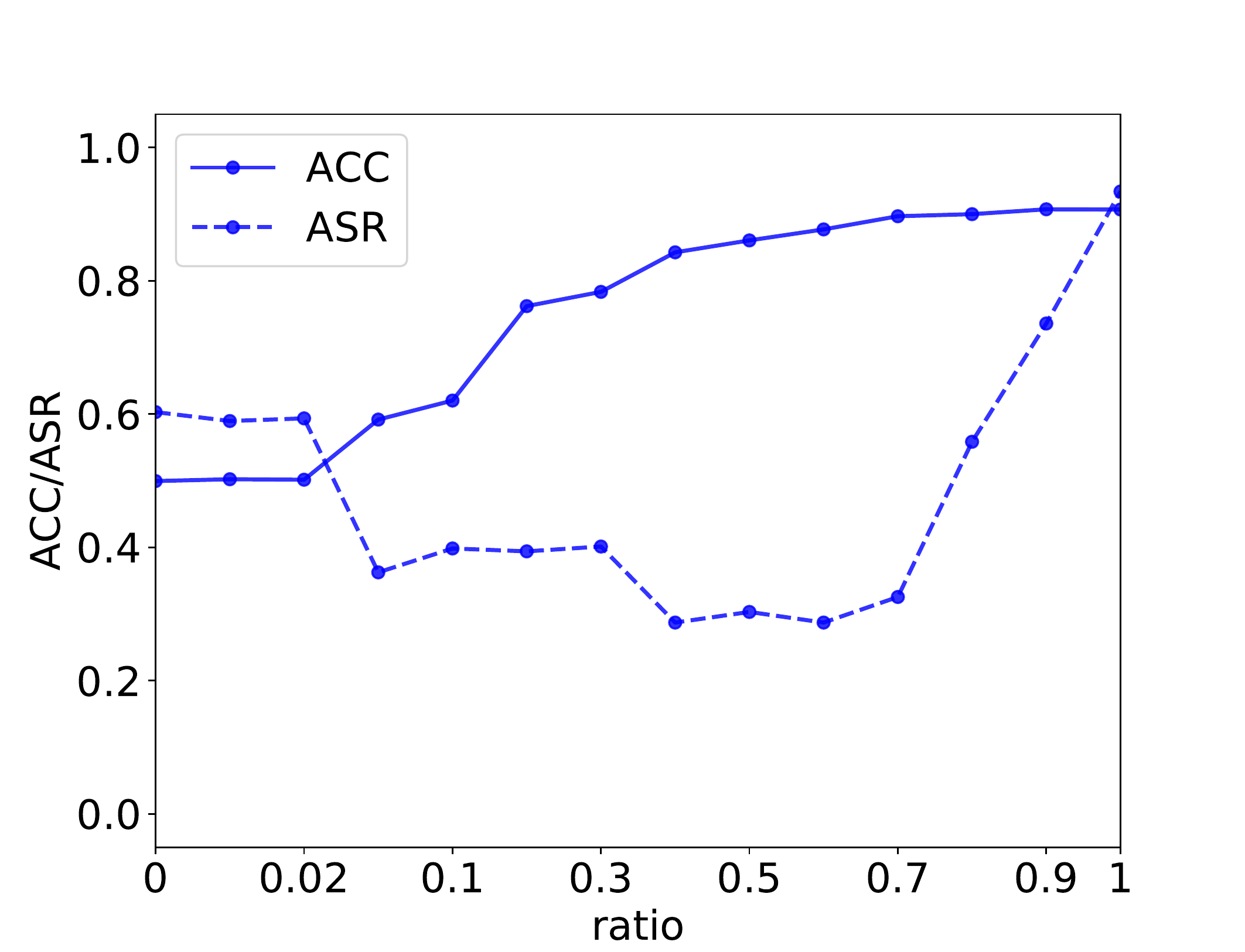}}
\hfil
\subcaptionbox{ACC/ASR (w/ E-PUR), Trigger Word+EP (QNLI).}{\includegraphics[height=1.6 in,width=0.32\linewidth]{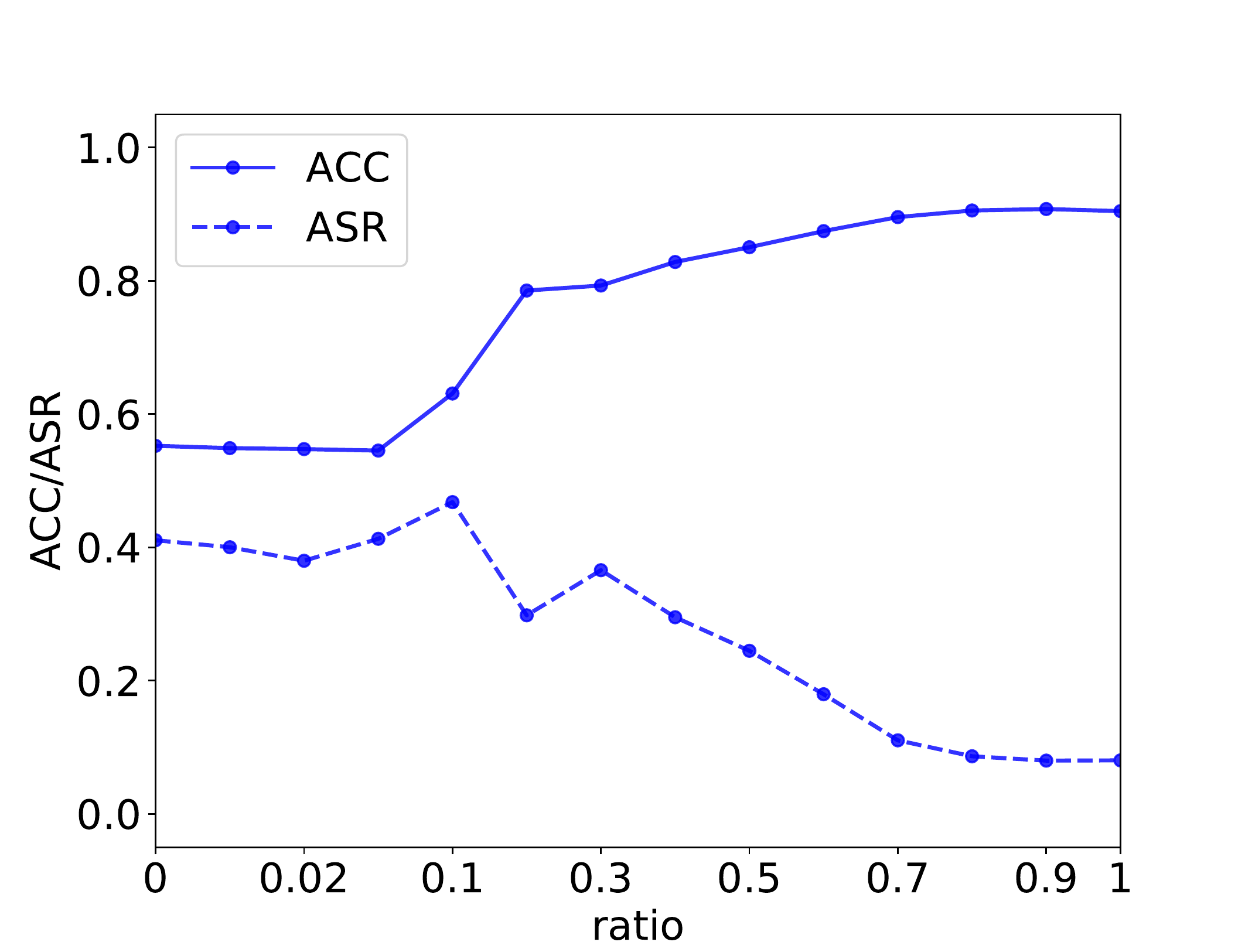}}
\hfil
\vskip 0.3 in
\vskip -0.1 in
\caption{Visualization of the clean ACC and the backdoor ASR in the parameter spaces, and ACC/ASR with different reserve ratios under multiple trigger word based backdoor attacks on the QNLI sentence-pair classification.}
\vskip -0.15 in
\label{fig:3}
\end{figure*}

\end{document}